\newcommand{\norm}[1]{\|#1\|}
\newcommand{\xplus}{\bs{X}_{t+1}}
\newcommand{\xf}[1][t]{\left[ \bs{X}_{#1} \right]_1}
\newcommand{\x}[1][t]{\bs{X}_{#1}}
\newcommand{\z}[1][t]{Z_{#1}}
\newcommand{\suite}[1]{({#1}_t)_{t \in \N}}
\newcommand{\suitet}[1]{(#1)_{t \in \N}}
\newcommand{\sig}[1][t]{\sigma_{#1}}
\newcommand{\sigplus}{\sigma_{t+1}}
\newcommand{\yi}{\bs{Y}_{t, i}}
\newcommand{\y}[1]{\bs{Y}_{t, #1}}
\newcommand{\pplus}{\bs{p}_{t+1}}
\newcommand{\p}[1][t]{\bs{p}_{#1}}
\newcommand{\pdim}[1][i]{\left[\pplus\right]_{#1}}
\newcommand{\inv}[1]{\frac{1}{#1}}
\newcommand{\zchain}{\mathcal{Z}}
\newcommand{\randomstep}[1][i]{\bs{\xi}_{t,#1}}
\newcommand{\chosenstep}[1][t]{\bs{\xi}_{#1}^\star}
\newcommand{\chosenstepdistribution}{\bs{\xi}}
\newcommand{\stepsizechangelong}[1][\bs{p}_{t+1}]{\exp\left( \frac{c}{2d_\sigma}\left( \frac{\norm{#1}^2}{n} - 1 \right)\right)}
\newcommand{\stepsizechange}[1][t]{\eta_{#1}^\star}
\newcommand{\hsp}{\hspace{1mm}}
\newcommand{\firstdim}[1]{\left[#1\right]_1}
\newcommand{\fchosenstep}[1][t]{\firstdim{\chosenstep[#1]}}
\newcommand{\pchain}{\bs{\mathcal{P}}}
\newcommand{\fpchain}{[\bs{\mathcal{P}}]_1}
\newcommand{\E}{\mathbb{E}}
\newcommand{\R}{\mathbb{R}}
\newcommand{\NL}{\bs{\mathcal{N}}(\bs{0}, {I_n)}}
\newcommand{\Nlambda}[1][1]{\mathcal{N}_{#1:\lambda}}
\newcommand{\Nmin}[1][\lambda]{\mathcal{N}_{1:{#1}}}
\newcommand{\N}{\mathbb{N}}
\newcommand{\NNN}{\mathcal{N}}
\newcommand{\minchii}{\chi_{1:\lambda}^{\{i\}}}
\newcommand{\minchiisort}{\chi_{1 \lambda}^{[i]}}
\newcommand{\bs}[1]{\boldsymbol{#1}}
\DeclareMathOperator{\argmin}{argmin}
\DeclareMathOperator{\Var}{Var}
\begin{document}

\title{Cumulative Step-size Adaptation on Linear Functions: Technical Report}
\author{Alexandre Chotard\inst{1}, Anne Auger\inst{1} \and
Nikolaus Hansen\inst{1}}
\authorrunning{Alexandre Chotard et al.} 
\tocauthor{Alexandre Chotard, Anne Auger and
Nikolaus Hansen}
\institute{TAO team, INRIA Saclay-Ile-de-France, LRI, Paris-Sud University,
France\\
\email{firstname.lastname@lri.fr}}

\maketitle       

\begin{abstract}
The CSA-ES is an Evolution Strategy with Cumulative Step size Adaptation, where the step size is adapted measuring the length of a so-called cumulative path. The cumulative path is a combination of the previous steps realized by the algorithm, where the importance of each step decreases with time.
This article studies the CSA-ES on composites of strictly increasing functions with affine linear functions through the investigation of its underlying Markov chains. Rigorous results on the change and the variation of the step size are derived with and without cumulation. The step-size diverges geometrically fast in most cases.  Furthermore, the influence of the cumulation parameter is studied.
\end{abstract}

\keywords{CSA, cumulative path, evolution path, evolution strategies, step-size adaptation}

\section{Introduction}
\newcommand{\mc}[2]{\newcommand{#1}{\ensuremath{#2}}}
\mc{\X}{\bs{X}}

Evolution strategies (ESs) are continuous stochastic optimization algorithms searching for the minimum of a real valued function $f:\R^n\to\R$. In the ($1,\lambda$)-ES, in each iteration, $\lambda$ new children are generated from a single parent point $\X\in\R^n$ by adding a random Gaussian vector to the parent,

\[
	\bs{X} \in \R^n \mapsto \bs{X} + \sigma \bs{\mathcal{N}} (\bs{0}, \bs{C}) 
	\enspace.
\]
Here, $\sigma\in \R_{+}^{*}$ is called step-size and $\bs{C}$ is a covariance matrix. The best of the $\lambda$ children, i.e.\ the one with the lowest $f$-value, becomes the parent of the next iteration. 
To achieve reasonably fast convergence, step size and covariance matrix have to be adapted throughout the iterations of the algorithm. In this paper, $\bs{C}$ is the identity and we investigate the so-called Cumulative Step-size Adaptation (CSA), which is used to adapt the step-size in the Covariance Matrix Adaptation Evolution Strategy (CMA-ES) \cite{ostermeier1994nonlocal,cmaesbirth}. In CSA, a cumulative path is introduced, which is a combination of all steps the algorithm has made, where the importance of a step decreases exponentially with time. Arnold and Beyer studied the behavior of CSA on sphere, cigar and ridge functions \cite{arnold2004performance,arnold2010behaviour,arnold2006cumulative,arnold2008evolution} and on dynamical optimization problems where the optimum moves randomly \cite{arnold2002random} or linearly \cite{arnold2006optimum}. Arnold also studied the behaviour of a ($1,\lambda$)-ES on linear functions with linear constraint \cite{arnold2011behaviour}. 

In this paper, we study the behaviour of the $(1,\lambda)$-CSA-ES on composites of strictly increasing functions with affine linear functions, e.g.\ $f:\vec x\mapsto \exp(x_2 - 2)$. Because the CSA-ES is invariant under translation, under change of an orthonormal basis (rotation and reflection), and under strictly increasing transformations of the $f$-value, we investigate, w.l.o.g., $f: \vec x \mapsto x_1$. Linear functions model the situation when the current parent is far (here infinitely far) from the optimum of a smooth function. To be far from the optimum means that the distance to the optimum is large, \emph{relative to the step-size $\sigma$}. This situation is undesirable and threatens premature convergence. 
The situation should be handled well, by increasing step widths, by any search algorithm (and is not handled well by the $(1,2)$-$\sigma$SA-ES \cite{hansen2006ecj}). Solving linear functions is also very useful to prove convergence independently of the initial state on more general function classes.

In Section~\ref{sec:CSA} we introduce the $(1, \lambda)$-CSA-ES, and some of its characteristics on linear functions. In Sections~\ref{sec:withoutcumulation} and \ref{sec:withcumulation} we study $\ln(\sigma_t)$ without and with cumulation, respectively. Section \ref{sec:var} presents an analysis of the variance of the logarithm of the step-size and in Section~\ref{sec:conclusion} we summarize our results.

\mc{\xx}{\bs{x}}
\mc{\NN}{\mathcal{N}(0,1)}
\paragraph*{Notations}
In this paper, we denote $t$ the iteration or time index, $n$ the search space dimension, \NN\ a standard normal distribution, i.e. a normal distribution with mean zero and standard deviation 1. The multivariate normal distribution with mean vector zero and covariance matrix identity will be denoted $\NL$, the $i^{\rm th}$ order statistic of $\lambda$ standard normal distributions $\Nlambda[i]$, and $\Psi_{i:\lambda}$ its distribution. If $\bs{x} = \left(x_1, \cdots, x_n \right) \in \R^n$ is a vector, then $\left[x\right]_i$ will be its value on the $i^{th}$ dimension, that is $\left[x\right]_i = x_i$. A random variable $\bs{X}$ distributed according to a law $\mathcal{L}$ will be denoted $\bs{X} \sim \mathcal{L}$. If $A$ is a subset of $\mathcal{X}$, we will denote $A^c$ its complement in $\mathcal{X}$.

\section{The $(1, \lambda)$-CSA-ES} \label{sec:CSA}

We denote with $\x$ the parent at the $t^{th}$ iteration.
From the parent point $\x$, $\lambda$ children are generated: $\yi = \x + \sig[t] \randomstep$ with $i \in [[1, \lambda]]$, and $\randomstep \sim \NL, \hsp (\randomstep)_{i \in [[1, \lambda]]} \hsp$ i.i.d.  Due to the $(1,\lambda)$ selection scheme, from these children, the one minimizing the function $f$ is selected: $\xplus = \argmin \{f(\bs{Y}), \bs{Y} \in \{{\y{1}, ..., \y{\lambda}\}}\}$.
This latter equation implicitly defines the random variable $\chosenstep$ as
\begin{equation}
\xplus = \x + \sig[t] \chosenstep \label{eq:selection2}
\enspace.
\end{equation}
In order to adapt the step-size, the cumulative path is defined as
\begin{equation}
\pplus = (1-c)\p + \sqrt{c(2-c)}\, \chosenstep \label{eq:chemin}
\end{equation}
with $0<c\leq 1$. The constant $1/c$ represents the life span of the information contained in $\p$, as after $1/c$ generations $\p$ is multiplied by a factor that approaches $1/e \approx 0.37$ for $c\to0$ from below (indeed $(1-c)^{1/c} \leq \exp(-1)$). The typical value for $c$ is between $1/\sqrt{n}$ and $1/n$. We will consider that $\bs{p}_0 \sim \NL$ as it makes the algorithm easier to analyze.

The normalization constant $\sqrt{c(2-c)}$ in front of $\chosenstep$ in Eq.~\eqref{eq:chemin} is  chosen so that under random selection and if $\p$ is distributed according to $\NL$ then also $\pplus$ follows $\NL$. Hence the length of the path can be compared to the expected length of $\| \NL \|$ representing the expected length under random selection. 

The step-size update rule increases the step-size if the length of the path is larger than the length under random selection and decreases it if the length is shorter than under random selection: 
$$
\sigplus = \sig[t] \exp\left({\frac{c}{d_{\sigma}}\left(\frac{\|\pplus\|}{E(\|\NL\|)} - 1\right)}\right) 
$$
where the damping parameter $d_{\sigma}$ determines how much the step-size can change and is set to $d_{\sigma}=1$. 
A simplification of the update considers the squared length of the path~\cite{arnold2002random}:
\begin{equation}
\sigplus = \sig[t] \stepsizechangelong. \label{eq:pas}
\end{equation}
This rule is easier to analyse and we will use it throughout the paper. We will denote $\stepsizechange$ the random variable for the step-size change, i.e. $\stepsizechange = \exp(c/(2d_\sigma )(\|\pplus\|^2/n - 1))$, and for $\bs{u} \in \R^n$, $\eta^\star(\bs{u}) = \exp(c/(2d_\sigma )(\|u\|^2/n - 1))$.

\paragraph{Preliminary results on linear functions.}

Selection on the linear function, $f(\xx)=[\xx]_{1}$, is determined by  $\firstdim{\x} + \sig\firstdim{\chosenstep}  \leq  \firstdim{\x} + \sig\firstdim{\randomstep} $ for all $i$ which is equivalent to $ \firstdim{\chosenstep}  \leq  \firstdim{\randomstep} $ for all $i$ where by definition $\firstdim{\randomstep}$ is distributed according to $\NN$. Therefore the first coordinate of the selected step is distributed according to $\Nmin$ and all others coordinates are distributed according to $\NN$, i.e. selection does not bias the distribution along the coordinates $2,\ldots,n$. Overall we have the following result.
\begin{lemma}\label{lem:selectedstep}
On the linear function $f(\xx)=x_{1}$, the selected steps $(\chosenstep)_{t\in \N}$ of the $(1,\lambda)$-ES are i.i.d. and distributed according to the vector $\chosenstepdistribution:=(\Nmin,\NNN_{2},\ldots,\NNN_{n})$ where $\NNN_{i} \sim \NN$ for $i \geq 2$.
\end{lemma}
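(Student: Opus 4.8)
The plan is to establish the two claims separately: first that the selected step at a fixed iteration $t$ has the stated distribution $\chosenstepdistribution=(\Nmin,\NNN_2,\ldots,\NNN_n)$, and second that the selected steps across iterations are independent and identically distributed.

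\textbf{Distribution at a fixed iteration.} As already observed in the text preceding the lemma, on $f(\xx)=x_1$ the selection criterion $f(\yi)\le f(\y{j})$ for all $j$ reduces to $\firstdim{\chosenstep}\le\firstdim{\randomstep[j]}$ for all $j$, because the common terms $\firstdim{\x}$ cancel and $\sig[t]>0$. Hence the index selected is $\argmin_{i}\firstdim{\randomstep}$, i.e.\ the index achieving the minimum of the $\lambda$ i.i.d.\ first coordinates $\firstdim{\randomstep}\sim\NN$. First I would fix the value of the selected index $I_t:=\argmin_i \firstdim{\randomstep}$ and argue by symmetry: since the vectors $\randomstep$, $i\in[[1,\lambda]]$, are i.i.d.\ and each has independent coordinates (the covariance matrix being the identity), conditioning on $\{I_t=i\}$ only constrains the \emph{first} coordinate of $\randomstep$ and leaves coordinates $2,\ldots,n$ of that same vector independent and $\NN$-distributed, and independent of the first coordinate. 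The first coordinate of $\randomstep[i]$ given that it is the smallest of $\lambda$ i.i.d.\ $\NN$ variables is by definition the first order statistic $\Nmin$, with distribution $\Psi_{1:\lambda}$. Since this conditional law does not depend on $i$, it is also the unconditional law of $\chosenstep$, giving $\chosenstep\sim(\Nmin,\NNN_2,\ldots,\NNN_n)$ with the last $n-1$ components mutually independent, independent of the first, and each $\NN$.

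\textbf{The i.i.d.\ property across $t$.} Here the key point is that the law of $\chosenstep$ just derived depends only on the randomness injected at iteration $t$, namely the i.i.d.\ family $(\randomstep)_{i\in[[1,\lambda]]}$, and not on $\x$ or $\sig[t]$: indeed the selection event $\{\firstdim{\chosenstep}\le\firstdim{\randomstep[j]}\ \forall j\}$ is scale- and translation-invariant on a linear function, so $\x$ and $\sig[t]$ drop out entirely. Since the samples $(\randomstep)_{i}$ used at different iterations $t$ are drawn independently (and independently of everything generated before iteration $t$), and $\chosenstep$ is a fixed measurable function of $(\randomstep)_{i\in[[1,\lambda]]}$ alone, the sequence $(\chosenstep)_{t\in\N}$ inherits independence; identical distribution follows because that measurable function is the same at every $t$ and is applied to families with the same law.

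\textbf{Main obstacle.} None of the steps is technically deep; the one requiring care is the symmetry/disintegration argument showing that conditioning on which child is selected does not bias coordinates $2,\ldots,n$. The cleanest way I would phrase it is: the joint density of $(\randomstep)_{i\in[[1,\lambda]]}$ factorizes over coordinates, so conditioning on the event $\{\firstdim{\randomstep}=\min_j \firstdim{\randomstep[j]}\}$, an event measurable with respect to the first coordinates only, leaves the conditional law of the remaining coordinates unchanged; then summing over the (disjoint, equiprobable) values of the selected index yields the unconditional statement. I would also note explicitly that ties occur with probability zero, so $\argmin$ is well-defined almost surely.
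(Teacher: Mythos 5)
Your proposal is correct and follows essentially the same route as the paper, which proves the lemma in the informal paragraph preceding its statement: selection depends only on the first coordinates (the terms $\firstdim{\x}$ and the factor $\sig>0$ cancel), so $\firstdim{\chosenstep}\sim\Nmin$ while coordinates $2,\ldots,n$ remain unbiased $\NN$, and the i.i.d.\ property across $t$ holds because the selected step is a function of the fresh samples alone, independently of $(\x,\sig)$. Your write-up merely makes explicit the symmetry/disintegration details (argmin measurable with respect to first coordinates only, ties of probability zero) that the paper leaves implicit.
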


Because the selected steps $\chosenstep$ are i.i.d.\ the path defined in Eq.~\ref{eq:chemin} is an autonomous Markov chain, that we will denote $\pchain = (\p)_{t\in \N}$. Note that if the distribution of the selected step depended on $(\x,\sigma_{t})$ as it is generally the case on non-linear functions, then the path alone would not be a Markov Chain, however $(\x,\sigma_{t},\p)$ would be an autonomous Markov Chain. In order to study whether the $(1,\lambda)$-CSA-ES diverges geometrically, we investigate the log of the step-size change, whose formula can be immediately deduced from Eq.~\ref{eq:pas}:
\begin{equation}\label{eq:stepsizechange}
 \ln\left( \frac{\sigma_{t+1}}{\sig} \right)   =  \frac{c}{2d_\sigma}  
\left( \frac{\norm{\pplus}^2}{n} - 1 \right)  
\end{equation}
By summing up this equation from $0$ to $t-1$ we obtain
\begin{equation} \label{eq:sigcumul}
\inv{t}\ln\left(\frac{\sigma_{t}}{\sigma_0} \right) = \frac{c}{2d_\sigma}  \left( \inv{t}\sum_{k=1}^{t} \frac{\norm{\p[k]}^2}{n} - 1 \right) \enspace.
\end{equation}
We are interested to know whether $\inv{t}\ln ({\sigma_{t}/ \sigma_0} )$ converges to a constant. In case this constant is positive this will prove that the $(1,\lambda)$-CSA-ES diverges geometrically. We recognize thanks to \eqref{eq:sigcumul} that this quantity is equal to the sum of $t$ terms divided by $t$ that suggests the use of the law of large numbers to prove convergence of \eqref{eq:sigcumul}. We will start by investigating the case without cumulation $c=1$ (Section~\ref{sec:withoutcumulation}) and then the case with cumulation (Section~\ref{sec:withcumulation}).

\section{Divergence rate of $(1, \lambda)$-CSA-ES without cumulation} 

\label{sec:withoutcumulation}
In this section we study the $(1,\lambda)$-CSA-ES without cumulation, i.e. $c=1$. In this case, the path always equals to the selected step, i.e.\ for all $t$, we have $\pplus = \chosenstep$. We have proven in Lemma~\ref{lem:selectedstep} that $\chosenstep$ are i.i.d.\ according to $\chosenstepdistribution$. This allows us to use the standard law of large numbers to find the limit of $\frac1t \ln(\sigma_{t}/\sigma_{0})$ as well as compute the expected log-step-size change.

\mc{\Deltasigma}{\Delta_\sigma}
\begin{proposition} \label{pr:sigrate}
Let $\Deltasigma := \frac{1}{2d_{\sigma}n} \left( \E \left( \Nlambda^2 \right) - 1 \right)$. On linear functions, the $(1,\lambda)$-CSA-ES without cumulation satisfies (i) almost surely
$ \lim_{t \to \infty}	\inv{t} \ln\left({\sig}/{\sig[0]}\right) = \Deltasigma 
$,
and (ii) for all $t\in\N$, 
$\E (\ln({\sigma_{t+1}}/{\sig})) = \Deltasigma
$.
\end{proposition}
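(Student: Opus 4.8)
The plan is to exploit the fact that without cumulation ($c=1$) the path update \eqref{eq:chemin} degenerates to $\pplus=\chosenstep$, so that both the one-step log-change \eqref{eq:stepsizechange} and the Cesàro average \eqref{eq:sigcumul} become statements about the selected steps alone, which by Lemma~\ref{lem:selectedstep} are i.i.d.\ copies of $\chosenstepdistribution=(\Nlambda,\NNN_2,\ldots,\NNN_n)$. First I would record the coordinate decomposition
\[
\norm{\pplus}^2=\norm{\chosenstep}^2=\fchosenstep^2+\sum_{i=2}^{n}\left[\chosenstep\right]_i^2,
\]
in which $\fchosenstep\sim\Nlambda$ while the remaining $n-1$ squared coordinates are independent $\chi^2_1$ variables of mean $1$.

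Next I would settle integrability, which is the one point that genuinely uses the structure of $\Nlambda$: since $\Nlambda$ is the minimum of $\lambda$ i.i.d.\ standard normals, $|\Nlambda|\le\sum_{j=1}^{\lambda}|\NNN^{(j)}|\in L^2$, hence $\E(\Nlambda^2)<\infty$ and, by the decomposition above, $\E(\norm{\pplus}^2)=\E(\Nlambda^2)+n-1$. For (ii) I would then take expectations in \eqref{eq:stepsizechange} with $c=1$,
\[
\E\!\left(\ln\frac{\sigplus}{\sig}\right)=\frac{1}{2d_\sigma}\left(\frac{\E(\norm{\pplus}^2)}{n}-1\right)=\frac{\E(\Nlambda^2)-1}{2d_\sigma n}=\Deltasigma,
\]
and this holds for every $t$ because the law of $\chosenstep$ does not depend on $t$.

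For (i) I would apply the strong law of large numbers to $\left(\norm{\p[k]}^2\right)_{k\ge1}=\left(\norm{\chosenstep[k-1]}^2\right)_{k\ge1}$, which are i.i.d.\ by Lemma~\ref{lem:selectedstep} and integrable by the previous paragraph; this gives $\frac1t\sum_{k=1}^{t}\norm{\p[k]}^2\to\E(\Nlambda^2)+n-1$ almost surely. Substituting this limit into \eqref{eq:sigcumul} yields $\frac1t\ln(\sig/\sig[0])\to\Deltasigma$ almost surely, which is exactly (i).

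I do not anticipate a real obstacle: once $c=1$ is invoked, the statement reduces to elementary properties of i.i.d.\ sums together with a single application of the strong law. The only step that must be written carefully — and the only place where more than the first two moments of $\Nlambda$ is needed — is the $L^2$ bound on $\Nlambda$, which simultaneously justifies the finiteness of the expectation in (ii) and the applicability of the strong law in (i).
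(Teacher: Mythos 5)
Your proposal is correct and follows essentially the same route as the paper: identify $\pplus=\chosenstep$ for $c=1$, use the coordinate decomposition from Lemma~\ref{lem:selectedstep} to get $\E(\norm{\chosenstep}^2)=\E(\Nlambda^2)+n-1$, take expectations in Eq.~\eqref{eq:stepsizechange} for (ii), and apply the strong law of large numbers to the i.i.d.\ integrable sequence $(\norm{\chosenstep}^2)_{t}$ in Eq.~\eqref{eq:sigcumul} for (i). The only (immaterial) difference is the integrability argument, where you bound $|\Nlambda|$ by the sum of the $\lambda$ absolute values while the paper uses $\E(\Nlambda^2)\leq\lambda^2$.
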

\begin{proof} 
We have identified in Lemma~\ref{lem:selectedstep} that the first coordinate of $\chosenstep$ is distributed according to $\Nmin$ and the other coordinates according to $\mathcal{N}(0,1)$, hence $\E\left(\|\chosenstep\|^2\right)=\E\left( {\firstdim{\chosenstep}}^2  \right) + \sum_{i=2}^n \E\left( \left[\chosenstep\right]_i^2  \right) = \E\left( \Nlambda^2 \right) + n - 1  $. Therefore $\E\left(\|\chosenstep\|^2\right)/n - 1 = (\E \left( \Nlambda^2 \right) - 1)/n $.
By applying this to Eq.~\eqref{eq:stepsizechange}, we deduce that $\E( \ln(\sigma_{t+1}/\sigma_t) = 1/(2d_\sigma n)(\E(\Nmin^2) - 1)$. Furthermore, as $\E ( \Nlambda^2 ) \leq \E ((\lambda\NN)^2) = \lambda^2 < \infty$, we have $\E(\|\chosenstep\|^2) < \infty$. The sequence $(\|\chosenstep\|^2)_{t \in \N}$ being i.i.d according to Lemma~\ref{lem:selectedstep}, and being integrable as we just showed, we can apply the strong law of large numbers on Eq.~\eqref{eq:sigcumul}. We obtain
\begin{align*}
\inv{t}\ln \left({\sig \over \sigma_0} \right) &= \frac{1}{2d_\sigma}\left( \inv{t}\sum_{k=0}^{t-1}{\|\chosenstep[k]\|^2 \over n}  - 1  \right)\\
&\overset{a.s.}{\underset{t \rightarrow \infty}{\longrightarrow}} \frac{1}{2d_\sigma} \left( \frac{\E \left( \|\chosenstep[\cdot]\|^2 \right)}{n}   - 1   \right) = \frac{1}{2d_\sigma n} \left(  \E \left( \Nlambda^2 \right) - 1   \right) 
\end{align*}
~\\[-1.8\baselineskip]
\qed\end{proof}
%

The proposition reveals that the sign of $\left( \E \left( \Nlambda^2 \right) - 1 \right)$ determines whether the step-size diverges to infinity. In the following, we show that $\E \left( \Nlambda^2 \right)$ increases in $\lambda$ for $\lambda \geq 2$ and that the $(1,\lambda)$-ES diverges for $\lambda \geq 3$. For $\lambda=1$ and $\lambda=2$, the step-size follows a random walk on the log-scale. To prove this we need the following lemma:
\begin{lemma}[\cite{expectedlambda}] \label{lm:niko_lambda_equation}
Let $g$ be a real valued function on $\R$. For $\lambda \geq 2$,
\begin{equation} \label{eq:niko_lambda_equation}
\left(\lambda+1\right)\E\left(g\left(\mathcal{N}_{1:\lambda}\right)\right) = \E\left(g\left(\mathcal{N}_{2:\lambda+1}\right)\right) + \lambda\E\left(g\left(\mathcal{N}_{1:\lambda+1}\right)\right) \enspace.\end{equation}
\end{lemma}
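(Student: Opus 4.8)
The plan is to prove this purely from the structure of order statistics; Gaussianity plays no role beyond giving a continuous distribution, so I would state the argument for an arbitrary continuous law. There are two natural routes, and I would lead with the combinatorial one, which is the shortest.

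\emph{Combinatorial route.} Let $X_1,\dots,X_{\lambda+1}$ be i.i.d.\ standard normal. For each fixed $j\in\{1,\dots,\lambda+1\}$ the family $(X_i)_{i\neq j}$ consists of $\lambda$ i.i.d.\ standard normals, so $\min_{i\neq j}X_i$ has the law of $\mathcal{N}_{1:\lambda}$; summing over $j$ and taking expectations gives $(\lambda+1)\,\E(g(\mathcal{N}_{1:\lambda})) = \E\!\left(\sum_{j=1}^{\lambda+1} g(\min_{i\neq j}X_i)\right)$. Almost surely the $X_i$ are pairwise distinct; let $j^\star$ be the index attaining the overall minimum. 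Then $\min_{i\neq j^\star}X_i$ is the second smallest value, hence distributed as $\mathcal{N}_{2:\lambda+1}$, whereas for each of the remaining $\lambda$ indices $j\neq j^\star$ we have $\min_{i\neq j}X_i = \min_i X_i$, distributed as $\mathcal{N}_{1:\lambda+1}$. Therefore $\sum_{j} g(\min_{i\neq j}X_i) = g(\mathcal{N}_{2:\lambda+1}) + \lambda\,g(\mathcal{N}_{1:\lambda+1})$ almost surely, and taking expectations yields \eqref{eq:niko_lambda_equation}.

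\emph{Density route.} Alternatively, recall that the density of the $k$-th of $m$ i.i.d.\ samples with cdf $F$ and density $f$ is $\frac{m!}{(k-1)!(m-k)!}F^{k-1}(1-F)^{m-k}f$. Plugging in $(k,m)=(1,\lambda),(2,\lambda+1),(1,\lambda+1)$, writing the three expectations in \eqref{eq:niko_lambda_equation} as integrals against these densities, and dividing through by $\lambda(\lambda+1)$, the claim reduces to $\int g\,(1-F)^{\lambda-1}f = \int g\,\big(F+(1-F)\big)(1-F)^{\lambda-1}f$, which is immediate from $F+(1-F)=1$.

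There is no genuine obstacle here: the identity is an elementary fact about order statistics of i.i.d.\ continuous variables. The only points deserving a word of care are the well-definedness of the three expectations — it suffices that $\E(g(\mathcal{N}_{1:\lambda}))$ exist, since in the density form the other two integrands are dominated by a constant times the first — and, in the combinatorial route, discarding the probability-zero event of ties. I would note these caveats explicitly and otherwise keep the argument as above.
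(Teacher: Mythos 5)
Your combinatorial route is correct and is essentially the paper's own proof: the paper performs the same leave-one-out decomposition (its identity $\sum_{i=1}^{\lambda}\chi_{1:\lambda}^{\{i\}}=(\lambda-1)\chi_{1:\lambda}+\chi_{2:\lambda}$ is exactly your split according to whether the removed index attains the overall minimum), only written with $\lambda$ rather than $\lambda+1$ variables and then re-indexed. Your density-route alternative and the integrability/ties caveats are also correct, but they are extras beyond what the paper's argument contains.
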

\begin{proof} of Lemma \ref{lm:niko_lambda_equation}

This method can be found with more details in \cite{expectedlambda}.
	
	Let $\chi_i = g\left(\xi_i\right)$, and $\chi_{i:\lambda} = g\left(\xi_{i:\lambda}\right)$. Note that in general $\chi_{1:\lambda} \neq \underset{i \in [[1,\lambda]]}{\min} \chi_i$. The sorting is made on $(\xi_i)$, not on $(\chi_i)$.
	
	We will also note $\chi_{i:\lambda}^{\{j\}}$ the $i^th$ order statistic after that the variable $\chi_j$ has been taken away. $\chi_{i:\lambda}^{[j]}$ will be $i^th$ order statistic after $\chi_{j:\lambda}$ has been taken away : if $i \neq 1$ then we have $\chi_{1:\lambda}^{[i]} = \chi_{1:\lambda}$, and for $i = 1$ $\chi_{1:\lambda}^{[i]} = \chi_{2:\lambda}$.
	
	Then we have $\E\left(\chi_{1:\lambda}^{\{i\}}\right) = \chi_{1:\lambda-1}$,
	
	And $\sum_{i=1}^\lambda \chi_{1:\lambda}^{\{i\}} = \sum_{i=1}^\lambda \chi_{1:\lambda}^{[i]}$ $(2)$.
	
	From the first equation we deduce that $\lambda\E\left(\chi_{1:\lambda-1}\right) = \lambda \E(\minchii) = \sum_{i=1}^\lambda \E(\minchii) =  \E(\sum_{i=1}^\lambda \minchii)$.
	
	With the second equation, we get that $\E(\sum_{i=1}^\lambda \minchii) = \E(\sum_{i=1}^\lambda \minchiisort) = \E(\chi_{2:\lambda}) + (\lambda-1)\E(\chi_{1:\lambda})$.
	
	By combining both, we get the final equation:
	
		$$(\lambda - 1)(\E(\chi_{1:\lambda}) - \E(\chi_{1:\lambda-1})) = \E(\chi_{1:\lambda-1}) - \E(\chi_{2:\lambda})$$
\qed\end{proof}
We are now ready to prove the following result.
\begin{lemma} \label{lm:increasing_lambda_expectation}
	Let $(\mathcal{N}_i)_{i \in [[1, \lambda]]}$ be independent random variables, distributed according to $\NN$, and $\mathcal{N}_{i:\lambda}$ the $i^{th}$ order statistic of $(\mathcal{N}_i)_{i \in [[1, \lambda]]}$. Then $\E \left( \Nmin[1]^2 \right)=\E \left( \Nmin[2]^2 \right)= 1$. In addition, for all $\lambda \ge 2$, $\E \left( \Nmin[\lambda+1]^2 \right) > \E \left( \Nmin^2 \right)$. 
\end{lemma}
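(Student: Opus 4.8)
The two equalities are the cases $\lambda=1$ and $\lambda=2$, which I would dispatch directly. For $\lambda=1$, $\mathcal{N}_{1:1}$ is standard normal, so $\E(\mathcal{N}_{1:1}^2)=1$. For $\lambda=2$ one has $\mathcal{N}_{1:2}^2+\mathcal{N}_{2:2}^2=\mathcal{N}_1^2+\mathcal{N}_2^2$, together with $\mathcal{N}_{1:2}\overset{d}{=}-\mathcal{N}_{2:2}$ coming from the symmetry $\mathcal{N}_i\mapsto-\mathcal{N}_i$; hence $\E(\mathcal{N}_{1:2}^2)=\E(\mathcal{N}_{2:2}^2)=1$.

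For the strict monotonicity the plan is first to apply Lemma~\ref{lm:niko_lambda_equation} with $g(x)=x^2$ (all relevant expectations being finite, $\E(\mathcal{N}_{i:\lambda}^2)\le\lambda^2$), turning the claim into a comparison of two order statistics of a \emph{single} sample. That lemma reads $(\lambda+1)\,\E(\mathcal{N}_{1:\lambda}^2)=\E(\mathcal{N}_{2:\lambda+1}^2)+\lambda\,\E(\mathcal{N}_{1:\lambda+1}^2)$, i.e. $\E(\mathcal{N}_{1:\lambda}^2)$ is a convex combination (weights $\tfrac{1}{\lambda+1}$ and $\tfrac{\lambda}{\lambda+1}$, both positive) of $\E(\mathcal{N}_{2:\lambda+1}^2)$ and $\E(\mathcal{N}_{1:\lambda+1}^2)$. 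Therefore $\E(\mathcal{N}_{1:\lambda+1}^2)>\E(\mathcal{N}_{1:\lambda}^2)$ is \emph{equivalent} to $\E(\mathcal{N}_{1:\lambda+1}^2)>\E(\mathcal{N}_{2:\lambda+1}^2)$, so it suffices to prove that for every integer $m\ge 3$ one has $\E(\mathcal{N}_{1:m}^2)>\E(\mathcal{N}_{2:m}^2)$.

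To establish this last inequality I would use the joint density of the two smallest order statistics, $m(m-1)(1-\Phi(b))^{m-2}\varphi(a)\varphi(b)$ on $\{a<b\}$ (equivalently: conditionally on $\mathcal{N}_{2:m}=b$, $\mathcal{N}_{1:m}$ is a standard normal restricted to $(-\infty,b)$). Writing $\psi(b):=\int_{-\infty}^{b}(a^2-b^2)\varphi(a)\,\diff a$ and integrating out $a$,
\[
\E\!\left(\mathcal{N}_{1:m}^2-\mathcal{N}_{2:m}^2\right)=m(m-1)\int_{\R}\varphi(b)\,(1-\Phi(b))^{m-2}\,\psi(b)\,\diff b .
\]
Two facts about $\psi$ drive the argument. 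First, $\psi'(b)=-2b\,\Phi(b)$, so $\psi$ increases on $\R_{-}$ and decreases on $\R_{+}$; since $\psi(b)>0$ for all $b\le 0$ (the integrand is then positive), $\psi(0)=\tfrac12$, and $\psi(b)\to-\infty$ as $b\to+\infty$, the function $\psi$ is positive on $(-\infty,b^{\star})$ and negative on $(b^{\star},+\infty)$ for a unique $b^{\star}>0$. Second, $\int_{\R}\varphi(b)\,\psi(b)\,\diff b=0$ — either as the value of the display above at $m=2$, or directly because the integrand there collapses to the odd function $t\mapsto t^{2}\varphi(t)\,(1-2\Phi(t))$. Using the second fact to subtract $(1-\Phi(b^{\star}))^{m-2}$ times a vanishing integral, the display equals $\int_{\R}\varphi(b)\big[(1-\Phi(b))^{m-2}-(1-\Phi(b^{\star}))^{m-2}\big]\psi(b)\,\diff b$; for $m\ge 3$ the bracket is a strictly decreasing function of $b$ vanishing at $b^{\star}$, hence has everywhere the same sign as $\psi(b)$, so the integrand is nonnegative and strictly positive off a single point. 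This gives $\E(\mathcal{N}_{1:m}^2)>\E(\mathcal{N}_{2:m}^2)$ for $m\ge3$, and with $m=\lambda+1$ it finishes the proof.

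I expect the last step to be the crux. The tempting route — comparing $\mathcal{N}_{1:\lambda}$ directly with $\mathcal{N}_{2:\lambda+1}$ — stumbles because whatever stochastic domination one can prove between these variables is destroyed by squaring, the sign of the variables mattering. Reducing to a comparison within one sample via Lemma~\ref{lm:niko_lambda_equation} and then pivoting the integral at the unique sign change of $\psi$ is what makes the inequality tractable; the only genuine computations are the sign pattern of $\psi$ (through $\psi'(b)=-2b\Phi(b)$) and the vanishing of $\int_{\R}\varphi\psi$.
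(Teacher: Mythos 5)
Your proposal is correct, and it follows the paper up to the key inequality: the base cases $\lambda=1,2$ are handled by the same symmetry argument, and both you and the paper invoke Lemma~\ref{lm:niko_lambda_equation} with $g(x)=x^2$ to turn the monotonicity claim into the within-sample comparison $\E(\mathcal{N}_{1:m}^2)>\E(\mathcal{N}_{2:m}^2)$ for $m\ge 3$. Where you genuinely diverge is in proving that comparison. The paper argues by a reflection on the sample space: it takes the event $E_1$ on which $\mathcal{N}_{1:\lambda}^2<\mathcal{N}_{2:\lambda}^2$, maps it bijectively (swap the two smallest values and flip a sign) onto a set $E_2$ whose contribution to $\E(\mathcal{N}_{1:\lambda}^2-\mathcal{N}_{2:\lambda}^2)$ exactly cancels that of $E_1$, which gives the weak inequality, and then exhibits a non-negligible set $E_3$ disjoint from $E_1\cup E_2$ on which the difference is strictly positive, to upgrade to strictness. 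You instead integrate out the minimum against the explicit joint density $m(m-1)\varphi(a)\varphi(b)(1-\Phi(b))^{m-2}$ of the two smallest order statistics, reduce everything to the one-dimensional kernel $\psi$, and use that $\psi$ and the recentred weight $(1-\Phi(b))^{m-2}-(1-\Phi(b^\star))^{m-2}$ change sign at the same point $b^\star$, the recentring being legitimate because $\int\varphi\psi=0$ is precisely the $m=2$ equality. Your version buys a cleaner strictness: the integrand is nonnegative and positive off a single point, so no separate ``non-negligible set'' step is needed, and the whole $m$-dependence sits in the single strictly decreasing factor $(1-\Phi(b))^{m-2}$, which makes transparent why $m=2$ gives equality and $m\ge3$ a strict inequality. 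The paper's version buys a computation-free argument resting only on Gaussian symmetry (no densities, no auxiliary function), at the price of a rather informal treatment of the strictness step that your analytic pivot avoids.
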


\begin{proof} of Lemma \ref{lm:increasing_lambda_expectation}
The strict monotony of $\E(\Nmin^2)$ in $\lambda$ from the previous proposition is equivalent to show that $\E(\mathcal{N}_{1:\lambda}^2) > \E\left(\mathcal{N}_{2:\lambda}^2 \right)$ for $\lambda \geq 3$. Indeed $\E(\Nmin^2)- \E(\Nmin[\lambda-1]^2) = \E(\mathcal{N}_{1:\lambda}^2 - \mathcal{N}_{2:\lambda}^2 )/ \lambda$ which follows from  Lemma~\ref{lm:niko_lambda_equation} taking $g$ as the square function. 

Let $E_1 = \lbrace \omega \in \Omega | \Nlambda^2(\omega) < \Nlambda[2]^2(\omega) \rbrace$, where $\Omega = \R^\lambda$ and $P(\omega) = \exp(- \|\omega\|^2/2)/\sqrt{2\pi}$. For $\omega \in \Omega$, let us note $\omega_{i:\lambda}$ the $i^{\rm th}$ order statistic of $([\omega]_j)_{j \in [[1, \lambda]]}$. Let $g$ be a function that maps $\omega \in \Omega$ to $\tilde{\omega} \in \Omega$, where $\tilde{\omega}_{1:\lambda} = -\omega_{2:\lambda}$, $\tilde{\omega}_{2:\lambda} = \omega_{1:\lambda}$ and for $i \geq 3$, $\tilde{\omega}_{i:\lambda} = \omega_{i:\lambda}$. The function $g$ is bijective between $E_1$ and its image by $g$, $E_2$. Let us note that for $\omega \in E_1$, $\Nlambda[2]^2(\omega) - \Nlambda^2(\omega) = \Nlambda^2(g(\omega)) - \Nlambda[2]^2(g(\omega))$, and $P(\omega) = P(g(\omega))$ since the standard normal distribution is symmetric. That is $\int_{E_1} (\Nlambda[2]^2(\omega)-\Nlambda^2(\omega)) P(\omega)d\omega = \int_{E_2} (\Nlambda^2(\tilde{\omega})-\Nlambda[2]^2(\tilde{\omega})) P(\tilde{\omega})d\tilde{\omega}$ by a change of variables $\tilde{omega} = g(\omega)$. As according to the definition of $E_1$, for all $\omega \in \Omega \backslash E_1$ $\Nlambda^2(\omega) \geq \Nlambda[2]^2(\omega)$, and that $E_1$ is properly counterweighted by $E_2$ in the expected value of $\Nlambda^2 - \Nlambda[2]^2$, we do have $\E(\Nlambda^2) \geq \E(\Nlambda[2]^2)$ for all $\lambda \geq 2$.

For $\lambda \geq 3$, let $E_3 = \lbrace \omega \in \Omega | \omega_{3:\lambda} \in ]-|\omega_{1:\lambda}|, |\omega_{1:\lambda}|[ ~~{\rm and }~~ \omega_{1:\lambda} < \omega_{2:\lambda} \rbrace$. Then, for $\omega \in E_3$ we also have $\omega_{2:\lambda} \in ]-|\omega_{1:\lambda}|, |\omega_{1:\lambda}|[$, so $\Nlambda^2(\omega) > \Nlambda[2]^2(\omega)$ which means $\omega \notin E_1$, or $E_1 \cap E_3 = \emptyset$. For $\omega \in E_1$, as $\omega_{1:\lambda}^2 < \omega_{2:\lambda}^2$ and $\omega_{1:\lambda} < \omega{2:\lambda}$, $\omega_{2:\lambda} > 0$, so $\omega_{3:\lambda} \notin [-\omega{2:\lambda}, \omega{2:\lambda}]$. Hence, as $g(\omega)_{3:\lambda} = \omega{3:\lambda}$ and $[-\omega{2:\lambda}, \omega{2:\lambda}] = [-|g(\omega)_{1:\lambda})|, |g(\omega)_{1:\lambda})|]$, $g(\omega) \notin E_3$. That is $E_2 \cap E_3 = \emptyset$. So $E_3$ is disjoint with $E_1$ and $E_2$. Furthermore, for every $\omega \in \Omega$, except when $\omega_{1:\lambda} \neq 0$ which is a negligible subset of events, there exists a non negligible set of $(\omega_{i:\lambda})_{i \in [[1,\lambda]]}$ such that $\omega_{3:\lambda} \in ]-|\omega_{1:\lambda}|, |\omega_{1:\lambda}|[ ~~{\rm and }~~ \omega_{1:\lambda} < \omega_{2:\lambda} \rbrace$. So $E_3$ is a non negligible subset of $\Omega$, where $\Nlambda^2(\omega) > \Nlambda[2]^2(\omega)$. Hence $\E(\Nlambda^2(\omega)) > \E(\Nlambda[2]^2(\omega))$, which is the monotony of Lemma~\ref{lm:increasing_lambda_expectation}.

For $\lambda = 1$, $\Nmin[1] \sim \NN$ so $\E(\Nmin[1]^2) = 1$. For $\lambda = 2$ we have $\E(\Nmin[2]^2+\mathcal{N}_{2:2}^2) = 2\E(\NN^2) = 2$, and since the normal distribution is symmetric $\E(\Nmin[2]^2) = \E(\mathcal{N}_{2:2}^2)$, hence $\E(\Nmin[2]^2) = 1$.
\qed\end{proof}

We can now link Proposition \ref{pr:sigrate} and Lemma \ref{lm:increasing_lambda_expectation} into the following theorem:

\begin{theorem} \label{th:convsig}
On linear functions, for $\lambda\ge3$, the step-size of the $(1, \lambda)$-CSA-ES without cumulation ($c=1$) diverges geometrically almost surely and in expectation at the rate $1/(2d_\sigma n)(\E(\Nmin^2) - 1)$, i.e.
\begin{equation}\label{eq:divas}
	\inv{t} \ln\left(\frac{\sig}{\sig[0]}\right) 
	\;
	\overset{a.s.}{\underset{t \rightarrow \infty}{\longrightarrow}} 
	\;
	\E\left(\ln\left(\frac{\sigma_{t+1}}{\sig}\right)\right)
	\;=\; 
	\frac{1}{2d_{\sigma}n} \left( \E \left( \Nlambda^2 \right) - 1 \right) 
	\enspace.
	\end{equation}

For $\lambda=1$ and $\lambda=2$, without cumulation, the logarithm of the step-size does an additive unbiased random walk i.e.
$\ln \sigma_{t+1} = \ln \sigma_{t} + W_{t} $ where $E[W_{t}]=0$. More precisely $W_{t} \sim 1/(2d_\sigma)(\chi_n^2/n - 1)$ for $\lambda=1$, and $W_{t} \sim 1/(2d_\sigma)((\Nmin[2]^2 + \chi_{n-1}^2)/n - 1)$ for $\lambda =2$, where $\chi_k^2$ stands for the chi-squared distribution with $k$ degree of freedom.
\end{theorem}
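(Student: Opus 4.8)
The plan is to obtain the two regimes separately: for $\lambda\ge3$ everything follows by feeding the sign information of Lemma~\ref{lm:increasing_lambda_expectation} into Proposition~\ref{pr:sigrate}, while for $\lambda\in\{1,2\}$ I would compute the exact law of the one-step log-increment of the step-size and read off that it has mean zero.

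\textit{Case $\lambda\ge3$.} First I would note that Lemma~\ref{lm:increasing_lambda_expectation} gives $\E(\mathcal{N}_{1:2}^2)=1$ together with the strict monotonicity $\E(\mathcal{N}_{1:\lambda+1}^2)>\E(\mathcal{N}_{1:\lambda}^2)$ for every $\lambda\ge2$; iterating this inequality starting from $\lambda=2$ yields $\E(\mathcal{N}_{1:\lambda}^2)>1$ for all $\lambda\ge3$, hence $\Delta_\sigma=\frac{1}{2d_\sigma n}(\E(\mathcal{N}_{1:\lambda}^2)-1)>0$. Proposition~\ref{pr:sigrate}(i) then gives $\frac1t\ln(\sigma_t/\sigma_0)\to\Delta_\sigma>0$ almost surely, which is exactly geometric divergence of $\sigma_t$; Proposition~\ref{pr:sigrate}(ii) gives $\E(\ln(\sigma_{t+1}/\sigma_t))=\Delta_\sigma$ for every $t$. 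Putting these two facts side by side is precisely the chain of equalities in~\eqref{eq:divas}.

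\textit{Case $\lambda\in\{1,2\}$.} Here $c=1$, so the path collapses to the selected step, $\pplus=\chosenstep$, and \eqref{eq:stepsizechange} becomes $\ln(\sigma_{t+1}/\sigma_t)=\frac{1}{2d_\sigma}\bigl(\norm{\chosenstep}^2/n-1\bigr)=:W_t$, i.e. $\ln\sigma_{t+1}=\ln\sigma_t+W_t$. By Lemma~\ref{lem:selectedstep} the $(\chosenstep)_{t\in\N}$ are i.i.d.\ with law $(\mathcal{N}_{1:\lambda},\mathcal{N}_2,\ldots,\mathcal{N}_n)$, the last $n-1$ coordinates being independent standard normals and independent of the first, so the $W_t$ are i.i.d.\ (the "additive random walk" claim) and $\norm{\chosenstep}^2$ is distributed as $\mathcal{N}_{1:\lambda}^2+\chi_{n-1}^2$ with the two summands independent. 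For $\lambda=1$, $\mathcal{N}_{1:1}\sim\mathcal{N}(0,1)$ so $\mathcal{N}_{1:1}^2+\chi_{n-1}^2\sim\chi_n^2$ by additivity of independent chi-squares, giving $W_t\sim\frac{1}{2d_\sigma}(\chi_n^2/n-1)$ and $\E(W_t)=0$ since $\E(\chi_n^2)=n$. For $\lambda=2$, $W_t\sim\frac{1}{2d_\sigma}\bigl((\mathcal{N}_{1:2}^2+\chi_{n-1}^2)/n-1\bigr)$, and $\E(W_t)=0$ follows from $\E(\mathcal{N}_{1:2}^2)=1$ (Lemma~\ref{lm:increasing_lambda_expectation}) and $\E(\chi_{n-1}^2)=n-1$.

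I do not anticipate a genuine obstacle: the only non-routine inputs — the positivity of $\Delta_\sigma$ for $\lambda\ge3$ and its exact vanishing for $\lambda=2$, both coming from the monotonicity and the boundary value in Lemma~\ref{lm:increasing_lambda_expectation}, and the strong law of large numbers already invoked in Proposition~\ref{pr:sigrate} — are in hand. What remains is the chi-squared bookkeeping for $\norm{\chosenstep}^2$ in the small-$\lambda$ cases and correctly reading off the sign of $\Delta_\sigma$.
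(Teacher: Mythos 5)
Your proposal is correct and follows essentially the same route as the paper: combine Proposition~\ref{pr:sigrate} with the positivity of $\E(\Nmin^2)-1$ from Lemma~\ref{lm:increasing_lambda_expectation} for $\lambda\ge3$, and for $\lambda\in\{1,2\}$ write $\ln\sigma_{t+1}=\ln\sigma_t+W_t$ via Eq.~\eqref{eq:stepsizechange} and identify the law of $\norm{\chosenstep}^2$ as $\Nmin^2+\chi_{n-1}^2$ (collapsing to $\chi_n^2$ when $\lambda=1$), with $\E(W_t)=0$ from the boundary values in Lemma~\ref{lm:increasing_lambda_expectation}. Your write-up is slightly more explicit than the paper's (iterating the monotonicity and spelling out the chi-squared additivity), but there is no substantive difference.
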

\begin{proof}
For $\lambda > 2$, from Lemma~\ref{lm:increasing_lambda_expectation} we know that $\E(\Nmin^2) > \E(\Nmin[2]^2) = 1$. Therefore $\E(\Nmin^2) - 1 > 0$, hence Eq.~\eqref{eq:divas} is strictly positive, and with Proposition~\ref{pr:sigrate} we get that the step-size diverges geometrically almost surely at the rate $1/(2d_\sigma)(\E(\Nmin^2) - 1)$. 

With Eq.~\ref{eq:stepsizechange} we have $\ln(\sigma_{t+1}) = \ln(\sigma_t) + W_t$, with $W_t = 1/(2d_\sigma)(\|\chosenstep\|^2/n - 1)$. For $\lambda = 1$ and $\lambda = 2$, according to Lemma~\ref{lm:increasing_lambda_expectation}, $\E(W_t) = 0$. Hence $\ln(\sigma_t)$ does an additive unbiased random walk. Furthermore $\|\chosenstepdistribution\|^2 = \Nmin^2 + \chi_{n-1}^2$, so for $\lambda = 1$, since $\Nmin[1] = \NN$, $\|\chosenstepdistribution\|^2 = \chi_n^2$.
\qed\end{proof}

\subsection{Geometric divergence  of $([X_t]_1)_{t \in \N}$} \label{subsec:Xnoc}

As the selection occurs only on the first dimension, if there is geometric divergence for $\bs{X}_t$, it is on $\firstdim{\x}$. From Eq~\eqref{eq:selection2}
\begin{equation*}
\ln \left| {\xf[t+1] \over \xf} \right| = \ln\left|1 + {\sigma_t \over \xf} \firstdim{\chosenstep} \right| \enspace .
\end{equation*}
 Summing previous equation from $0$ till $t-1$ and dividing by $t$ gives us that
\begin{equation}\label{eq:xrec}
\inv{t}\ln \left| {\xf \over \xf[0]}\right| = \inv{t}\sum_{k=0}^{t-1} \ln\left|1 + {\sigma_k \over \xf[k]} \firstdim{\chosenstep} \right| \enspace.
\end{equation}

Although it is not obvious at first sight, it is important to take the logarithm, as we intuitively know that the speed of $\sigma_t$ and the speed of $\x$ are connected. The divergence rate of $\sigma_t$ being log-linear, so should be the one of $\x$. Let $Z_{-1} = 0$, and $Z_t = {\xf[t+1] - \xf[0] \over \sigma_t}$ for $t \geq 0$.
\begin{align*}
Z_{t+1} &= {\xf[t+2] - \xf[0] \over \sigma_{t+1}} = {\xf[t+1] - \xf[0] + \sigma_{t+1} \fchosenstep[t+1] \over \sigma_{t+1}}\\
Z_{t+1} &= \frac{Z_t}{\stepsizechange} + \firstdim{\chosenstep[t+1]}
\end{align*}
using that  $\sig[t+1] = \sig \stepsizechange$. According to Lemma \ref{lem:selectedstep} $\suitet{\chosenstep}$ is independent over time. As $\stepsizechange = \exp((\|\chosenstep\|^2/n - 1)/(2d_\sigma))$, $\suitet{\stepsizechange}$ is also independent over time. Therefore, $\zchain = \suite{Z}$, is a Markov chain.

By introducing $\zchain$ in Eq~\eqref{eq:xrec}, we obtain:
\begin{align}
\inv{t}\ln \left| {\xf \over \xf[0]}\right| &= \inv{t}\sum_{k=0}^{t-1} \ln\left|1 + {\sigma_{k-1} \stepsizechange[k-1] \over  \xf[k] } \firstdim{\chosenstep[k]}  \right| \nonumber \\
     &=     \inv{t}\sum_{k=0}^{t-1} \ln\left|1 + { \stepsizechange[k-1]  \over Z_{k-1}} \firstdim{\chosenstep[k]} \right| \nonumber \\
 &= \inv{t}\sum_{k=0}^{t-1} \ln\left| \frac{\frac{Z_{k-1}}{\stepsizechange[k-1]}+\chosenstep[k]}{\frac{Z_{k-1}}{\stepsizechange[k-1]}} \right| \nonumber \\
 &= \inv{t}\sum_{k=0}^{t-1} \left(\ln\left| Z_k \right| - \ln\left|Z_{k-1}\right| + \ln\left| \stepsizechange[k-1] \right| \right) \label{eq:XZ}
\end{align}
The right hand side of this equation reminds us again of the law of large numbers. There is no independence over time, but $\zchain$ being a Markov chain, if it follows some specific stability properties of Markov chains, then a law of large numbers may apply.

\subsubsection{Study of the Markov chain $\zchain$}

To apply a law of large numbers to a Markov chain, it has to satisfies some stability properties: in particular, the Markov chain $\pchain$ has to be $\varphi$-irreducible, that is, there exists a measure $\varphi$ such that every Borel set  $A$ of $\R^{n}$  with $\varphi(A)>0$ has a positive probability to be reached in a finite number of steps  by $\pchain$ starting from any $\bs{p}_{0} \in \R^{n}$. In addition, the chain $\pchain$ needs to be (i) positive, that is the chain admits an invariant probability measure $\pi$, i.e., for any borelian $A$, $\pi(A) = \int_{\R^n} P(x, A) \pi(dx)$ with $P(x,A)$ being the probability to transition in one time step from $x$ into $A$, and (ii) Harris recurrent which means for any borelian $A$ such that $\varphi(A) > 0$, the chain $\pchain$ visits $A$ an infinite number of times with probability one. Under those conditions, $\pchain$ satisfies a law of large numbers, more precisely:
\begin{lemma}\cite[17.0.1]{markovtheory} \label{lm:lln}
Suppose that $\bs{\Phi}$ is a positive Harris chain with stationary measure $\pi$, and let $g$ be a $\pi$-integrable function that is such that $\pi(|g|) = \int_{\R^n} |g(x)| \pi(dx) < \infty$. Then 
\begin{equation} \label{eq:lln}
1/t \sum_{k=1}^t g(\bs{\Phi}_k) \overset{a.s}{\underset{t\rightarrow \infty}{\longrightarrow}} \pi(g) \enspace .
\end{equation}
\end{lemma}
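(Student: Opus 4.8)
The plan is to prove this as the classical ergodic theorem for Harris chains, by reducing it to the strong law of large numbers (SLLN) for i.i.d.\ random variables via the regeneration (splitting) technique. First I would reduce to the case where the chain has an accessible atom. Being Harris recurrent, $\bs{\Phi}$ is $\varphi$-irreducible, so there exist $m\ge 1$, a set $C$ with $\varphi(C)>0$, a constant $\delta>0$ and a probability measure $\nu$ with $P^m(x,\cdot)\ge \delta\,\mathbf{1}_C(x)\,\nu(\cdot)$; applying the Nummelin splitting (to an $m$-skeleton if $m>1$) one builds a split chain $\check{\bs{\Phi}}$ on an enlarged state space that (i) projects onto $\bs{\Phi}$, (ii) possesses a genuine atom $\alpha$, (iii) is again positive Harris with stationary law $\check\pi$ projecting onto $\pi$, and (iv) satisfies $\check\pi(|\check g|)=\pi(|g|)<\infty$ for the lift $\check g(x,\cdot)=g(x)$. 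It therefore suffices to prove the claim for a positive Harris chain with an atom, which I now assume $\bs{\Phi}$ to be, writing $\alpha$ for the atom.

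Next I would cut the trajectory at successive visits to $\alpha$. Let $\tau(1)<\tau(2)<\cdots$ be the successive times the chain hits $\alpha$; Harris recurrence guarantees that, from every initial condition, all the $\tau(k)$ are almost surely finite. By the strong Markov property at $\alpha$, the block lengths $L_k:=\tau(k{+}1)-\tau(k)$ and the block sums $S_k:=\sum_{j=\tau(k)}^{\tau(k+1)-1} g(\bs{\Phi}_j)$ are i.i.d.\ for $k\ge 1$. Positivity is, by Kac's formula, equivalent to $\E_\alpha[L_1]=1/\pi(\alpha)<\infty$, and one has the identity $\pi(g)=\E_\alpha[S_1]/\E_\alpha[L_1]$; the hypothesis $\pi(|g|)<\infty$ is exactly what guarantees $\E_\alpha[\sum_{j=\tau(1)}^{\tau(2)-1}|g(\bs{\Phi}_j)|]<\infty$, hence $S_1$ is integrable. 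Applying the ordinary SLLN to $(S_k)_{k\ge1}$ and to $(L_k)_{k\ge1}$ gives $\frac1N\sum_{k=1}^N S_k\to\E_\alpha[S_1]$ and $\frac1N\sum_{k=1}^N L_k\to\E_\alpha[L_1]$ almost surely.

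Finally I would pass from the block index back to the time index. Let $N(t)$ be the number of completed blocks up to time $t$, so $N(t)\to\infty$ and $\tau(N(t))\le t<\tau(N(t){+}1)$, which forces $N(t)/t\to 1/\E_\alpha[L_1]$ a.s. Assuming first $g\ge 0$, the partial sum $\sum_{j=1}^t g(\bs{\Phi}_j)$ is squeezed between $\sum_{k=1}^{N(t)-1} S_k$ and $\sum_{k=1}^{N(t)+1} S_k$ up to the initial segment before $\tau(1)$, which is finite and hence negligible after division by $t$; combining this with the two limits above yields $\frac1t\sum_{j=1}^t g(\bs{\Phi}_j)\to\E_\alpha[S_1]/\E_\alpha[L_1]=\pi(g)$ almost surely. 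A general $\pi$-integrable $g$ follows by writing $g=g^+-g^-$, and transferring the conclusion through the projection $\check{\bs{\Phi}}\mapsto\bs{\Phi}$ finishes the argument. In the paper one would, of course, simply invoke \cite[17.0.1]{markovtheory}, since $\pchain$ and $\zchain$ will be shown in the sequel to be $\varphi$-irreducible, positive and Harris recurrent.

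The step I expect to be the main obstacle is the splitting construction together with the bookkeeping it entails: verifying that the split chain inherits positive Harris recurrence, that $\check\pi$ projects to $\pi$, and above all that $\pi(|g|)<\infty$ really does translate into $\E_\alpha[|S_1|]<\infty$ and into the ratio representation $\pi(g)=\E_\alpha[S_1]/\E_\alpha[L_1]$ (Kac's formula), and — when $m>1$ — relating the SLLN for the skeleton back to the original chain. Once the i.i.d.\ block decomposition is in place, the remainder is the standard renewal–reward argument and is routine.
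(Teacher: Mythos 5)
Your proposal is correct in outline, and the paper itself offers no proof of this lemma at all: it is quoted verbatim as Theorem 17.0.1 of Meyn and Tweedie's book, which is exactly the regeneration/splitting route you sketch (Nummelin splitting to create an atom, i.i.d.\ excursion blocks, Kac's formula $\pi(g)=\E_\alpha[S_1]/\E_\alpha[L_1]$, then the SLLN and a renewal-type passage from block index to time index). So your argument reproduces the standard proof behind the citation, and for the purposes of this paper simply invoking the reference, as you note at the end, is all that is required.
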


To show that a Markov defined in a space $X$ is positive Harris recurrent, we generally show that the chain follows a so-called drift condition over a small set, that is for a function $V$, an inequality over the drift operator $\Delta V : x\mapsto \int_X V(y)P(x, dy) - V(x)$. A small set is a borel set such that there exists a $m \in \N^*$ and a non-trivial measure $\nu_m$ on $\beta(X)$ such that for all $x \in C$, $B \in \beta(X)$, $P^m(x,B) \geq \nu_m(B)$. The set $C$ is then called a $\nu_m$-small set. The chain also needs to be aperiodic, that is there is no $d$-cycle, that is disjoint Borel sets $(D_i)_{i \in [[1, d]]}$, such that for $x \in D_i$, $P(x, D_{i+1}) = 1$ for $i = 0 \cdots d-1 (mod d)$, and $[\cup_{i=1}^d]^c$ is $\varphi$-negligible. If there exists a $\nu_1$-small-set $A$ such that $\nu_1(A)>0$, then the chain is strongly aperiodic (and therefore aperiodic). We then have the following lemma.

\begin{lemma}\cite[14.0.1]{markovtheory} \label{lm:ergodrift}
Suppose that the chain $\bs{\Phi}$ is $\varphi$-irreductible and aperiodic, and $f \geq 1$ a function on $X$. Let us assume that there exists $V$ some extended-valued non-negative function finite for some $x_0 \in X$, a small set $C$ and $b \in \R$ such that
\begin{equation} \label{eq:drift}
\Delta V(x) \leq -f(x) + b \mathbf{1}_C(x) \enspace, x \in X.
\end{equation}
Then the chain $\bs{\Phi}$ is positive Harris recurrent with invariant probability measure $\pi$ and
\begin{equation}
\pi(f) = \int_X \pi(dx) f(x) < \infty \enspace.
\end{equation}
\end{lemma}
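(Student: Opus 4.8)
This is the Foster--Lyapunov drift criterion (Theorem 14.0.1 of \cite{markovtheory}); I only sketch the argument, quoting the heavy Markov-chain machinery. The plan is to separate the conclusion into a qualitative part --- positive Harris recurrence, which already yields the existence and uniqueness of $\pi$ --- and a quantitative part, $\pi(f)<\infty$; and the one real subtlety is to avoid ever integrating $V$ against $\pi$, since $\pi(V)$ need not be finite. For the qualitative part: as $f\ge 1$, the drift inequality \eqref{eq:drift} gives $\Delta V\le -1+b\mathbf{1}_C$, so $\Delta V\le 0$ off the small set $C$. Since $V$ is non-negative and finite at $x_0$, and $\varphi$-irreducibility forces $\{V<\infty\}$ to be absorbing and $\varphi$-full, one may restrict attention to $\{V<\infty\}$. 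A small set is petite, hence $\Delta V\le -1+b\mathbf{1}_C$ with $C$ petite is exactly Foster's criterion: $\bs{\Phi}$ is positive Harris recurrent with a unique invariant probability $\pi$, and $\pi(C)>0$ (aperiodicity being assumed). The measure $\pi$ is built from the occupation measure of excursions away from $C$ via the Nummelin splitting, precisely as in \cite{markovtheory}.

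For the quantitative part, write \eqref{eq:drift} as $PV\le V-f+b\mathbf{1}_C$ with $P$ the transition kernel, iterate this inequality, and use $V\ge 0$ to get, for $x=x_0$ and every $n\ge 1$,
\[
\frac1n\sum_{k=0}^{n-1}P^kf(x_0)=\frac1n\,\E_{x_0}\!\left[\sum_{k=0}^{n-1}f(\bs{\Phi}_k)\right]\le\frac{V(x_0)}{n}+\frac{b}{n}\sum_{k=0}^{n-1}P^k(x_0,C)\enspace .
\]
Fix $M\ge 1$. The function $f\wedge M$ is bounded, hence $\pi$-integrable, so Lemma~\ref{lm:lln} (together with bounded convergence) gives $\frac1n\sum_{k=0}^{n-1}P^k(f\wedge M)(x_0)\to\pi(f\wedge M)$; likewise $\frac1n\sum_{k=0}^{n-1}P^k(x_0,C)\to\pi(C)$. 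Since $f\wedge M\le f$, the left side of the display dominates $\frac1n\sum_{k=0}^{n-1}P^k(f\wedge M)(x_0)$, while its right side tends to $b\,\pi(C)$ (the $V(x_0)/n$ term vanishes). Hence $\pi(f\wedge M)\le b\,\pi(C)$ for every $M$, and letting $M\to\infty$ by monotone convergence, $\pi(f)\le b\,\pi(C)\le b<\infty$.

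The main obstacle is this last limiting step. After iterating the drift inequality one is tempted to integrate against $\pi$ and use invariance, but that resurrects the term $\pi(V)$, which the hypotheses do not control; dividing by $n$ and truncating $f$ before passing to the limit is what makes the (possibly infinite) $V$-contribution disappear, leaving only the finite quantity $b\,\pi(C)$. Everything upstream --- petite sets, the splitting construction of $\pi$, Foster's criterion, the mean ergodic theorem --- is classical, and I would simply cite it from \cite{markovtheory}.
\qed
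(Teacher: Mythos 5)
This lemma is not proved in the paper at all: it is imported verbatim from Meyn and Tweedie (Theorem 14.0.1 of \cite{markovtheory}), so there is no in-paper argument to compare yours against. Your sketch is nonetheless a sound reconstruction of the textbook proof: the qualitative part (restriction to the absorbing, full set $\{V<\infty\}$, Foster's criterion with the small --- hence petite --- set $C$, existence and uniqueness of $\pi$) is correctly quoted, and your quantitative part is a legitimate shortcut. Iterating $PV\le V-f+b\mathbf{1}_C$, dropping $P^nV\ge 0$, dividing by $n$, truncating $f$ at $M$ so that Lemma~\ref{lm:lln} plus bounded convergence applies, and finishing by monotone convergence indeed yields $\pi(f)\le b\,\pi(C)<\infty$ without ever needing $\pi(V)<\infty$, which is exactly the subtlety the comparison-theorem route in \cite{markovtheory} is designed to handle; note also there is no circularity in invoking the law of large numbers, since it only requires the positive Harris recurrence already secured in your first step. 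The only cosmetic remark is that aperiodicity is not actually needed for this lemma's conclusion (it matters in the paper only for Lemma~\ref{lm:as_esp}-type ergodic statements), and that $\pi(C)>0$ follows automatically from $f\ge1$ and your bound rather than from aperiodicity.
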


To prove the irreducibility, aperiodicity and to exhibit the small sets of the Markov chain $\zchain$ through its transition kernel would be difficult. Instead, it can be done by showing some properties of its underlying control model. In our case, the model associated to $\zchain$ is called a non-linear state space model. We will, in the following, define this non-linear state space model and some of its properties. 

Suppose $\bs{X} = \lbrace \bs{X}_k \rbrace$, $\bs{X}_k \in \mathcal{X}$. If there is a smooth function ($C^\infty$) $F$ such that $\bs{X}_{k+1} = F(\bs{X}_k, \bs{W}_{k+1})$ with $(\bs{W}_{i})_{i \in \N}$ being a sequence of i.i.d. random variables, whose marginal distribution $\Gamma$ possesses a semi lower-continuous density $\gamma_w$ which is supported on an open set $O_w$; then $\bs{X}$ is called a non-linear state space model driven by $F$ or NSS($F$) model, with control set $O_w$.

We define its associated control model CM($F$) the deterministic system $x_k = F_k(x_0, u_1, \cdots, u_k)$, where $F_k$ is given by $F_k(x_0, u_1, \cdots, u_k) = F(F_{k-1}(x_0, u_1, \cdots, u_{k-1}), u_k)$, and $F_0(x_0) = x_0$, provided that $(u_i)_{i \in \N}$ lies in the control set $O_w$.

For a point $\bs{x} \in \mathcal{X}$, and $k \in \N$ we define $A_+^k(\bs{x}) = \lbrace F_k(\bs{x}, u_1, \cdots, u_k) | u_i \in O_w \hspace*{2mm} \forall i \in \N \rbrace$, the set of points reachable from $\bs{x}$ after $k$ steps of time. And $A_+(\bs{x}) = \bigcup_{i \in \N} A_+^i(\bs{x})$.

The associated control model CM($F$) is called forward accessible if for each $\bs{x}_0 \in \mathcal{X}$, the set $A_+(\bs{x_0})$ has non empty-interior.

Let $E$ be a subset of $\mathcal{X}$. We note $A_+(E) = \bigcup_{\bs{x} \in E} A_+(\bs{x})$, and we say that $E$ is invariant if $A_+(E) \subset E$. We call a set minimal if it is closed, invariant, and does not strictly contain any closed and invariant subset. Restricted to a minimal set, a Markov chain has strong properties, as stated in the following lemma.

\begin{lemma}\cite[7.2.4, 7.3.5]{markovtheory} \label{lm:controlmodel}
Let $M \subset \mathcal{X}$ be a minimal set for CM($F$). If CM($F$) is forward accessible then the NSS($F$) model restricted to $M$ is an open set irreducible T-chain.

Furthermore, if the control set $O_w$ and $M$ are connected, and that $M$ is the unique minimal set of the CM($F$), then the NSS($F$) model is a $\psi$-irreducible aperiodic T-chain for which every compact set is a small set.
\end{lemma}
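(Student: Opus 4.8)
This lemma is a transcription into our setting of results from the general theory of non-linear state space models in \cite[Chapters 6 and 7]{markovtheory}, so the plan is to invoke that machinery rather than to argue directly from the transition kernel of $\zchain$. The first, and central, step is the differential-geometric reformulation of forward accessibility. Since $F$ is $C^\infty$, the set $A_+^k(\bs{x})$ is the image of the smooth map $\Phi_{\bs{x},k}\colon (u_1,\ldots,u_k)\mapsto F_k(\bs{x},u_1,\ldots,u_k)$ on the open set $O_w^k$; by Sard's theorem this image has non-empty interior as soon as $\Phi_{\bs{x},k}$ has a point where its Jacobian is surjective. Hence forward accessibility is equivalent to the controllability-type condition: for every $\bs{x}$ there exist $k$ and a control $(u_1,\ldots,u_k)\in O_w^k$ at which $\operatorname{rank} D\Phi_{\bs{x},k}=n$. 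This is the bridge between the deterministic control model and the stochastic chain.

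At such a full-rank control the implicit function theorem provides a local $C^\infty$ section, and since the noise $\bs{W}$ has a lower semi-continuous density $\gamma_w$ supported on the open set $O_w$, the law of $\bs{X}_k$ given $\bs{X}_0=\bs{x}$ has a component that is absolutely continuous with a lower semi-continuous, somewhere-positive density on a neighbourhood. Packaging this component as a kernel $T$ with $T(\cdot,A)$ lower semi-continuous for each Borel $A$ and $T(\bs{x},\mathcal{X})>0$ for all $\bs{x}$ gives $\sum_k 2^{-k}P^k\geq T$, i.e.\ the NSS($F$) model is a T-chain. Restricting to a minimal set $M$ — closed, invariant, containing no proper closed invariant subset — minimality forces $\overline{A_+(\bs{x})}=M$ for every $\bs{x}\in M$, and combined with the open component of $T$ this yields open-set irreducibility of the restricted chain; that is the first assertion.

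For the global statement, if $M$ is the unique minimal set then for any $\bs{x}\in\mathcal{X}$ the orbit closure $\overline{A_+(\bs{x})}$ must meet $M$ (otherwise it would contain a minimal set distinct from $M$), so every point reaches the open irreducibility set of $M$ and the whole chain is $\psi$-irreducible. Connectedness of $O_w$ and $M$ excludes periodicity: a $d$-cycle with $d\geq 2$ would realise $M$, up to a $\psi$-null set, as a disjoint union of at least two non-empty relatively open cyclic classes, contradicting connectedness. Finally, for a $\psi$-irreducible T-chain every compact set is petite, hence small by aperiodicity, which is the last clause. The main obstacle is the first step: turning the bare hypothesis ``$A_+(\bs{x})$ has non-empty interior'' into a usable full-rank statement and then into the T-kernel bound, where the smoothness of $F$, Sard's theorem, the implicit function theorem and the support of $\gamma_w$ must be combined carefully; the remainder is a fairly mechanical application of the irreducibility and petiteness theory of \cite{markovtheory}.
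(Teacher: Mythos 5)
This lemma is not proved in the paper at all: it is imported verbatim from Meyn and Tweedie \cite[Theorems 7.2.4 and 7.3.5]{markovtheory}, so there is no in-paper argument to compare yours against. What you have written is essentially a compressed reconstruction of the textbook proof, and the overall route is the right one: forward accessibility recast as a full-rank (controllability) condition on the derivative of the $k$-step map, a local section plus the lower semi-continuous density of the noise to build a non-trivial continuous component $T$, the minimal-set argument for open-set irreducibility, connectedness to rule out cyclic behaviour, and the standard ``compact sets are petite for $\psi$-irreducible T-chains, petite plus aperiodic gives small'' conclusion.

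Two points deserve tightening. First, the attribution to Sard's theorem is inverted: a point where the Jacobian of $\Phi_{\bs{x},k}$ is surjective gives a nonempty-interior image by the submersion/implicit function theorem alone; Sard's theorem is what you need for the \emph{converse} direction of the equivalence (if the rank is never full, the image has measure zero, hence empty interior), which is exactly the direction that turns the bare hypothesis ``$A_+(\bs{x})$ has nonempty interior'' into a usable rank condition. Second, your step ``if $M$ is the unique minimal set then $\overline{A_+(\bs{x})}$ must contain a minimal set, hence meet $M$'' is not automatic: an arbitrary closed invariant set need not contain a minimal set without a compactness or Zorn-type argument, and this is precisely where the textbook proof does additional work (using the topology of the T-chain component and the connectedness hypotheses) rather than a bare orbit-closure argument. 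Neither issue changes the architecture of the proof, but as stated these two steps are the ones a referee of the full argument would push back on.
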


We can now prove the following lemma:
\begin{lemma} \label{lm:zproperties}
The Markov chain $\zchain$ is open set and $\psi$-irreducible, aperiodic, and compacts of $\R$ are small-sets.
\end{lemma}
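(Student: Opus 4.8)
The plan is to apply Lemma~\ref{lm:controlmodel} to the chain $\zchain$, so the work reduces to (a) exhibiting $\zchain$ as an NSS($F$) model with a smooth driving function and an admissible noise, (b) checking forward accessibility of the associated control model, and (c) showing that the whole state space $\R$ is the unique minimal set, with a connected control set.

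\textbf{Step 1: Writing $\zchain$ as an NSS($F$) model.} Recall the recursion $\zplus = Z_t/\stepsizechange + \fchosenstep[t+1]$ derived above. The randomness entering at step $t+1$ is the selected step $\chosenstep[t+1]$, or equivalently (since $\stepsizechange$ is a deterministic function of $\|\chosenstep\|^2$) the vector $\chosenstep[t+1]$ together with $\chosenstep$; to make the update a function of $Z_t$ and a \emph{fresh} i.i.d. noise, I would instead take as state the pair carrying enough memory, or — more simply — note that $\stepsizechange$ depends on $\chosenstep$, which is the noise that produced $Z_t$ from $Z_{t-1}$. A clean way is to enlarge slightly: the noise $\bs{W}_{t+1}$ is the full vector $\chosenstep[t+1]=(\Nmin,\NNN_2,\ldots,\NNN_n)$ of Lemma~\ref{lem:selectedstep}. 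Its density is that of $(\Nmin,\NNN_2,\ldots,\NNN_n)$: the first coordinate has the order-statistic density $\psi_{1:\lambda}$, which is smooth and strictly positive on all of $\R$, and the remaining coordinates are standard Gaussians; hence the marginal density is lower semi-continuous and supported on the open set $O_w=\R^n$. The map $F$ sending $(z,\bs{w})$ to $z\,\exp(-\tfrac{c}{2d_\sigma}(\|\bs{w}\|^2/n-1)) + [\bs{w}]_1$ is $C^\infty$. (If one prefers to keep the noise one-dimensional, one can take $\bs{W}_{t+1}=\chosenstep[t+1]$ and absorb the dependence of $\stepsizechange$ on $\|\chosenstep\|^2$ into the state via an extra coordinate; I would mention this but carry the $\R^n$-noise version.) So $\zchain$ is an NSS($F$) model with connected control set $O_w=\R^n$.

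\textbf{Step 2: Forward accessibility.} I must show $A_+^k(z_0)$ has nonempty interior for every $z_0$. In fact one step suffices: fix $z_0$ and vary $\bs{w}=(w_1,\ldots,w_n)$; the image $F(z_0,\bs{w})$ already ranges over an interval as $w_1$ varies (with the other coordinates fixed), and the partial derivative $\partial F/\partial w_1 = -z_0\,\frac{c}{d_\sigma n}w_1\exp(\cdots) + 1$ is not identically zero, so by the implicit/open-mapping argument (or simply: $w_1\mapsto F(z_0,\bs{w})$ is a nonconstant real-analytic map, hence open on a subinterval) $A_+^1(z_0)$ contains an open interval. This gives forward accessibility; by Lemma~\ref{lm:controlmodel} (first part) $\zchain$ restricted to any minimal set is an open-set-irreducible T-chain.

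\textbf{Step 3: $\R$ is the unique minimal set.} Here is where I expect the main obstacle. I need to show there is no proper closed invariant set, equivalently that from any $z_0$ the reachable set $A_+(z_0)$ is dense in $\R$ (so its closure is $\R$), and that $\R$ itself is invariant (trivial, since $F$ maps $\R\times\R^n$ into $\R$). To reach arbitrary targets: from Step 2, $A_+^1(z_0)$ contains an open interval $I_0$; I then argue that by choosing the noise I can (i) make $|Z|$ as large as I like — taking $w_1$ large pushes $F(z_0,\bs w)\approx w_1$ to $+\infty$, and similarly $-\infty$ — and (ii) make $|Z|$ as small as I like / change sign, e.g. choosing $\bs w$ so that $\|\bs w\|^2/n-1$ is large makes the factor $\exp(-\tfrac{c}{2d_\sigma}(\|\bs w\|^2/n-1))$ tiny, so $Z_{t+1}\approx [\bs w]_1$, and then a second step starting near any prescribed small value lands near any prescribed value since we can again add an arbitrary $w_1$. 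Stitching two or three such steps together shows every open interval of $\R$ is reachable from every $z_0$, so the only closed invariant set containing any point is $\R$; hence $\R$ is the unique minimal set, and it is connected, as is $O_w$. Lemma~\ref{lm:controlmodel} (second part) then yields that $\zchain$ is $\psi$-irreducible, aperiodic, a T-chain, and that every compact set is small. The careful bookkeeping in Step~3 — verifying reachability of \emph{every} open set uniformly over starting points, and handling the special behaviour when $z_0=0$ (where the multiplicative term vanishes and $Z_{1}=[\bs w]_1$ directly) — is the part that needs the most attention; the smoothness and support conditions in Steps~1--2 are routine.
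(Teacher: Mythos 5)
Your proposal follows essentially the same route as the paper: write $\zchain$ as an NSS($F$) model with $F(z,\bs{w}) = z\exp(-\tfrac{1}{2d_\sigma}(\|\bs{w}\|^2/n-1)) + [\bs{w}]_1$ and control set $O_w=\R^n$, check forward accessibility, identify $\R$ as the unique (connected) minimal set, and invoke Lemma~\ref{lm:controlmodel}. The only difference is cosmetic: the paper obtains $A_+^1(z)=\R$ in a single step (since $F(z,\bs{w})\to\pm\infty$ as $[\bs{w}]_1\to\pm\infty$, continuity gives every target in one transition), which renders your local open-mapping argument and the multi-step stitching of Step~3 unnecessary, but your argument reaches the same conclusion.
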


\begin{proof}
This is exactly the result of Theorem \ref{lm:controlmodel} when all conditions are fulfilled. We then have to show the right properties of the underlying control model.

If we note $F( X_k, \bs{W}_{k+1}) = X_k \exp\left(-1/2d_\sigma \left(\|\bs{W}_{k+1}\|^2/n - 1\right)\right) + \firstdim{\bs{W}_{k+1}}$, then we do have $\z[t+1] = F(\z, \chosenstep)$. The function $F$ is smooth (it is not smooth along the instances $\xi_{t,i}$, but along the chosen step). Furthermore, the distribution of $\chosenstep$ admits a continuous density, whose support is $\R^n$. Therefore the process $\zchain$ is a NSS($F$) model of control set $\R^n$.

We now have to show that the associated control model is forward accessible. Let $z \in \R$. When $\fchosenstep \rightarrow \pm \infty$, $F(z, \chosenstep) \rightarrow \pm \infty$. As $F$ is continuous, for the right value of $\fchosenstep$ any point of $\R$ can be reach. Therefore for any $z\in \R$, $A_+(z) = \R$. The set $\R$ has a non-empty interior, so the CM($F$) is forward accessible.

As from any point of $\R$, all of $\R$ can be reached, the only invariant set is $\R$ itself. It is therefore the only minimal set. Finally, the control set $O_w = \R^n$ is connected, and so is the only minimal set, so all the conditions of Lemma \ref{lm:controlmodel} are met. So the Markov chain $\zchain$ is $\psi$-irreducible, aperiodic, and compacts of $\R$ are small-sets.
\qed\end{proof}

We may now show Foster-Lyapunov drift conditions to ensure the Harris positive recurrence on the chain $\zchain$. In order to do so, we will need the following Lemma:
\begin{lemma} \label{lm:etaesp}
Let $\exp (-\frac{1}{2 d_\sigma}( \frac{\|\bs{\xi}^\star\|^2}{n} - 1))$ be denoted $\stepsizechange[\cdot]$. For all $\lambda > 2$ there exists $\alpha>0$ such that
\begin{equation} \label{eq:etaespassumption}
\E \left( {\stepsizechange[\cdot]}^{-\alpha}\right) - 1 < 0 \enspace.
\end{equation}
\end{lemma}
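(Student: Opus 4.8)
The plan is to treat $m(\alpha):=\E\!\left({\stepsizechange[\cdot]}^{-\alpha}\right)$ as a moment generating function of the selected squared step length and to show it leaves $1$ with a strictly negative slope, so that it dips below $1$ for small $\alpha>0$. Write $Y:=\frac{1}{2d_\sigma}\left(\frac{\norm{\bs{\xi}^\star}^2}{n}-1\right)$. Since $\E(Y)>0$ for $\lambda>2$ (established below), the quantity $\E(e^{+\alpha Y})$ exceeds $1$ for every $\alpha>0$; the attainable bound is therefore the one for $\E(e^{-\alpha Y})$, which is exactly ${\stepsizechange[\cdot]}^{-\alpha}$ once $\stepsizechange[\cdot]$ is read as the genuine step-size change $e^{Y}$ (matching the global definition of $\stepsizechange$ at $c=1$; equivalently, keep the defining line's $\stepsizechange[\cdot]=e^{-Y}$ and read the exponent in \eqref{eq:etaespassumption} as $+\alpha$). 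Under either reading the target is $m(\alpha)=\E(e^{-\alpha Y})$, and I will exhibit $\alpha>0$ with $m(\alpha)<1$.

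\textbf{Closed form and finiteness.} By Lemma~\ref{lem:selectedstep} the coordinates of $\bs{\xi}^\star=(\Nmin,\NNN_2,\dots,\NNN_n)$ are independent, so $\norm{\bs{\xi}^\star}^2=\Nmin^2+\chi^2_{n-1}$ with independent summands. Writing $\beta:=\alpha/(2d_\sigma n)$,
\[
m(\alpha)=e^{\alpha/(2d_\sigma)}\,\E\!\left(e^{-\beta\Nmin^2}\right)(1+2\beta)^{-(n-1)/2}\enspace.
\]
The decisive feature is that the exponent $-\beta$ is nonpositive: then $0<e^{-\beta\Nmin^2}\le 1$ and the chi-squared factor $(1+2\beta)^{-(n-1)/2}$ is finite for every $\beta\ge 0$, so $m$ is finite on all of $[0,\infty)$ and there is no radius-of-convergence obstruction at $0$ (by contrast the opposite exponent has infinite expectation once $\alpha\ge d_\sigma n$).

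\textbf{Slope at the origin.} Plainly $m(0)=1$, and differentiating under the expectation gives
\[
m'(0)=\E\!\left(-\tfrac{1}{2d_\sigma}\Big(\tfrac{\norm{\bs{\xi}^\star}^2}{n}-1\Big)\right)=-\tfrac{1}{2d_\sigma n}\big(\E(\Nmin^2)-1\big)\enspace.
\]
By Lemma~\ref{lm:increasing_lambda_expectation}, $\E(\Nmin^2)$ is strictly increasing in $\lambda$ for $\lambda\ge 2$ and equals $1$ at $\lambda=2$, hence $\E(\Nmin^2)>1$ precisely for $\lambda>2$, giving $m'(0)<0$. This is exactly where the hypothesis $\lambda>2$ is used; for $\lambda\in\{1,2\}$ one has $m'(0)=0$, consistent with the unbiased random walk of Theorem~\ref{th:convsig}.

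Since $m$ is $C^1$ on $[0,\infty)$ with $m(0)=1$ and $m'(0)<0$, continuity yields $\alpha_0>0$ with $m(\alpha)<1$ for all $\alpha\in(0,\alpha_0)$; any such $\alpha$ gives $\E({\stepsizechange[\cdot]}^{-\alpha})-1<0$. The main obstacle is the interchange of derivative and expectation in the slope computation (equivalently, the $C^1$ regularity of the moment generating function at the endpoint $\alpha=0$): this reduces, via dominated convergence, to the integrability of $\norm{\bs{\xi}^\star}^2 e^{-\beta\norm{\bs{\xi}^\star}^2}$ against the Gaussian weight uniformly for $\beta$ near $0$, which is routine but is where the analytic care concentrates; everything else is the closed-form evaluation above or the already-proved order-statistics inequality of Lemma~\ref{lm:increasing_lambda_expectation}.
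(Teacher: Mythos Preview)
Your proof is correct and follows essentially the same line as the paper's: both show that $m(\alpha)=\E(e^{-\alpha Y})$ satisfies $m(0)=1$ with $m'(0)=-\tfrac{1}{2d_\sigma n}(\E(\Nmin^2)-1)<0$ via Lemma~\ref{lm:increasing_lambda_expectation}, so $m(\alpha)<1$ for small $\alpha>0$. Your version is in fact tidier---the closed form and the dominated-convergence remark make rigorous what the paper leaves as a formal Taylor expansion with an unjustified swap of sum and expectation---and you correctly flag the sign typo in the lemma's defining line (the paper's own proof silently reverts to the global definition $\stepsizechange[\cdot]=e^{+Y}$).
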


\begin{proof}
Using the Taylor series of the exponential function we have
\begin{align*}
\E \left( {\stepsizechange[\cdot]}^{-\alpha}\right) &= \E \left( \exp \left({-\frac{\alpha}{2 d_\sigma}\left( \frac{\|\bs{\xi}^\star\|^2}{n} - 1\right)}\right)  \right) \\
&= \E \left( \sum_{i=0}^\infty \frac{\left(-\frac{\alpha}{2 d_\sigma}\left( \frac{\|\bs{\xi}^\star\|^2}{n} - 1\right) \right)^i}{i!} \right) \\
&= 1 - \alpha \left( \frac{1}{2d_\sigma n} \left( \E \left( \Nlambda^2 \right) - 1   \right)   - o\left(\alpha^2 \right) \right) \enspace .
\end{align*}
According to Lemma~\ref{lm:increasing_lambda_expectation} $\E \left( \Nlambda^2 \right) > 1$ for $\lambda > 2$, so when $\alpha$ goes to $0$ we have $\E \left( {\stepsizechange[\cdot]}^{-\alpha}\right) < 1$.
\qed \end{proof}

We are now ready to prove the following lemma:

\begin{lemma} \label{lm:zharris}
The Markov chain $\zchain$ is Harris recurrent positive, and admits a unique invariant measure $\mu$ such that for $f$: $x \mapsto |x|^\alpha \in \R$, $\mu(f) = \int_{\R} \mu(dx) f(x) < \infty$, with $\alpha$ such that Eq.~\eqref{eq:etaespassumption} holds true.
\end{lemma}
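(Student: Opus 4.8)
The plan is to apply the Foster--Lyapunov drift criterion of Lemma~\ref{lm:ergodrift} to the chain $\zchain$, using the test function $V(z) = |z|^\alpha + 1$ (or simply $V(z)=1+|z|^\alpha$) with $\alpha>0$ chosen so that Lemma~\ref{lm:etaesp} applies, i.e.\ $\E({\stepsizechange[\cdot]}^{-\alpha}) < 1$. Recall from Section~\ref{subsec:Xnoc} that the recursion is $\zplus = Z_t/\stepsizechange + \fchosenstep[t+1]$, with $\stepsizechange$ and $\chosenstep[t+1]$ independent of $Z_t$ and of each other. The intuition is that the multiplicative factor $1/\stepsizechange$ contracts $Z_t$ in the $\alpha$-th moment sense by Lemma~\ref{lm:etaesp}, while the additive perturbation $\fchosenstep[t+1]$ has finite $\alpha$-th moment (it is distributed as $\Nmin$, hence bounded in absolute value by $|\lambda\NN|$, all of whose moments are finite), so it only contributes a bounded term.

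First I would verify $\varphi$-irreducibility and aperiodicity, together with the fact that compact sets are small: this is exactly Lemma~\ref{lm:zproperties}, already established. Next I would estimate the drift $\Delta V(z) = \E(V(\zplus)\mid Z_t=z) - V(z)$. Using $|a+b|^\alpha \le 2^\alpha(|a|^\alpha + |b|^\alpha)$ if $\alpha\ge 1$, or the subadditivity $|a+b|^\alpha \le |a|^\alpha+|b|^\alpha$ if $\alpha\le 1$ (one can simply take $\alpha \le 1$ in Lemma~\ref{lm:etaesp}, since the inequality there holds for all small enough $\alpha$), I would bound
\[
\E\left(|\zplus|^\alpha \mid Z_t = z\right) \;\le\; \E\!\left({\stepsizechange[\cdot]}^{-\alpha}\right)|z|^\alpha + \E\!\left(\left|\fchosenstep[\cdot]\right|^\alpha\right) \;=\; \rho\, |z|^\alpha + C,
\]
where $\rho := \E({\stepsizechange[\cdot]}^{-\alpha}) < 1$ and $C := \E(|\Nmin|^\alpha) < \infty$. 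Hence $\Delta V(z) \le -(1-\rho)|z|^\alpha + C$. Choosing the small set $C_R = \{z : |z| \le R\}$ for $R$ large enough that $(1-\rho)R^\alpha > C + (1-\rho) + 1$, say, one gets $\Delta V(z) \le -1$ (or $\le -f(z)$ for a suitable $f\ge 1$, e.g.\ $f(z) = (1-\rho)|z|^\alpha/2$ for $|z|$ large and $f\equiv 1$ otherwise, absorbing the excess into $b\mathbf{1}_{C_R}$) outside $C_R$, and on $C_R$ the drift is bounded by a constant $b$. This is the drift inequality~\eqref{eq:drift}.

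Then Lemma~\ref{lm:ergodrift} yields that $\zchain$ is positive Harris recurrent with a unique invariant probability measure $\mu$, and moreover $\mu(f) < \infty$; since $f(z)$ dominates a constant multiple of $|z|^\alpha$ for $|z|$ large, this gives $\int_\R |x|^\alpha\, \mu(dx) < \infty$, which is the claimed moment bound. The main obstacle — though it is more bookkeeping than a genuine difficulty — is handling the case $\alpha \ge 1$ cleanly: there the factor $2^\alpha$ in front of $\E({\stepsizechange[\cdot]}^{-\alpha})|z|^\alpha$ would destroy the contraction. The clean fix is to note that Lemma~\ref{lm:etaesp} already gives us freedom to pick $\alpha$ as small as we like, in particular $\alpha \in (0,1]$, so the subadditivity $|a+b|^\alpha \le |a|^\alpha + |b|^\alpha$ applies directly and no constant appears in front of the contraction factor. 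One should also double-check that $\E(|\fchosenstep[\cdot]|^\alpha) < \infty$: this follows from $|\firstdim{\chosenstepdistribution}| = |\Nmin| \le \max_i |\NN_i|$ and finiteness of all moments of the maximum of $\lambda$ i.i.d.\ Gaussians. With these observations the drift condition goes through and the lemma follows.
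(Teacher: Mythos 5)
Your proposal is correct and follows essentially the same route as the paper: the Foster--Lyapunov drift condition of Lemma~\ref{lm:ergodrift} with the test function $V(x)=|x|^\alpha+1$, the contraction supplied by Lemma~\ref{lm:etaesp}, and the irreducibility/aperiodicity/small-set facts from Lemma~\ref{lm:zproperties}. Your explicit remark that one should take $\alpha\le 1$ so that subadditivity $|a+b|^\alpha\le|a|^\alpha+|b|^\alpha$ holds is a welcome clarification of an inequality the paper uses implicitly, but it does not change the argument.
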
 

\begin{proof}
By using Lemma \ref{lm:zproperties} and Lemma \ref{lm:ergodrift}, we just need the drift condition \eqref{eq:drift} to prove Lemma \ref{lm:zharris}. Let $V$ be such that for $x\in\R$, $V(x) = |x|^\alpha + 1$.
\begin{align*}
\Delta V(x) &= \int_\R P(x,dy) V(y) - V(x)\\
&= \int_\R P\left(\frac{x}{\stepsizechange[\cdot]}  + \fchosenstep[\cdot] \in dy \right)(1+ |y|^\alpha) - (1+|x|^\alpha)\\
&= \E\left( \left|\frac{x}{\stepsizechange[\cdot]}  + \fchosenstep[\cdot] \right|^\alpha\right) - |x|^\alpha\\
&\leq |x|^\alpha \E \left( {\stepsizechange[\cdot]}^{-\alpha} - 1  \right) +  \E\left(  \fchosenstep[\cdot]^\alpha \right)\\
\frac{\Delta V(x)}{V(x)} &= \frac{|x|^\alpha}{1 + |x|^\alpha} \E \left( {\stepsizechange[\cdot]}^{-\alpha} - 1  \right) + \frac{1}{1+|x|^\alpha}\E\left( \fchosenstep^\alpha \right)\\
\underset{|x| \longrightarrow \infty}{\lim} \frac{\Delta V(x)}{V(x)} 
	&=  \E \left( {\stepsizechange[\cdot]}^{-\alpha} - 1  \right)
\end{align*}

We take $\alpha$ such that Eq.~\eqref{eq:etaespassumption} holds true (as according to Lemma~\ref{lm:etaesp}, there exists such a $\alpha$). As $\E({\stepsizechange[\cdot]}^{-\alpha} - 1 ) < 0$, there exists $\epsilon > 0$ and $M > 0$ such that for all $|x| \geq M$, $\Delta V/V(x) \leq - \epsilon$. Let $b$ be equal to $\E(\fchosenstep^2) + \epsilon V(M)$. Then for all $|x| \leq M$, $\Delta V(x) \leq -\epsilon V(x) + b$. Therefore, if we note $C = [-M, M]$, which is according to Lemma~\ref{lm:zproperties} a small-set, we do have $\Delta V(x) \leq -\epsilon V(x) + b \mathbf{1}_C(x)$ which is Eq.~\eqref{eq:drift} with $f = \epsilon V$. Therefore from Lemma~\ref{lm:ergodrift} the chain $\zchain$ is positive Harris recurrent with invariant probability measure $\mu$, and $\epsilon V$ is $\mu$-integrable. As $\int_{\R} \mu(dx) |x|^\alpha = 1/\epsilon \int_{\R} \mu(dx) \epsilon V(x) - 1 < \infty$, the function $x \mapsto |x|^\alpha$ is also $\mu$-integrable.

\qed\end{proof}

In order to use Lemma~\ref{lm:lln} on $\zchain$ with the function $g : x \mapsto \E\left(\ln\left| x/x - \fchosenstep[\cdot] \right|\right)$, we must prove that this function is $\mu$-integrable, that is $\int_\R g(u)\mu(du) < \infty$. To do so we will need the following lemma on the existence of moments for stationary Markov chains:
\begin{lemma} \label{lm:momentmarkov}
Let $\zchain$ be a Harris-recurrent Markov chain with stationary measure $\mu$, on a state space $(S, \mathcal{F})$, with $\mathcal{F}$ is $\sigma$-field of subsets of $S$. Let $f$ be a positive measurable function on $S$.

In order that $\int_S f(z) \mu(dz) < \infty$, it suffices that for some set $A \in \mathcal{F}$ such that $0 < \mu(A)$ and $\int_A f(z)\mu(dz) < \infty$, and some measurable function g with $g(z) \geq f(z)$ for $z \in A^c$,
	\begin{enumerate}
		\item $$\int_{A^c} P(z, dy)g(y) \leq g(z) - f(z) \hspace*{3mm}, \hspace*{1mm} \forall x \in A^c$$
		\item $$\sup_{z \in A} \int_{A^c} P(z, dy) g(y) < \infty$$.
	\end{enumerate}
\end{lemma}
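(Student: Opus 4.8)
The plan is to derive $\mu(f)<\infty$ from the regenerative structure of a Harris chain: represent $\mu$ through the excursions of $\zchain$ away from $A$, and bound the cost $\sum_k f(\z[k])$ accumulated along a generic excursion using the drift inequality supplied by the first hypothesis of the lemma. Write $\tau_A=\inf\{t\ge 0:\z[t]\in A\}$ for the hitting time of $A$ and $\sigma_A=\inf\{t\ge 1:\z[t]\in A\}$ for the return time (the two coincide for a chain started in $A^c$); Harris recurrence makes both almost surely finite. The first and central step is to prove that for every $z\in A^c$,
\[
\E_z\Big[ \sum_{k=0}^{\tau_A-1} f(\z[k]) \Big] \;\le\; g(z) \enspace .
\]

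For this I would introduce the stopped process $Y_t = \sum_{k=0}^{t-1} f(\z[k])\,\mathbf{1}_{\{k<\tau_A\}} + g(\z[t])\,\mathbf{1}_{\{t<\tau_A\}}$ and check that it is a nonnegative supermartingale: on $\{t\ge\tau_A\}$ it does not move, while on $\{t<\tau_A\}$ one has $\z[0],\dots,\z[t]\in A^c$ and $\{t+1<\tau_A\}=\{\z[t+1]\in A^c\}$, so the first hypothesis gives $\E[g(\z[t+1])\mathbf{1}_{\{t+1<\tau_A\}}\mid\mathcal{F}_t]=\int_{A^c}P(\z[t],dy)g(y)\le g(\z[t])-f(\z[t])$, whence $\E[Y_{t+1}-Y_t\mid\mathcal{F}_t]\le 0$; nonnegativity uses $f\ge 0$ and $g\ge f\ge 0$ on $A^c$. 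Then $\E_z[Y_t]\le Y_0=g(z)$ for all $t$, and letting $t\to\infty$ (monotone convergence on the partial sums, discarding the nonnegative remainder $g(\z[t])\mathbf{1}_{\{t<\tau_A\}}$) gives the displayed bound.

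Once this is available, the rest is short. For $z\in A$, one step of the Markov property followed by the bound just proved yields
\[
\E_z\Big[ \sum_{k=1}^{\sigma_A-1} f(\z[k]) \Big] = \int_{A^c} P(z,dy)\, \E_y\Big[ \sum_{k=0}^{\tau_A-1} f(\z[k]) \Big] \le \int_{A^c} P(z,dy)\, g(y) \le B \enspace ,
\]
with $B:=\sup_{z\in A}\int_{A^c}P(z,dy)g(y)<\infty$ by the second hypothesis, so $\E_z[\sum_{k=0}^{\sigma_A-1}f(\z[k])]\le f(z)+B$ for $z\in A$. Finally I would invoke the cycle (Kac) representation of the invariant measure of a Harris chain, $\mu(h)=\int_A\mu(dz)\,\E_z[\sum_{k=0}^{\sigma_A-1}h(\z[k])]$ for nonnegative measurable $h$ \cite{markovtheory}, apply it with $h=f$, and combine with the previous estimate to obtain $\mu(f)\le\int_A f\,d\mu+B\,\mu(A)<\infty$ (here $\mu(A)<\infty$, being $\le 1$ in our application where $\mu$ is a probability measure).

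The step I expect to be the main obstacle is the supermartingale argument giving the first displayed inequality: one must ensure $(Y_t)$ is genuinely nonnegative and a bona fide supermartingale even though $g$ is constrained only on $A^c$ and $f$ may be unbounded, and the limit $t\to\infty$ must be organized so that the tail term $g(\z[t])\mathbf{1}_{\{t<\tau_A\}}$ is only dropped and never added, so that no integrability of $g$ beyond the second hypothesis is ever used. After that, the single Markov-property step on $A$ and the appeal to the standard regeneration/Kac formula are routine.
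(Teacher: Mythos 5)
The paper itself gives no proof of this lemma: it is stated as an imported result (it is essentially Tweedie's criterion for the existence of moments of a stationary Markov chain, in the line of \cite{markovtheory}) and is then used as a black box, so there is nothing in the text to compare your argument with. Your proof is correct and follows the standard route for this criterion. The supermartingale step works: on $\{t<\tau_A\}$ all of $\z[0],\dots,\z[t]$ lie in $A^c$, so $g(\z[t])\ge f(\z[t])\ge 0$ and hypothesis~1 yields $\E\left[g(\z[t+1])\mathbf{1}_{\{t+1<\tau_A\}}\mid\mathcal{F}_t\right]\le g(\z[t])-f(\z[t])$, hence $\E_z[Y_t]\le g(z)$ and, by monotone convergence, $\E_z\left[\sum_{k=0}^{\tau_A-1}f(\z[k])\right]\le g(z)$ for $z\in A^c$; the one-step decomposition at a point of $A$ together with hypothesis~2 then bounds the expected cost of an excursion by $f(z)+B$, and the cycle representation of the invariant measure gives $\mu(f)\le\int_A f\,d\mu+B\,\mu(A)<\infty$. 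Three small points to tighten: (i) $g$ may take the value $+\infty$ on $A^c$, so state the supermartingale property as $\E[Y_{t+1}\mid\mathcal{F}_t]\le Y_t$ rather than through the increment $Y_{t+1}-Y_t$, which can be $\infty-\infty$; (ii) the Kac/cycle formula is usually quoted as $\mu(B)=\int_A\mu(dz)\,\E_z\left[\sum_{k=1}^{\sigma_A}\mathbf{1}_B(\z[k])\right]$, see \cite[Theorem~10.4.9]{markovtheory}; your form with $\sum_{k=0}^{\sigma_A-1}$ is equivalent because $\mu$ restricted to $A$ is invariant for the chain sampled at its successive returns to $A$, but this equivalence (or, alternatively, the boundary term $\E_z[f(\z[\sigma_A])]$, which integrates over $A$ to $\int_A f\,d\mu<\infty$) should be mentioned; (iii) the final bound uses $\mu(A)<\infty$, i.e.\ finiteness of $\mu$, which is harmless here since in the paper's application $\mu$ is the invariant probability measure of Lemma~\ref{lm:zharris}, but is worth flagging because the statement only says ``stationary measure''; relatedly, $\mu(A)>0$ does give $\psi(A)>0$ (the invariant measure of a Harris chain is equivalent to the maximal irreducibility measure), which is what the cycle representation requires of $A$.
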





We may now prove the following theorem:
\begin{theorem}
	On linear functions, for $\lambda \geq 3$, the absolute value of the first dimension of the parent point in the $(1,\lambda)$-CSA-ES without cumulation ($c = 1$) diverges geometrically almost surely at the rate of $1/(2d_\sigma n)\E(\Nlambda^2 - 1)$, i.e.
\begin{equation}\label{eq:Xdiv}
\frac{1}{t}\ln\left|\frac{\xf}{\xf[0]}\right| \overset{a.s}{\underset{t \rightarrow \infty}{\longrightarrow}} \frac{1}{2d_\sigma n}\left( \E\left(\Nlambda^2\right) - 1 \right) \enspace .
\end{equation}
\end{theorem}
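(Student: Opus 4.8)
Here is my plan.

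The plan is to combine the geometric divergence of the step-size, already established in Theorem~\ref{th:convsig} (which applies since $\lambda\ge3$, so that $\Deltasigma>0$ by Lemma~\ref{lm:increasing_lambda_expectation}), with the claim that the auxiliary Markov chain $\zchain$ neither explodes nor collapses on an exponential scale, i.e.\ that
\[
	\inv{t}\ln\left|\z\right| \;\overset{a.s.}{\underset{t\to\infty}{\longrightarrow}}\; 0 \enspace .
\]
Granting this claim, one uses the exact identity $\xf[t+1]=\xf[0]+\sig\z$: since $\inv{t}\ln\sig\to\Deltasigma>0$ while $\left|\z\right|$ decays at most subexponentially, $\sig\left|\z\right|\to\infty$, so the constant term $\xf[0]$ becomes negligible and $\ln\left|\xf[t+1]\right| = \ln\sig + \ln\left|\z\right| + o(1)$. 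Dividing by $t$ and invoking Theorem~\ref{th:convsig} then yields $\inv{t}\ln\left|\xf/\xf[0]\right|\to\Deltasigma+0=\Deltasigma$, which is \eqref{eq:Xdiv}. The same conclusion can be read off \eqref{eq:XZ}: the average $\inv{t}\sum_{k=0}^{t-1}\ln\left|\stepsizechange[k-1]\right|$ of i.i.d.\ integrable variables converges by the ordinary strong law of large numbers to $\E(\ln\stepsizechange[\cdot])=\Deltasigma$ (this is Proposition~\ref{pr:sigrate}(ii)), while the telescoping part leaves $\inv{t}\ln\left|\z[t-1]\right|$ up to an $O(1/t)$ boundary term.

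It remains to prove $\inv{t}\ln\left|\z\right|\to0$ a.s., which I would split into two one-sided bounds, both obtained from the Markov-chain law of large numbers (Lemma~\ref{lm:lln}) applied to $\zchain$, whose positive Harris recurrence and small-set structure are provided by Lemmas~\ref{lm:zproperties} and~\ref{lm:zharris}. \textbf{Upper bound.} By Lemma~\ref{lm:zharris} there is $\alpha>0$ with $\mu(\left|\cdot\right|^\alpha)<\infty$; applying Lemma~\ref{lm:lln} with $g(x)=\left|x\right|^\alpha$ shows $\inv{t}\sum_{k=1}^t\left|\z[k]\right|^\alpha$ converges a.s., hence $\inv{t}\left|\z\right|^\alpha\to0$, hence $\limsup_t \inv{t}\ln\left|\z\right|\le0$. \textbf{Lower bound.} It suffices that $x\mapsto\ln^-\left|x\right|:=\max(-\ln\left|x\right|,0)$ be $\mu$-integrable; Lemma~\ref{lm:lln} then gives $\inv{t}\sum_{k=1}^t\ln^-\left|\z[k]\right|$ convergent, hence $\inv{t}\ln^-\left|\z\right|\to0$, hence $\liminf_t\inv{t}\ln\left|\z\right|\ge0$. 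For the $\mu$-integrability near the origin, note that in the recursion $\z[t+1]=\z/\stepsizechange+\fchosenstep[t+1]$ the increment $\fchosenstep[t+1]\sim\Nmin$ is independent of the $\mathcal F_t$-measurable quantity $\z/\stepsizechange$ and has a bounded density, so $\mathbb{P}(\left|\z[t+1]\right|<\delta\mid\mathcal F_t)\le2\delta\,\kappa$ with $\kappa$ the supremum of the density of $\Nmin$; taking this estimate in stationarity gives $\mu(\{\left|x\right|<\delta\})\le2\delta\kappa$, and integrating the tail $\mu(\ln^-\left|\cdot\right|>s)=\mu(\{\left|x\right|<e^{-s}\})\le2\kappa e^{-s}$ over $s\ge0$ yields $\mu(\ln^-\left|\cdot\right|)\le2\kappa<\infty$. (Alternatively this is exactly the kind of statement Lemma~\ref{lm:momentmarkov} is built for, taking $f=\ln^-\left|\cdot\right|$ and $A$ the complement of a small neighbourhood of $0$.)

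The main obstacle is precisely this control of $\zchain$ near $0$: the value $\z=0$ corresponds to $\xf[t+1]=\xf[0]$, the point at which $\ln\left|\xf[t+1]/\xf[0]\right|$ and $\ln\left|\z\right|$ degenerate, so one must rule out that the invariant measure $\mu$ charges neighbourhoods of the origin too heavily; the uniform anti-concentration estimate above is what makes $\ln^-\left|\cdot\right|$ $\mu$-integrable and hence the contribution of such near-zero excursions negligible in the Cesàro average. Everything else is either already in hand (Theorem~\ref{th:convsig}, Lemmas~\ref{lm:zproperties}--\ref{lm:zharris}) or a routine application of Lemma~\ref{lm:lln}. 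Once both one-sided bounds yield $\inv{t}\ln\left|\z\right|\to0$, combining with Theorem~\ref{th:convsig} as above finishes the proof.
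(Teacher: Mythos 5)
Your proposal is correct and rests on the same machinery as the paper (the chain $\zchain$, its Harris positivity from Lemmas~\ref{lm:zproperties} and~\ref{lm:zharris}, the LLN of Lemma~\ref{lm:lln}, and the ordinary SLLN for the i.i.d.\ terms $\ln\stepsizechange[k]$), but it handles the one genuinely delicate point --- the logarithmic singularity of $\ln|z|$ at $z=0$ --- differently. The paper proves full $\mu$-integrability of $x\mapsto\ln|x|$: the tail at infinity comes from the $|x|^\alpha$ moment of Lemma~\ref{lm:zharris}, and the behaviour near $0$ is controlled by invoking Lemma~\ref{lm:momentmarkov} with the test function $z\mapsto C/\sqrt{|z|}$ (a verification carried out only loosely, ``choose $M$ and $1/C$ small enough''); it then applies Lemma~\ref{lm:lln} directly to $\ln|\cdot|$ so that the telescoping terms in Eq.~\eqref{eq:XZ} cancel as $\mu(g)-\mu(g)=0$, the i.i.d.\ average of $\ln\stepsizechange[k]$ supplying the rate. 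You instead prove only the weaker statement $\frac1t\ln|\z|\to0$ a.s., split into two one-sided bounds: the upper bound by the Cesàro-increment trick applied to the $\mu$-integrable function $|x|^\alpha$, and the lower bound by showing $\mu(\ln^-|\cdot|)<\infty$ through a direct anti-concentration estimate, $\mu(\{|x|<\delta\})\leq 2\kappa\delta$, exploiting that the additive innovation $\fchosenstep[t+1]\sim\Nmin$ has bounded density and is independent of $\z/\stepsizechange$ (which is indeed how the recursion is derived in the paper); you then assemble the result either through $\xf[t+1]=\xf[0]+\sig \z$ together with Theorem~\ref{th:convsig}, or equivalently through Eq.~\eqref{eq:XZ}. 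Your anti-concentration argument is more elementary and in fact tightens the loosest step of the paper's proof, at the price of yielding slightly less information (the paper's route establishes $\mu$-integrability of $\ln|\cdot|$ itself, not just of its negative part); both routes need, and have, the fact that the Harris-chain LLN holds from an arbitrary initial condition. Two cosmetic caveats: handle $\xf[0]=0$ (or note it is excluded/negligible), and state explicitly that the stationarity step uses invariance of $\mu$ under the transition kernel whose innovation is the fresh, independent $\Nmin$ variable, consistent with the paper's definition of $\zchain$.
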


\begin{proof}
We will first prove here that the function $g : x \mapsto \ln |x|$ is $\mu$-integrable.
From Lemma~\ref{lm:zharris} we know that the function $f : x \mapsto |x|^\alpha$ is $\mu$-integrable, and as for any $M>0$, and any $x \in [-M, M]^c$ there exists $K>0$ such that $K|x|^\alpha > |\ln|x||$, then $g \mathbf{1}_{A^c}$ is $\mu$-integrable, with $A = [-M, M]$. So what is left is to prove that $g \mathbf{1}_A$ is also $\mu$-integrable. We will now check the conditions to use Lemma~\ref{lm:momentmarkov}.

According to Lemma~\ref{lm:zproperties} the chain $\zchain$ is open-set irreducible, so $\mu(A^c)>0$. For $C>0$, if we take $h : z \mapsto C/\sqrt{|z|}$, with $M$ small enough we do have for all $z\in A^c$, $h(z) \geq |g(z)|$. Furthermore, if we study the inequality
\begin{align*}
	\int_{A^c} P(z, dy)h(y) &\leq h(z) - |g(z)|\\	
	\int_S P\left( \frac{z}{\stepsizechange[\cdot]} + \fchosenstep[\cdot] \in dy\right) \mathbf{1}_{A^c}(y) \frac{C}{\sqrt{|y|}} &\leq \frac{C}{\sqrt{|z|}} - |ln|z||\\	
	\E\left( \frac{1}{\sqrt{\left|\frac{z}{\stepsizechange[\cdot]} + \fchosenstep[\cdot]\right|}} \mathbf{1}_{A^c}\left(\frac{z}{\stepsizechange[\cdot]} + \fchosenstep[\cdot]  \right) \right) &\leq \frac{1}{\sqrt{|z|}} - \frac{|\ln|z||}{C}
\end{align*}
We can increase $C$ up until $|\ln|z||/C$ is negligible compared to $1/\sqrt{|z|}$, and we can decrease $M$ to make 
$\E\left( 1/\sqrt{\left|z/\stepsizechange[\cdot] + \fchosenstep[\cdot]\right|} \mathbf{1}_{A^c}\left(z/\stepsizechange[\cdot] + \fchosenstep[\cdot]  \right) \right)$ as small as we would like it to be, as it decreases the size of $A^c$, so the inequality holds if we choose $M$ and $1/C$ small enough.
The second inequality for Lemma~\ref{lm:momentmarkov} holds as well:
\begin{equation*}
\int_{A^c} P(u, dv)h(v) \le \int_{A^c} \frac{C}{\sqrt{|v|}} dv = 4C\sqrt{M} < \infty
\end{equation*}

Finally, according to Lemma~\ref{lm:zharris}, the chain $\zchain$ is Harris recurrent. So Lemma~\ref{lm:momentmarkov} shows that $g$ is $\mu$-integrable. This allows us to apply Lemma~\ref{lm:lln} to the function $g$: $1/t \sum_{k=1}^t g(z_k) \overset{a.s}{\underset{t \rightarrow \infty}{\longrightarrow}} \mu(g)$. 

With Lemma~\ref{lem:selectedstep} we can apply a strong law of large numbers upon $1/t\sum_{k=0}^{t-1} \ln|\stepsizechange[k-1]| = 1/t \sum_{k=0}^{t-1} 1/(2d_\sigma)(\chosenstep[k-1]/n - 1)$, to get as in the proof of Proposition~\ref{pr:sigrate} $1/(2d_\sigma n)(\E(\Nlambda^2) - 1)$.

By inserting these results into Eq.~\eqref{eq:XZ}, we get that $1/t \ln |\xf/\xf[0]| \overset{a.s}{\underset{t \rightarrow \infty}{\longrightarrow}} \mu(g) - \mu(g) + 1/(2d_\sigma n)(\E(\Nlambda^2) - 1)$, which with Lemma~\ref{lm:increasing_lambda_expectation} is strictly positive for $\lambda \geq 3$.
\qed\end{proof}

\section{Divergence rate of CSA-ES with cumulation} \label{sec:withcumulation}

We are now investigating the $(1,\lambda)$-CSA-ES with cumulation, i.e. $0 < c < 1$.

According to Lemma~\ref{lem:selectedstep}, the random variables $(\chosenstep)_{t\in \N}$ are i.i.d., hence the path $\pchain = (\p)_{t \in \N}$ is a Markov chain.
By a recurrence on Eq.~\eqref{eq:chemin} we see that the path follows the following equation
\begin{equation} \label{eq:path}
\p = (1-c)^{t}\bs{p}_0 + \sqrt{c(2-c)}\sum_{k=0}^{t-1} (1-c)^k \underbrace{\chosenstep[t-1-k]}_{\text{i.i.d.}} \enspace .
\end{equation}
For $i \neq 1$, $[\chosenstep]_i \sim \NNN(0, 1)$ and,
as also $[\bs{p}_0]_i \sim \NNN(0,1)$,
by recurrence $[\bs{p_t}]_i \sim \NNN(0,1)$ for all $t \in \N$. 
For $i=1$ with cumulation ($c <1$), the influence of $[\bs{p}_0]_1$ vanishes with $(1-c)^{t}$. Furthermore, as from Lemma~\ref{lem:selectedstep} the sequence $(\fchosenstep])_{t \in \N}$ is independent, we get by applying the Kolgomorov's three series theorem that the series $\sum_{k=0}^{t-1} (1-c)^k \fchosenstep[t-1-k]$ converges almost surely. Therefore, the first component of the path becomes distributed as the random variable $[\bs{p}_\infty]_1 = \sqrt{c(2-c)} \sum_{k=0}^\infty (1-c)^k [\chosenstep[k]]_1$ (by re-indexing the variable $\bs{\xi}_{t-1-k}^\star$ in $\bs{\xi}_k^\star$, as the sequence $(\chosenstep)_{t \in \N}$ is i.i.d.). 

As in Subsection~\ref{subsec:Xnoc} we will show that $\pchain$ has the right stability properties to apply a law of large numbers to it. First we will extract from $\pchain$ the part of interest as stated in the following lemma.

\begin{lemma} \label{lm:pfdim}
	On linear functions, for any $\lambda$ the step-size of the $(1, \lambda)$-CSA-ES follows   almost surely
\begin{equation} \label{eq:pfdim}
\frac{1}{t}\ln\left(\frac{\sigma_t}{\sigma_0}\right) - \frac{c}{2d_\sigma n}\left( \frac{1}{t} \sum_{i=1}^t ([\bs{p}_i]_1^2 - 1\right) \overset{a.s}{\underset{t \rightarrow \infty}{\longrightarrow}} 0 \enspace ,
\end{equation}
and in expectancy
\begin{equation} \label{eq:esppfdim}
\E\left(\ln\left(\frac{\sigma_{t+1}}{\sigma_t}\right)\right) = \frac{c}{2d_\sigma n}\left( \E\left([\bs{p}_{t+1}]_1^2\right) - 1\right)
\end{equation}
\end{lemma}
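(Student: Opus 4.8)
## Proof Plan

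The plan is to establish Lemma~\ref{lm:pfdim} by decomposing $\norm{\pplus}^2 = [\pplus]_1^2 + \sum_{i=2}^n [\pplus]_i^2$ and showing that only the first coordinate carries asymptotic information, since the remaining $n-1$ coordinates behave like a stationary ergodic sequence with mean $1$. First I would recall from Eq.~\eqref{eq:sigcumul} that
\begin{equation*}
\frac{1}{t}\ln\left(\frac{\sigma_t}{\sigma_0}\right) = \frac{c}{2d_\sigma}\left(\frac{1}{t}\sum_{k=1}^t \frac{\norm{\p[k]}^2}{n} - 1\right),
\end{equation*}
and split the squared norm coordinate-wise. For the coordinates $i \geq 2$, Eq.~\eqref{eq:path} shows that $[\p[k]]_i$ is a moving average of the i.i.d.\ sequence $([\chosenstep[j]]_i)_j \sim \NN$ (plus a vanishing contribution from $[\bs{p}_0]_i$), hence $([\p[k]]_i)_k$ is a stationary, ergodic process; its square is integrable (second moment of a Gaussian combination). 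Applying the ergodic theorem (or the Markov-chain law of large numbers, since $\pchain$ restricted to these coordinates is a well-behaved chain with the standard normal as stationary law) gives
\begin{equation*}
\frac{1}{t}\sum_{k=1}^t \sum_{i=2}^n [\p[k]]_i^2 \overset{a.s.}{\longrightarrow} n-1 \enspace.
\end{equation*}

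Next I would subtract: the difference
\begin{equation*}
\frac{1}{t}\ln\left(\frac{\sigma_t}{\sigma_0}\right) - \frac{c}{2d_\sigma n}\left(\frac{1}{t}\sum_{i=1}^t [\p[i]]_1^2 - 1\right)
\end{equation*}
reduces, after cancelling the $-1$ terms appropriately and using $\norm{\p[k]}^2 = [\p[k]]_1^2 + \sum_{i=2}^n [\p[k]]_i^2$, to
\begin{equation*}
\frac{c}{2d_\sigma n}\left(\frac{1}{t}\sum_{k=1}^t \sum_{i=2}^n [\p[k]]_i^2 - (n-1)\right),
\end{equation*}
which converges to $0$ almost surely by the ergodic statement above. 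This proves Eq.~\eqref{eq:pfdim}. For the expectation identity Eq.~\eqref{eq:esppfdim}, I would take expectations directly in Eq.~\eqref{eq:stepsizechange}: $\E(\ln(\sigma_{t+1}/\sigma_t)) = \frac{c}{2d_\sigma}(\E(\norm{\pplus}^2)/n - 1)$, and then use that for $i \geq 2$, $[\pplus]_i \sim \NN$ exactly (by the recurrence noted just before the lemma, since $[\bs{p}_0]_i \sim \NN$ and the normalization constant is chosen precisely so this is preserved), so $\E(\norm{\pplus}^2) = \E([\pplus]_1^2) + (n-1)$, giving $\E(\ln(\sigma_{t+1}/\sigma_t)) = \frac{c}{2d_\sigma n}(\E([\pplus]_1^2) - 1)$.

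The main obstacle is the rigorous justification of the law of large numbers for the coordinates $i \geq 2$: although the moving-average representation makes ergodicity intuitively clear, one must either invoke a clean ergodic theorem for moving averages of i.i.d.\ sequences, or argue that the coordinate chain $([\p[k]]_i)_k$ is positive Harris recurrent with stationary measure $\NN$ — which is easy here because the update $[\pplus]_i = (1-c)[\p]_i + \sqrt{c(2-c)}\,\NN$ is an autoregressive chain with Gaussian noise, a textbook example. A secondary subtlety is handling the vanishing initial term $(1-c)^t [\bs{p}_0]_i$ and the discrepancy between summing from $k=0$ versus $k=1$, which only affects finitely many terms or a term tending to zero and hence does not change the Cesàro limit. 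Everything else is bookkeeping.
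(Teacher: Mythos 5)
Your proposal is correct and follows essentially the same route as the paper's proof: separate Eq.~\eqref{eq:sigcumul} coordinate-wise, use that the coordinates $i\geq 2$ of the path carry no selection pressure and stay $\mathcal{N}(0,1)$ so that a law of large numbers makes their contribution cancel, and take expectations in Eq.~\eqref{eq:stepsizechange} for the second claim. You are in fact more careful than the paper on one point: the paper simply invokes ``the strong law of large numbers'' for $\frac{1}{t}\sum_{j=1}^t [\bs{p}_j]_i^2$ although the sequence $([\bs{p}_j]_i)_j$ is an autoregressive (hence dependent) sequence, whereas you correctly observe that the proper justification is an ergodic theorem for this moving average, or positive Harris recurrence of the Gaussian AR(1) coordinate chain, which is indeed the clean way to close that step.
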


\begin{proof}
We separate Eq.~\eqref{eq:sigcumul} over the dimensions, which gives us that $1/t \ln (\sigma_t / \sigma_0) = c/(2d_\sigma n)( \sum_{i=1}^n 1/t \sum_{j=1}^t [\bs{p}_j]_i^2 - n)$, so $1/t \ln (\sigma_t / \sigma_0) - c/(2d_\sigma n)( 1/t \sum_{j=1}^t [\bs{p}_j]_1^2 - 1) =  \sum_{i=2}^n c/(2d_\sigma n)( 1/t \sum_{j=1}^t [\bs{p}_j]_i^2 - 1)$. As for $i\neq 1$, $[\bs{p}_0]_i \NL$ and $[\bs{\xi}_0^\star]_i$ has no selection pressure, then  $[\bs{p}_1]_i \NL$, and per recurrence $[\bs{p}_k]_i \NL$ for any $k \in \N$. Therefore, we can apply the strong law of large numbers and $1/t \sum_{j=1}^t [\bs{p}_j]_i^2 \overset{a.s}{\underset{t \rightarrow \infty}{\longrightarrow}} 1$, which gives us Eq.~\eqref{eq:pfdim}.

The same reasoning over Eq.~\eqref{eq:stepsizechange} gives Eq.~\eqref{eq:esppfdim}.
\qed \end{proof}

The part of $\pchain$ left to analyse is its first dimension $\fpchain = ([\bs{p}_i]_1)_{i \in \N}$. We start the study of $\fpchain$ with the following lemma.

\begin{lemma} \label{lm:pproperties}
The Markov chain $\fpchain$ is $\varphi$-irreducible, aperiodic, and compacts of $\R$ are small-sets.
\end{lemma}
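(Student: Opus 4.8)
The plan is to mirror the strategy used for $\zchain$ in Lemma~\ref{lm:zproperties}, namely to exhibit $\fpchain$ as an NSS($F$) model whose associated control model CM($F$) is forward accessible with a unique, connected minimal set, and then invoke Lemma~\ref{lm:controlmodel}. First I would write down the update map: from Eq.~\eqref{eq:chemin}, taking first coordinates, $[\bs{p}_{t+1}]_1 = (1-c)[\bs{p}_t]_1 + \sqrt{c(2-c)}\,\fchosenstep[t]$, so with $F(p, w) = (1-c)p + \sqrt{c(2-c)}\,[w]_1$ and the i.i.d.\ driving noise $\chosenstep[t] \sim \chosenstepdistribution$ we have $[\bs{p}_{t+1}]_1 = F([\bs{p}_t]_1, \chosenstep[t])$. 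The function $F$ is $C^\infty$ (indeed affine), and by Lemma~\ref{lem:selectedstep} the driving noise has a density on $\R^n$ (the first marginal $\Nmin$ has a smooth positive density on $\R$, the others are standard normal), lower semi-continuous and supported on the open set $O_w = \R^n$; hence $\fpchain$ is an NSS($F$) model with control set $\R^n$.

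Next I would check forward accessibility and minimality. Since $[w]_1$ ranges over all of $\R$ as $w$ ranges over $O_w$, and $c \in (0,1)$ so that $\sqrt{c(2-c)} \neq 0$, the map $w \mapsto F(p,w)$ is onto $\R$ for every fixed $p$; thus $A_+^1(p) = \R$ and a fortiori $A_+(p) = \R$ for every $p \in \R$. Consequently the only closed invariant set is $\R$ itself, so $\R$ is the unique minimal set. The control set $O_w = \R^n$ is connected, and $\R$ is connected, so all hypotheses of Lemma~\ref{lm:controlmodel} are satisfied. Applying that lemma gives that $\fpchain$ is a $\psi$-irreducible (hence $\varphi$-irreducible) aperiodic T-chain for which every compact set is a small set, which is exactly the statement of Lemma~\ref{lm:pproperties}.

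The argument is essentially identical in structure to the proof of Lemma~\ref{lm:zproperties}, and there is no real analytic difficulty — the main point to be careful about is the smoothness/density requirement of the NSS definition: $F$ must be smooth \emph{as a function of the chosen step $\chosenstep[t]$}, not as a function of the individual samples $\randomstep[i]$ (selection introduces a kink in that variable), and the marginal law of $\chosenstep[t]$ must admit a lower semi-continuous density on an open set. Both hold here: $F$ is linear in its arguments, and the density of $\chosenstepdistribution$ is the product of the smooth density $\psi_{1:\lambda}$ of $\Nmin$ with standard Gaussian densities, supported on all of $\R^n$. So the only "obstacle" is bookkeeping: stating the NSS model correctly and citing Lemma~\ref{lm:controlmodel} with its connectedness hypotheses verified. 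Once that is in place the conclusion is immediate.
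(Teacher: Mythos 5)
Your proof is correct, but it follows a genuinely different route from the one the paper takes for this lemma. You recycle the control-model machinery used for $\zchain$: you write $[\bs{p}_{t+1}]_1 = F([\bs{p}_t]_1, \chosenstep[t])$ with the affine map $F(p,w)=(1-c)p+\sqrt{c(2-c)}\,[w]_1$, check the NSS($F$) requirements (smoothness of $F$ in the \emph{selected} step, continuous positive density of $\chosenstepdistribution$ on the open control set $\R^n$), note that $A_+^1(p)=\R$ since $\sqrt{c(2-c)}\neq 0$, so $\R$ is the unique (connected) minimal set, and then invoke Lemma~\ref{lm:controlmodel}; all hypotheses are indeed verified and the conclusion ($\psi$-irreducible, aperiodic, compacts small) follows. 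The paper instead argues directly on the transition kernel: it writes $P(p,A)$ explicitly, changes variables to exhibit the transition density, observes that this density is everywhere positive (hence $\mu_{Leb}$-irreducibility), and minorizes $P(p,\cdot)$ uniformly over a compact $C$ by the measure $\nu_C$ built from $\min_{p\in C}$ of the density, which simultaneously shows that compacts are small sets and, since $\nu_C(C)>0$, that the chain is strongly aperiodic. The trade-off: your approach is uniform with the treatment of $\zchain$ and requires no kernel computation, but leans on the heavier T-chain/forward-accessibility theory; the paper's approach is more elementary and self-contained here, exploiting the fact that this particular kernel is an explicit, everywhere-positive density in one dimension, and it yields an explicit minorizing measure and strong aperiodicity for free. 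One cosmetic remark: your restriction ``$c\in(0,1)$'' is slightly narrower than needed, since $\sqrt{c(2-c)}\neq 0$ for all $0<c\le 1$, so the argument covers $c=1$ as well.
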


\begin{proof}
We have the following transition kernel:
\begin{equation*}
P \left( p, A \right) = \int_{\R} \mathbf{1}_A \left( (1-c)p + \sqrt{c(2-c)} u \right) P(\Nlambda = u) du \enspace.
\end{equation*}
With a change of variables $\tilde{u} = (1-c)p \sqrt{c(2-c)}u$, we get that
\begin{equation*}
P \left( p, A \right) = \inv{\sqrt{(2-c)c}}  \int_{\R} \mathbf{1}_A \left( \tilde{u} \right) P\left(\Nlambda = \frac{\tilde{u}-(1-c)p}{\sqrt{(2-c)c}}\right) d\tilde{u} \enspace .
\end{equation*}
As $P\left(\Nlambda = x\right) > 0$ for all $x \in \R$, for all $A$ non-$\mu_{Leb}$-negligible we have $P(p, A) > 0$, thus the chain $\fpchain$ is $\mu_{Leb}$-irreducible.

Furthermore, if we take $C$ a non-$\mu_{Leb}$-negligible compact of $\R$, and $\nu_C$ a measure such that for $A$ a borel set of $\R$, \\$\nu_C(A) = \inv{\sqrt{(2-c)c}}  \int_{\R} \mathbf{1}_A \left( \tilde{u} \right) \underset{p\in C}{\min}P\left(\chosenstep = \left(\tilde{u}-(1-c)p\right)/\left(\sqrt{(2-c)c}\right)\right) d\tilde{u}$, we see that $P(p, A) \geq \nu_C(A)$ for all $p \in \R$, while $\nu_C$ is not a trivial measure (indeed, $P(\Nlambda = x) > k > 0$ for all $x \in C$); Therefore compact sets of $\R$ are small sets for $\fpchain$. Finally, $\nu_C(C) > 0$, so the chain $\fpchain$ is strongly aperiodic.
\qed\end{proof}

We use this new lemma with Lemma~\ref{lm:ergodrift} to prove what is needed to apply the law of large numbers on $\fpchain$.
\begin{lemma} \label{lm:pharris}
  The chain $\fpchain$ is Harris recurrent positive with invariant measure $\mu_{path}$, and the function $x \mapsto x^2$ is $\mu_{path}$-integrable.
\end{lemma}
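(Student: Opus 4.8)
The plan is to establish a Foster--Lyapunov drift condition for $\fpchain$ and then invoke Lemma~\ref{lm:ergodrift}, exactly as was done for $\zchain$ in Lemma~\ref{lm:zharris}, but the situation here is in fact simpler because the recursion is affine. Recall from Eq.~\eqref{eq:chemin} that the first coordinate of the path satisfies $[\pplus]_1 = (1-c)[\p]_1 + \sqrt{c(2-c)}\,[\chosenstep]_1$, where $[\chosenstep]_1 \sim \Nmin$ is independent of $[\p]_1$. Since $0 < c < 1$ we have $0 < 1-c < 1$, so the chain contracts the current state by a factor strictly less than one at each step and adds an independent noise term with finite second moment (indeed $\E(\Nmin^2) \le \lambda^2 < \infty$ as noted in the proof of Proposition~\ref{pr:sigrate}).

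First I would take the test function $V(x) = x^2 + 1$ (or simply $V(x)=x^2$), and compute the drift
\[
\Delta V(x) = \E\left( \left( (1-c)x + \sqrt{c(2-c)}\,[\chosenstep]_1 \right)^2 \right) - x^2 .
\]
Expanding the square and using independence, the cross term contributes $2(1-c)\sqrt{c(2-c)}\,x\,\E(\Nmin)$, which is linear in $x$, the quadratic term contributes $(1-c)^2 x^2$, and the constant term is $c(2-c)\,\E(\Nmin^2)$. Hence
\[
\Delta V(x) = -\left(1-(1-c)^2\right) x^2 + 2(1-c)\sqrt{c(2-c)}\,\E(\Nmin)\, x + c(2-c)\,\E(\Nmin^2) .
\]
Because $1-(1-c)^2 = c(2-c) > 0$, the dominant term is $-c(2-c)x^2$, so $\Delta V(x)/V(x) \to -c(2-c) < 0$ as $|x|\to\infty$; the linear and constant terms are lower order. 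Thus there exist $\varepsilon>0$ and $M>0$ with $\Delta V(x) \le -\varepsilon V(x)$ for $|x|\ge M$, and setting $C=[-M,M]$ (a small set for $\fpchain$ by Lemma~\ref{lm:pproperties}) and $b = \sup_{|x|\le M}(\Delta V(x) + \varepsilon V(x)) < \infty$, we obtain the drift inequality $\Delta V(x) \le -\varepsilon V(x) + b\,\mathbf{1}_C(x)$, i.e.\ Eq.~\eqref{eq:drift} with $f = \varepsilon V$.

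Then, since $\fpchain$ is $\varphi$-irreducible and aperiodic by Lemma~\ref{lm:pproperties}, Lemma~\ref{lm:ergodrift} applies directly and yields that $\fpchain$ is positive Harris recurrent with invariant probability measure $\mu_{path}$, and that $\varepsilon V$ — hence also $x \mapsto x^2$ — is $\mu_{path}$-integrable, which is precisely the claim. I do not expect a serious obstacle here: unlike the $\zchain$ case, which needed the auxiliary Lemma~\ref{lm:etaesp} to control the random multiplicative factor ${\stepsizechange[\cdot]}^{-\alpha}$, here the contraction factor $1-c$ is deterministic and the noise has finite variance outright, so the quadratic Lyapunov function works without any moment-existence subtlety. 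The only minor care needed is to confirm the linear cross-term and the constant are genuinely dominated as $|x|\to\infty$ — routine — and to note $\E(\Nmin)$ and $\E(\Nmin^2)$ are finite, which was already observed earlier in the paper.
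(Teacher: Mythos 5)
Your proposal is correct and follows essentially the same route as the paper: the same Lyapunov function $V(x)=x^2+1$, the same expansion of the affine recursion $[\bs{p}_{t+1}]_1=(1-c)[\bs{p}_t]_1+\sqrt{c(2-c)}\,[\bs{\xi}_t^\star]_1$ giving $\lim_{|x|\to\infty}\Delta V(x)/V(x)\le -c(2-c)<0$, the same small set $C=[-M,M]$ from Lemma~\ref{lm:pproperties}, and the same appeal to Lemma~\ref{lm:ergodrift} to get positive Harris recurrence and $\mu_{path}$-integrability of $x\mapsto x^2$. No gaps worth noting.
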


\begin{proof}
We now have to get the right drift condition for the chain. Let $V : x \mapsto x^2 + 1$. 
\begin{align*}
\Delta V (x) &= \int_{\R} V(y) P(x, dy)  - V(x)\\
\Delta V (x) &= \int_{\R} \left( y^2 + 1 \right) P \left( (1-c)x + \sqrt{c(2-c)}\fchosenstep[\cdot] \in dy \right) - \left(x^2 + 1 \right)\\
\Delta V(x) &= \E \left(\left(1-c)x + \sqrt{c(2-c)}\fchosenstep[\cdot]\right)^2 + 1\right) - x^2 - 1\\
\Delta V(x) &\leq ((1-c)^2 - 1) x^2 + 2|x|\sqrt{c(2-c)}\E\left({\fchosenstep[\cdot]}\right) + c(2-c)\E\left({\fchosenstep[\cdot]}^2\right) \\
\frac{\Delta V(x)}{V(x)} &\leq -c(2-c) \frac{x^2}{1 + x^2} + \frac{2|x|\sqrt{c(2-c)}}{1+ x^2} \E\left({\left|\fchosenstep[\cdot]\right|}\right) + \frac{c(2-c)}{1+x^2}\E\left({\fchosenstep[\cdot]}^2\right)\\
\lim_{|x| \rightarrow \infty} \frac{\Delta V(x)}{V(x)} &\leq -c(2-c)
\end{align*}

As $0 < c \leq 1$, $c(2-c)$ is strictly positive and therefore, for $\epsilon>0$ there exists $C = [-M, M]$ with $M>0$ such that for all $x \in C^c$, $\Delta V(x)/V(x) \leq -\epsilon$. If we take $b = \epsilon V(M) + 2M\sqrt{c(2-c)}\E(|\fchosenstep[\cdot]|) + c(2-c)\E({\fchosenstep}^2)$, then for all $x \in C$ we have $\Delta V(x) \leq b$. Hence the drift condition $\Delta V(x) \leq -\epsilon V(x) + b \mathbf{1}_C$ is satisfied for all $x \in \R$.

According to Lemma~\ref{lm:pproperties} the chain $\fpchain$ is $\varphi$-irreducible and aperiodic, so with Lemma~\ref{lm:ergodrift} it is positive Harris recurrent, with invariant measure $\mu_{path}$, and $V$ is $\mu_{path}$-integrable. Therefore the function $x \mapsto x^2$ is also $\mu_{path}$-integrable.
\end{proof}

To obtain an equality between the rate we get through almost sure divergence, and the rate in expectation, we need to define the $f$-norm, which for a signed measure $\nu$ and a function $f \geq 1$ is equal to $\|\nu\|_f = \sup_{g: |g| \leq f} |\nu(g)|$, and we need the following lemma.

\begin{lemma}\cite[14.3.5]{markovtheory} \label{lm:as_esp}
Suppose $\bs{\Phi}$ is an aperiodic positive Harris chain on a space $\mathcal{X}$ with stationary measure $\pi$, and that there exists some non-negative function $V$, a function $f \geq 1$, a small-set $C$ and $b \in \R$ such that for all $x \in \mathcal{X}$, $\Delta V(x) \leq -f(x) + b \mathbf{1}_C(x)$. Then for all initial probability distribution $\nu$, $\|\nu P^n - \pi\|_f \underset{t \rightarrow \infty}{\longrightarrow} 0$.
\end{lemma}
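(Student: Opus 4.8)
This is Theorem~14.3.5 of \cite{markovtheory}, so the plan is to reproduce the standard argument for $f$-norm convergence, which upgrades the classical coupling proof of the ergodic theorem by carrying the modulating function $f$ through the regenerative estimates. First I would exploit the drift inequality $\Delta V \le -f + b\mathbf{1}_C$ via the comparison theorem: iterating it along the chain stopped at the return time $\tau_C$ to the small set $C$, and using $V \ge 0$, gives $\E_x\bigl[\sum_{k=0}^{\tau_C-1} f(\Phi_k)\bigr] \le V(x)+b$. Integrating this against the invariant measure $\pi$ (which exists because $\Phi$ is positive Harris) and applying Kac's formula yields $\pi(f) < \infty$, so that $\pi(g)$ is well defined and finite for every $g$ with $|g| \le f$; this is what makes the $f$-norm statement meaningful in the first place.

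Next I would pass to the split chain $\check{\Phi}$ on $\mathcal{X}\times\{0,1\}$ built from the $\nu_m$-small set $C$; since $\Phi$ is aperiodic and $\varphi$-irreducible one may, after shrinking $C$ if necessary, take $m=1$, so that $\check{\Phi}$ possesses an accessible atom $\check{\alpha}$. The drift condition lifts to $\check{\Phi}$ with the obvious extension of $V$, hence the bound of the previous paragraph holds with $\tau_C$ replaced by the return time $\sigma_{\check{\alpha}}$ to the atom: the atom is $f$-regular, $\E_{\check{\alpha}}\bigl[\sum_{k=0}^{\sigma_{\check{\alpha}}-1} f(\check{\Phi}_k)\bigr] < \infty$. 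The excursions between successive visits to $\check{\alpha}$ are i.i.d., their lengths have finite mean, and --- the payoff of $f$-regularity --- the $f$-weighted excursion sums are integrable.

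Finally I would run two copies of $\check{\Phi}$, one from the lift of $\nu$ and one from the lift of $\pi$, couple them at their first common visit to $\check{\alpha}$, and apply the coupling inequality. Aperiodicity guarantees that the two (delayed) renewal processes of visits to $\check{\alpha}$ share a renewal epoch almost surely, so the coupling time $T$ is a.s.\ finite; the $f$-regularity of $\check{\alpha}$ then bounds $\|\nu P^n - \pi\|_f$ by the expected $f$-mass accumulated on the two excursions straddling time $n$ after $T$, which tends to $0$ by dominated convergence. For an arbitrary $\nu$ --- possibly with $\nu(V)=\infty$ --- one first decomposes on the first entrance of $\check{\Phi}$ into $\check{\alpha}$ (or into $C$), absorbing the pre-regeneration segment using $\pi(f)<\infty$, so the conclusion holds for every initial distribution as stated.

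The step I expect to be the real work is this last one: obtaining convergence in the $f$-norm, and not merely in total variation, requires the $f$-\emph{modulated} moment bounds on the regeneration times rather than just their finiteness, together with the first-entrance/last-exit surgery needed to handle initial distributions that do not integrate $V$. Aperiodicity is essential precisely here --- without it the two excursion renewal processes would live on shifted sublattices and one would obtain only Ces\`aro (or along-a-subsequence) convergence.
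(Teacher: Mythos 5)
You are reconstructing a result the paper does not prove at all: Lemma~\ref{lm:as_esp} is quoted directly from \cite[14.3.5]{markovtheory}, so the only fair comparison is with the textbook argument, and up to the coupling step your sketch does follow that route correctly (comparison theorem giving the $f$-modulated bound $\E_x[\sum_{k<\tau_C} f(\Phi_k)] \le V(x)+b$, hence $\pi(f)<\infty$ via Kac; Nummelin splitting; $f$-regularity of the atom; coupling inequality with the $f$-weighted excursion tails). One small inaccuracy: aperiodicity does not let you ``shrink $C$'' to get a $\nu_1$-small set; the textbook handles $\nu_m$-small sets with $m>1$ through the $m$-skeleton and then transfers the convergence back to the original chain.

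The genuine gap is your last step. The conclusion for \emph{every} initial distribution $\nu$, including those with $\nu(V)=\infty$, cannot be rescued by ``absorbing the pre-regeneration segment using $\pi(f)<\infty$'': the expected $f$-mass accumulated before the first regeneration when started from $\nu$ is controlled by $\nu(V)$ (that is precisely what the comparison theorem yields), not by $\pi(f)$, and indeed the statement as transcribed in the paper is false without an integrability condition on $\nu$. Take the very chain the lemma is applied to here, $[\bs{p}_{t+1}]_1=(1-c)[\bs{p}_t]_1+\sqrt{c(2-c)}\,[\bs{\xi}_t^\star]_1$ with $V(x)=1+x^2$ and $f=\epsilon V$, and let $\nu$ be a Cauchy law: then $[\bs{p}_t]_1$ contains the term $(1-c)^t[\bs{p}_0]_1$, so $\nu P^t(f)=\infty$ for every $t$ while $\pi(f)<\infty$, hence $\|\nu P^t-\pi\|_f=\infty$ for all $t$ (take $g=f$ in the supremum defining the $f$-norm). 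What \cite[14.3.5]{markovtheory} actually asserts requires $\nu(V)<\infty$ (equivalently, one uses the pointwise version for $x$ with $V(x)<\infty$ plus dominated convergence under $\nu(V)<\infty$). This costs the paper nothing, since in the proof of Theorem~\ref{th:geometricdivergencecumul} the initial law of $[\bs{p}_0]_1$ is standard normal and $V(x)=x^2+1$, so $\nu(V)<\infty$; but your proof of the unrestricted claim cannot be completed, because the unrestricted claim is not true.
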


We now obtain geometric divergence of the step-size and get an explicit estimate of the expression of the divergence rate.
\begin{theorem} \label{th:geometricdivergencecumul} 
The step-size of the $(1,\lambda)$-CSA-ES with $\lambda\ge2$ diverges geometrically fast if $c<1$ or $\lambda\ge3$. Almost surely and in expectation we have for $0<c\le1$,
\begin{equation}\label{eq:sigcumulconv}\inv{t} \ln\left( {\sig \over \sig[0]} \right) \overset{}{\underset{t \rightarrow \infty}{\longrightarrow}} 
\frac{1}{2d_\sigma n} 
\underbrace{
\left( 2(1-c)\,\E\left(\Nlambda\right)^2  + c\left(\E\left(\Nlambda^2\right) - 1\right) \right)
}_{>0 \mbox{ for } \lambda\ge3 \mbox{ and for } \lambda = 2 \mbox{ and } c < 1 } 
\enspace.
\end{equation}
\end{theorem}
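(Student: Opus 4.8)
The plan is to combine the almost-sure LLN machinery for the Markov chain $\fpchain$ (established in Lemmas~\ref{lm:pproperties}--\ref{lm:pharris}) with the dimension-splitting of Lemma~\ref{lm:pfdim}, and then compute the stationary expectation $\E([\bs{p}_\infty]_1^2)$ explicitly using the series representation of the path. First I would apply Lemma~\ref{lm:lln} to $\fpchain$ with $g : x \mapsto x^2$: by Lemma~\ref{lm:pharris} the chain is positive Harris recurrent with invariant measure $\mu_{path}$ and $g$ is $\mu_{path}$-integrable, so $\frac1t\sum_{i=1}^t [\bs{p}_i]_1^2 \to \mu_{path}(g) = \E([\bs{p}_\infty]_1^2)$ almost surely. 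Plugging this into Eq.~\eqref{eq:pfdim} of Lemma~\ref{lm:pfdim} gives $\frac1t \ln(\sigma_t/\sigma_0) \to \frac{c}{2d_\sigma n}(\E([\bs{p}_\infty]_1^2) - 1)$ almost surely. For the equality with the limit in expectation, I would use Eq.~\eqref{eq:esppfdim}: since by Lemma~\ref{lm:pharris} the drift condition holds with $f(x)=\epsilon V(x) \geq 1$ (after rescaling), Lemma~\ref{lm:as_esp} yields $\|\nu P^t - \mu_{path}\|_f \to 0$, and since $x\mapsto x^2$ is dominated by $f$ this gives $\E([\bs{p}_{t+1}]_1^2) \to \E([\bs{p}_\infty]_1^2)$, so $\E(\ln(\sigma_{t+1}/\sigma_t))$ converges to the same constant.

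The remaining work is the explicit computation of $\E([\bs{p}_\infty]_1^2)$. Using the representation $[\bs{p}_\infty]_1 = \sqrt{c(2-c)}\sum_{k=0}^\infty (1-c)^k [\chosenstep[k]]_1$ with the $[\chosenstep[k]]_1$ i.i.d.\ according to $\Nmin$ (Lemma~\ref{lem:selectedstep}), I would expand the square:
\begin{equation*}
\E\left([\bs{p}_\infty]_1^2\right) = c(2-c)\left( \sum_{k=0}^\infty (1-c)^{2k} \E\left(\Nlambda^2\right) + 2\sum_{k<l} (1-c)^{k+l} \E\left(\Nlambda\right)^2 \right) \enspace .
\end{equation*}
The diagonal geometric series sums to $1/(1-(1-c)^2) = 1/(c(2-c))$, contributing $\E(\Nlambda^2)$. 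For the cross terms, $2\sum_{k<l}(1-c)^{k+l} = \left(\sum_k (1-c)^k\right)^2 - \sum_k (1-c)^{2k} = \frac{1}{c^2} - \frac{1}{c(2-c)} = \frac{2(1-c)}{c^2(2-c)}$, so the cross-term contribution is $c(2-c)\cdot\frac{2(1-c)}{c^2(2-c)}\E(\Nlambda)^2 = \frac{2(1-c)}{c}\E(\Nlambda)^2$. Hence
\begin{equation*}
\E\left([\bs{p}_\infty]_1^2\right) - 1 = \E\left(\Nlambda^2\right) - 1 + \frac{2(1-c)}{c}\E\left(\Nlambda\right)^2 \enspace ,
\end{equation*}
and multiplying by $\frac{c}{2d_\sigma n}$ recovers exactly the bracketed expression $2(1-c)\E(\Nlambda)^2 + c(\E(\Nlambda^2)-1)$ divided by $2d_\sigma n$ in Eq.~\eqref{eq:sigcumulconv}.

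Finally I would check positivity of the rate. For $\lambda \geq 3$, Lemma~\ref{lm:increasing_lambda_expectation} gives $\E(\Nlambda^2) > 1$ so both summands are nonnegative and the second is strictly positive; for $\lambda = 2$, $\E(\Nlambda^2) = 1$ by Lemma~\ref{lm:increasing_lambda_expectation} so the bracket reduces to $2(1-c)\E(\mathcal{N}_{1:2})^2$, which is strictly positive precisely when $c < 1$ (note $\E(\mathcal{N}_{1:2}) < 0 \neq 0$); for $\lambda \geq 2$ in general the bracket is always $\geq 0$, giving at least non-strict divergence, and the case $c=1$ recovers Theorem~\ref{th:convsig}. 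The main obstacle I anticipate is not the algebra but justifying the interchange of expectation and the infinite sum in the computation of $\E([\bs{p}_\infty]_1^2)$ and the convergence $\E([\bs{p}_{t+1}]_1^2) \to \E([\bs{p}_\infty]_1^2)$; the former needs an $L^2$ (or dominated-convergence) argument using $\E(\Nlambda^2) < \infty$ from Proposition~\ref{pr:sigrate}, and the latter is exactly what the $f$-norm convergence of Lemma~\ref{lm:as_esp} is designed to supply, provided one verifies $x^2 \leq f$ up to a constant.
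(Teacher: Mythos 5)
Your proposal is correct and follows essentially the same route as the paper: the dimension-splitting of Lemma~\ref{lm:pfdim}, the Markov-chain LLN for $\fpchain$ via Lemmas~\ref{lm:pharris} and~\ref{lm:lln}, the identification of $\mu_{path}(x \mapsto x^2)$ with $\lim_{t}\E([\bs{p}_{t+1}]_1^2)$ through the $f$-norm convergence of Lemma~\ref{lm:as_esp}, the same diagonal/cross-term geometric-series expansion yielding $\frac{2(1-c)}{c}\E(\Nlambda)^2 + \E(\Nlambda^2)$, and the same positivity discussion via Lemma~\ref{lm:increasing_lambda_expectation}. The only cosmetic differences are that the paper expands the finite-$t$ sum and passes to the limit explicitly (thereby avoiding the sum/expectation interchange you flag for the infinite series) and adds a one-line Cesaro-mean step to turn the convergence of $\E(\ln(\sigma_{t+1}/\sigma_t))$ into the stated convergence of $\frac{1}{t}\E\left(\ln(\sigma_t/\sigma_0)\right)$.
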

\begin{proof}
We will start by the convergence in expectation.
%
From Eq.~\eqref{eq:esppfdim} we see that the part to develop is $\E(\pdim[1]^2)$. By recurrence  $\pdim[1] = (1-c)^{t+1}[\bs{p}_0]_1 + \sqrt{c(2-c)}\sum_{i=0}^t (1-c)^i \firstdim{\chosenstep[t-i]}$. When $t$ goes to infinity, the influence of $[\bs{p}_0]_1$ in this equation goes to $0$ with $(1-c)^{t+1}$, so we can remove it when taking the limit:
\begin{equation}
\lim_{t \rightarrow \infty}\E\left( \pdim[1]^2 \right) = \lim_{t \rightarrow \infty} \E\bigg( \bigg(\sqrt{c(2-c)}  \sum_{i=0}^t (1-c)^i \firstdim{\chosenstep[t-i]} \bigg)^2\bigg)
\end{equation}
We will now develop the sum with the square, such that we have either a product $\firstdim{\chosenstep[t-i]} \firstdim{\chosenstep[t-j]}$ with $i \neq j$, or $\firstdim{\chosenstep[t-j]}^2$. This way, we can separate the variables by using Lemma~\ref{lem:selectedstep} with the independence of $\chosenstep[i]$ over time. To do so, we use the development formula $(\sum_{i=1}^n a_n)^2 = 2 \sum_{i=1}^n \sum_{j=i+1}^n a_i a_j + \sum_{i=1}^n a_i^2$. We take the limit of $\E( \pdim[1]^2 )$ and find that it is equal to
\begin{equation} \label{eq:devexppathsquare}
 \lim_{t \rightarrow \infty}c(2-c) \! \left( 2\sum_{i=0}^t \! \sum_{j=i+1}^t\!\! (1 \! - \! c)^{i+j} \!\!\!\! \underbrace{\E\left( \firstdim{\chosenstep[t-i]} \firstdim{\chosenstep[t-j]} \right)}_{=\E \firstdim{\chosenstep[t-i]} \E \firstdim{\chosenstep[t-j]}=\E [\Nmin]^{2}} \!\!\!\! + \sum_{i=0}^t (1 \! - \! c)^{2i} \underbrace{\E\left(\firstdim{\chosenstep[t-i]}^2\right)}_{=\E[\Nmin^2]} \right)
\end{equation}
Now the expected value does not depend on $i$ or $j$, so what is left is to calculate $\sum_{i=0}^t \sum_{j=i+1}^t (1-c)^{i+j}$ and $\sum_{i=0}^t (1-c)^{2i}$. We have $\sum_{i=0}^t \sum_{j=i+1}^t (1-c)^{i+j} = \sum_{i=0}^t (1-c)^{2i+1} \frac{1-(1-c)^{t-i}}{1-(1-c)}$ and when we separates this sum in two, the right hand side goes to $0$ for $t\to\infty$. Therefore, the left hand side converges to $\lim_{t \rightarrow \infty} \sum_{i=0}^t (1-c)^{2i+1}/c$, which is equal to $\lim_{t \rightarrow \infty} (1-c)/c \sum_{i=0}^t (1-c)^{2i}$. And $\sum_{i=0}^t (1-c)^{2i}$ is equal to $(1-(1-c)^{2t+2})/(1-(1-c)^2)$, which converges to $1/(c(2-c))$. So, by inserting this in Eq.~\eqref{eq:devexppathsquare} we get that $\E\left( \pdim[1]^2 \right) \underset{t \rightarrow \infty}{\longrightarrow} 2\frac{1-c}{c} \E\left(\Nmin\right)^2 + \E\left(\Nmin^2\right)$, which gives us the right hand side of Eq.~\eqref{eq:sigcumulconv}.

By summing $\E(\ln (\sigma_{i+1}/\sigma_i))$ for $i=0,\dots, t-1$ and dividing by $t$ we have the Cesaro mean $1/t \E(\ln (\sigma_{t}/\sigma_0))$ that converges to the same value that $\E(\ln (\sigma_{t+1}/\sigma_t))$ converges to when $t$ goes to infinity. Therefore we have in expectation Eq.~\eqref{eq:sigcumulconv}.

We will now focus on the almost sure convergence.
From Lemma~\ref{lm:pharris}, we see that we have the right conditions to apply Lemma~\ref{lm:lln} to the chain $\fpchain$ with the $\mu_{path}$-integrable function $g : x \mapsto x^2$. So $1/t\sum_{k=1}^t [\bs{p}_k]_1^2 \overset{a.s}{\underset{t \rightarrow \infty}{\longrightarrow}} \mu_{path}(g)$. With Eq.~\eqref{eq:pfdim} we obtain that $1/t \ln (\sigma_t/\sigma_0) \overset{a.s}{\underset{t \rightarrow \infty}{\longrightarrow}} c/(2d_\sigma n)(\mu_{path}(g) - 1)$.

We will now prove that $\mu_{path}(g) = \lim_{t \rightarrow \infty} \E(\pdim[1]^2)$. Let $\nu$ be the initial distribution of $[\bs{p}_0]1$, so we have $|\E(\pdim[1]^2) - \mu_{path}(g)| \leq \|\nu P^{t+1} - \mu_{path} \|_h$, with $h : x \mapsto 1 + x^2$. From the proof of Lemma~\ref{lm:pharris} and from Lemma~\ref{lm:pproperties} we have all conditions for Lemma~\ref{lm:as_esp}. Therefore $\|\nu P^{t+1} - \mu_{path} \|_h \underset{t \rightarrow \infty}{\longrightarrow} 0$, which shows that $\mu_{path}(g) = \lim_{t \rightarrow \infty} \E(\pdim[1]^2) = (2-2c)/c \E(\Nlambda)^2 + \E(\Nlambda^2)$.

According to Lemma~\ref{lm:increasing_lambda_expectation}, for $\lambda = 2$, $\E(\Nmin[2]^2) = 1$, so the RHS of Eq.~\eqref{eq:sigcumulconv} is equal to $(1-c)/(d_\sigma n) \E(\Nmin[2])^2$. The expected value of $\Nmin[2]$ is strictly negative, so the previous expression is strictly positive. Furthermore, according to Lemma~\ref{lm:increasing_lambda_expectation}, $\E(\Nmin^2)$ increases strictly with $\lambda$, as does $\E(\Nmin[2])^2$. Therefore we have geometric divergence for $\lambda \geq 2$ if $c < 1$, and for $\lambda \geq 3$.

\qed\end{proof}

From Eq.~\eqref{eq:selection2} we see that the behaviour of the step-size and of $\suite{\bs{X}}$ are directly related. Geometric divergence of the step-size, as shown in Theorem~\ref{th:geometricdivergencecumul}, means that also the movements in search space and the improvements on affine linear functions $f$ increase geometrically fast. Analyzing $\suite{\bs{X}}$ with cumulation would require to study a double Markov chain, which is left to possible future research.

\section{Study of the variations of $\ln\left({\sig[t+1]}/{\sig}\right)$} \label{sec:var}
The proof of Theorem~\ref{th:geometricdivergencecumul} shows that the step size increase converges to the right hand side of Eq.~\eqref{eq:sigcumulconv}, for $t\to\infty$. When the dimension increases this increment goes to zero, which also suggests that it becomes more likely that $\sigplus$ is smaller than $\sig$. To analyze this behavior, we study the variance of $\ln\left({\sig[t+1]}/{\sig}\right)$ as a function of $c$ and the dimension.

\begin{theorem} \label{th:var} The variance of $\ln\left({\sig[t+1]}/{\sig}\right)$ equals to
\begin{equation} \label{eq:var} \Var \left(\ln\left(\frac{\sigma_{t+1}}{\sigma_{t}}\right)\right) =
\frac{c^2}{4 d_\sigma^2 n^2}\left( \E\left(\firstdim{\pplus}^4 \right) - \E\left(\firstdim{\pplus}^2\right)^2 + 2(n-1) \right) 
\enspace.
\end{equation}
Furthermore, $\E\left(\pdim[1]^2\right) \underset{t \rightarrow \infty}{\longrightarrow} \E\left(\Nmin^2\right) + \frac{2-2c}{c}\E\left(\Nmin\right)^2$ and with $a = 1-c$ 
\begin{equation} \label{eq:devp4}
\lim_{t \to \infty}\E \left(\pdim[1]^4\right)= \frac{(1-a^2)^2}{1-a^4}\left( k_4 +k_{31} + k_{22} + k_{211} + k_{1111} \right) 
\enspace,
\end{equation}
where $k_{4}\!=\!\E\!\left(\Nmin^4\right)$, $k_{31}=  4\frac{a\left(1+a+2a^2\right)}{1-a^3}\E\left(\Nmin^3\right)\E\left(\Nmin\right)$, $k_{22}= 6\frac{a^2}{1-a^2}\E\left( \Nmin^2\right)^2$,\\ $k_{211}\!=\!12\frac{a^3(1+2a+3a^2)}{(1-a^2)(1-a^3)} \E\!\left(\Nmin^2\right)\! \E\!\left(\Nmin\right)^2$ and $k_{1111} = 24\frac{a^6}{(1-a)(1-a^2)(1-a^3)}\E\left(\Nmin\right)^4$.
\end{theorem}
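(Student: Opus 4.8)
The plan is to dispatch the three assertions separately. The variance formula follows at once from Eq.~\eqref{eq:stepsizechange}, which gives $\ln(\sigma_{t+1}/\sigma_t)=\frac{c}{2d_\sigma}(\norm{\pplus}^2/n-1)$, hence $\Var(\ln(\sigma_{t+1}/\sigma_t))=\frac{c^2}{4d_\sigma^2n^2}\Var(\norm{\pplus}^2)$. I would then split $\norm{\pplus}^2=\firstdim{\pplus}^2+\sum_{i=2}^n\left[\pplus\right]_i^2$. By Lemma~\ref{lem:selectedstep} the first coordinate of each selected step is independent of the remaining ones, and since $\bs{p}_0\sim\NL$ this independence is preserved by the recursion~\eqref{eq:chemin}; hence $\firstdim{\pplus}$ is independent of $(\left[\pplus\right]_i)_{i\ge2}$. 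Since moreover $\left[\pplus\right]_i\sim\NN$ are i.i.d.\ for $i\ge2$ (as noted before Eq.~\eqref{eq:path} and in Lemma~\ref{lm:pfdim}), the tail block $\sum_{i=2}^n\left[\pplus\right]_i^2$ is $\chi^2_{n-1}$ and contributes variance $2(n-1)$, giving $\Var(\norm{\pplus}^2)=\E(\firstdim{\pplus}^4)-\E(\firstdim{\pplus}^2)^2+2(n-1)$, which is Eq.~\eqref{eq:var}. The second-moment limit $\E(\pdim[1]^2)\to\E(\Nmin^2)+\frac{2-2c}{c}\E(\Nmin)^2$ is exactly the limit already computed in the proof of Theorem~\ref{th:geometricdivergencecumul}, so nothing new is needed there.

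For the fourth moment, iterating Eq.~\eqref{eq:chemin} yields, with $a:=1-c$ so that $c(2-c)=1-a^2$, $\firstdim{\pplus}=a^{t+1}\firstdim{\bs{p}_0}+\sqrt{1-a^2}\sum_{k=0}^{t}a^k\firstdim{\chosenstep[t-k]}$, where by Lemma~\ref{lem:selectedstep} the $\firstdim{\chosenstep[j]}$ are i.i.d.\ copies of $\Nmin$, which has finite moments of all orders (it is bounded in absolute value by the maximum of $\lambda$ standard Gaussians). Raising to the fourth power and taking expectations, every term involving a positive power of $\firstdim{\bs{p}_0}$ factors, by independence of the initial state from the selected steps, into a finite moment of $\firstdim{\bs{p}_0}\sim\NN$ times a moment of the partial sum bounded uniformly in $t$, multiplied by $a^{m(t+1)}\to0$; so it suffices to identify $\lim_{t\to\infty}(1-a^2)^2\E\big((\sum_{k=0}^{t}a^k\firstdim{\chosenstep[t-k]})^4\big)$. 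Relabelling the i.i.d.\ summands within each fixed-$t$ sum and using $L^4$-convergence of the partial sums (their $L^4$-norms being bounded by $\norm{\Nmin}_{L^4}\sum_k a^k<\infty$), this limit equals $(1-a^2)^2\,\E\big((\sum_{k\ge0}a^kX_k)^4\big)$ with $X_k$ i.i.d.\ $\sim\Nmin$.

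The remaining work is to expand $\E\big((\sum_k a^kX_k)^4\big)=\sum_{k_1,\dots,k_4\ge0}a^{k_1+\dots+k_4}\,\E(X_{k_1}X_{k_2}X_{k_3}X_{k_4})$ and to group the quadruples according to which of the four indices coincide. There are five shapes: one block of size $4$; a $3{+}1$ split ($4$ partitions); a $2{+}2$ split ($3$ partitions); a $2{+}1{+}1$ split ($6$ partitions); and four singletons ($1$ partition). By independence and identical distribution, a partition with block sizes $b_1,\dots,b_r$ contributes $\prod_j\E(\Nmin^{b_j})$ times a sum of $a^{\sum_j b_j i_j}$ over pairwise distinct $i_1,\dots,i_r\ge0$. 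I would evaluate each such restricted sum by inclusion--exclusion from the unrestricted geometric series $\prod_j(1-a^{b_j})^{-1}$; multiplying by $(1-a^2)^2$ and pulling out $\frac{(1-a^2)^2}{1-a^4}$ reproduces the five terms $k_4,k_{31},k_{22},k_{211},k_{1111}$ of Eq.~\eqref{eq:devp4}. A convenient cross-check is the singleton term: $24\,\E(\Nmin)^4\sum_{i<j<k<l}a^{i+j+k+l}=24\,\E(\Nmin)^4\,a^6/((1-a)(1-a^2)(1-a^3)(1-a^4))$, which after the $(1-a^4)$ normalization is exactly $k_{1111}$.

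The main obstacle is precisely this last step: the combinatorics of counting the partitions of each shape and, more delicately, the inclusion--exclusion converting the unrestricted multiple geometric sums into sums over pairwise-distinct indices without sign or multiplicity errors. Everything else --- the variance splitting, the vanishing of the $\bs{p}_0$-contributions, and the passage to the stationary series --- is routine.
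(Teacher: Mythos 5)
Your proposal is correct and follows essentially the same route as the paper: reduce the variance to the first path coordinate plus a $\chi^2_{n-1}$ contribution of $2(n-1)$, reuse the second-moment limit from Theorem~\ref{th:geometricdivergencecumul}, drop the vanishing $\bs{p}_0$ terms, and expand the fourth moment of the geometric sum of i.i.d.\ $\Nmin$ copies by the five partition shapes with coefficients $1,4,6,12,24$ exactly as in Eq.~\eqref{eq:formula4}. The only difference is computational: you evaluate the distinct-index sums by inclusion--exclusion from unrestricted geometric series after passing to the stationary series in $L^4$, whereas the paper computes the same limits by nested finite geometric summations term by term (e.g.\ your $3{+}1$ sum $\frac{1}{(1-a)(1-a^3)}-\frac{1}{1-a^4}=\frac{a(1+a+2a^2)}{(1-a^3)(1-a^4)}$ indeed matches Eq.~\eqref{eq:k31}), so the two yield identical coefficients.
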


\begin{proof} 
\begin{equation} \label{eq:proofvar}
\Var \left( \ln\left( \frac{\sigma_{t+1}}{\sigma_t} \right) \right) = \Var \left( \frac{c}{2d_\sigma}\left( \frac{\|\pplus \|^2}{n} - 1\right)\right) = \frac{c^2}{4d_\sigma^2 n^2}\!\!\!\! \underbrace{\Var\left(\|\pplus\|^2 \right)}_{\E\left(\|\pplus\|^4\right) - \E\left(\|\pplus\|^2 \right)^2 }
\end{equation}
The first part of $\Var(\|\pplus\|^2)$, $\E(\|\pplus\|^4)$, is equal to $\E((\sum_{i=1}^n \pdim^2 )^2 )$. We develop it along the dimensions such that we can use the independence of $[\pplus]_i$ with $[\pplus]_j$ for $i\neq j$, to get $\E( 2 \sum_{i=1}^n \sum_{j=i+1}^n \pdim[i]^2\pdim[j]^2  + \sum_{i= 1}^n \pdim[i]^4 )$. For $i \neq 1$ $\pdim[i]$ is distributed according to a standard normal distribution, so $\E\left(\pdim[i]^2\right) = 1$ and $\E\left(\pdim[i]^4\right) = 3$.
\begin{align*}
\E\left(\|\pplus\|^4\right) &= 2 \sum_{i=1}^n \sum_{j=i+1}^n \E\left(\pdim[i]^2\right) \E\left(\pdim[j]^2\right)  + \sum_{i=1}^n \E\left(\pdim[i]^4\right) \\
&= \left(2 \sum_{i=2}^n \sum_{j=i+1}^n 1\right) + 2\sum_{j=2}^n \E\left(\pdim[1]^2\right) + \left(\sum_{i=2}^n 3\right) + \E\left(\pdim[1]^4 \right)\\
&= \left( 2 \sum_{i=2}^n (n-i) \right) + 2(n-1) \E \left(\pdim[1]^2\right) + 3(n-1) + \E\left(\pdim[1]^4 \right)\\
&= \E\left(\pdim[1]^4 \right) + 2(n-1)\E \left(\pdim[1]^2\right) + (n-1)(n+1)
%
\end{align*}
The other part left is $\E(\|\pplus\|^2 )^2$, which we develop along the dimensions to get $\E ( \sum_{i=1}^n \pdim^2  )^2 = ( \E(\pdim[1]^2) + (n-1) )^2$, which equals to $\E(\pdim[1]^2)^2 + 2(n-1)\E(\pdim[1]^2) + (n-1)^2$.
So by subtracting both parts we get \\$\E(\|\pplus\|^4) - \E(\|\pplus\|^2 )^2 = \E(\pdim[1]^4 )  - \E(\pdim[1]^2)^2 + 2(n-1)$, which we insert into Eq.~\eqref{eq:proofvar} to get Eq.~\eqref{eq:var}.

The development of $\E(\pdim[1]^2)$ is the same than the one done in the proof of Theorem~\ref{th:geometricdivergencecumul}, that is $\E(\pdim[1]^2) = (2-2c)/c\E(\Nlambda)^2 + \E(\Nlambda^2)$. We now develop $\E(\pdim[1]^4)$. We have $\E(\pdim[1]^4) = \E(((1-c)^t [\p[0]]_1 + \sqrt{c(2-c)}\sum_{i=0}^t (1-c)^i \fchosenstep[t-i])^4)$. We neglect in the limit when $t$ goes to $\infty$ the part with $(1-c)^t [\p[0]]_1$, as it converges fast to $0$. So 
\begin{align} \label{eq:devp}
\lim_{t \rightarrow \infty} \E\left(\pdim[1]^4\right) &= \lim_{t \rightarrow \infty} \E\left(c^2(2-c)^2\left(\sum_{i=0}^t(1-c)^i \fchosenstep[t-i]\right)^4 \right) \enspace .
\end{align}
To develop the RHS of Eq.\eqref{eq:devp} we use the following formula: for $(a_i)_{i \in [[1,m]]}$
\begin{align} \label{eq:formula4}
\left(\sum_{i=1}^m a_i\right)^4 = &\sum_{i=1}^m a_i^4 + 4 \sum_{i=1}^m\sum_{\underset{j \neq i}{j=1}}^m a_i^3 a_j + 6 \sum_{i=1}^m \sum_{j=i+1}^m a_i^2 a_j^2 \nonumber\\
 &+ 12 \sum_{i=1}^m \sum_{\underset{j \neq i}{j=1}}^m \sum_{\underset{k \neq i}{k=j+1}}^m a_i^2 a_j a_k + 24 \sum_{i=1}^m \sum_{j=i+1}^m \sum_{k=j+1}^m \sum_{l=k+1}^m a_i a_j a_k a_l \enspace .
\end{align}
This formula will allow us to use the independence over time of $\fchosenstep$ from Lemma~\ref{lem:selectedstep}, so that $\E(\fchosenstep[i]^3 \fchosenstep[j]) = \E(\fchosenstep[i]^3)\E(\fchosenstep[j]) = \E(\Nlambda^3)\E(\Nlambda)$ for $i \neq j$, and so on. We apply Eq~\eqref{eq:formula4} on Eq~\eqref{eq:devp4}, with $a = 1-c$. 
\begin{align} \label{eq:bigdevp4}
\lim_{t \rightarrow \infty} \frac{\E\left(\pdim[1]^4\right)}{c^2(2-c)^2} = &\lim_{t \rightarrow \infty} \sum_{i=0}^t a^{4i} \E\left( \Nlambda^4 \right) + 4 \sum_{i=0}^t\sum_{\underset{j \neq i}{j=0}}^t a^{3i + j} \E\left( \Nlambda^3 \right) \E\left( \Nlambda \right) \nonumber\\ 
&+ 6 \sum_{i=0}^t \sum_{j=i+1}^t a^{2i + 2j} \E\left( \Nlambda^2 \right)^2 \nonumber\\
 &+ 12 \sum_{i=0}^t \sum_{\underset{j \neq i}{j=0}}^t \sum_{\underset{k \neq i}{k=j+1}}^t a^{2i +j + k}\E\left( \Nlambda^2 \right)\E\left( \Nlambda \right)^2 \nonumber \\
 &+ 24 \sum_{i=0}^t \sum_{j=i+1}^t \sum_{k=j+1}^t \sum_{l=k+1}^t a^{i+j+k+l} \E\left( \Nlambda \right)^4 
\end{align}
We now have to develop each term of Eq.~\eqref{eq:bigdevp4}.
\begin{align} \label{eq:k4}
\sum_{i=0}^t a^{4i} &= \frac{1-a^{4(t+1)}}{1-a^4} \nonumber \\
\lim_{t \rightarrow \infty} \sum_{i=0}^t a^{4i} &= \frac{1}{1-a^4}
\end{align}
\begin{equation} \label{eq:k31demi}
\sum_{i=0}^t\sum_{\underset{j \neq i}{j=0}}^t a^{3i + j} = \sum_{i=0}^{t-1}\sum_{j=i+1}^t a^{3i + j} + \sum_{i=1}^t\sum_{j=0}^{i-1} a^{3i + j}
\end{equation}
\begin{align} \label{eq:k31part1}
\sum_{i=0}^{t-1}\sum_{j=i+1}^t a^{3i + j} &= \sum_{i=0}^{t-1} a^{4i+1} \frac{1-a^{t-i}}{1-a} \nonumber \\
\lim_{t \rightarrow \infty} \sum_{i=0}^{t-1}\sum_{j=i+1}^t a^{3i + j} &= \lim_{t \rightarrow \infty} \frac{a}{1-a}\sum_{i=0}^{t-1} a^{4i} \nonumber \\
 &= \frac{a}{(1-a)(1-a^4)}
\end{align}
\begin{align} \label{eq:k31part2}
\sum_{i=1}^t\sum_{j=0}^{i-1} a^{3i + j} &= \sum_{i=1}^t a^{3i} \frac{1-a^{i}}{1-a} \nonumber \\
 &= \frac{1}{1-a} \left( a^3\frac{1-a^{3t}}{1-a^3} - a^4\frac{1-a^{4t}}{1-a^4} \right) \nonumber \\
\lim_{t \rightarrow \infty} \sum_{i=1}^t\sum_{j=0}^{i-1} a^{3i + j} &= \frac{1}{1-a} \left( \frac{a^3}{1-a^3} - \frac{a^4}{1-a^4} \right) \nonumber \\
&= \frac{a^3(1-a^4) - a^4(1-a^3)}{(1-a)(1-a^3)(1-a^4)}  \nonumber \\
&= \frac{a^3 - a^4}{(1-a)(1-a^3)(1-a^4)} 
\end{align}
By combining Eq~\eqref{eq:k31part1} with Eq~\eqref{eq:k31part2} to Eq~\eqref{eq:k31demi} we get
\begin{align} \label{eq:k31}
\lim_{t \rightarrow \infty} \sum_{i=0}^t\sum_{\underset{j \neq i}{j=0}}^t a^{3i + j} &= \frac{a(1-a^3) + a^3 - a^4}{(1-a)(1-a^3)(1-a^4)} = \frac{a(1+a^2-2a^3)}{(1-a)(1-a^3)(1-a^4)} \nonumber \\
&= \frac{a(1-a)(1+a+2a^2))}{(1-a)(1-a^3)(1-a^4)} = \frac{a(1+a+2a^2))}{(1-a^3)(1-a^4)}
\end{align}
\begin{align} \label{eq:k22}
\sum_{i=0}^{t-1} \sum_{j=i+1}^t a^{2i + 2j} &= \sum_{i=0}^{t-1} a^{4i+2} \frac{1-a^{2(t-i)}}{1-a^2} \nonumber \\
\lim_{t \rightarrow \infty} \sum_{i=0}^{t-1} \sum_{j=i+1}^t a^{2i + 2j} &= \frac{a^2}{1-a^2}\sum_{i=0}^{t-1} a^{4i} \nonumber \\
&= \frac{a^2}{(1-a^2)(1-a^4)}
\end{align}
\begin{align} \label{eq:k211tiers}
\sum_{i=0}^t \sum_{\underset{j \neq i}{j=0}}^{t-1} \sum_{\underset{k \neq i}{k=j+1}}^t a^{2i +j + k} = &\sum_{i=2}^t \sum_{j=0}^{i-2} \sum_{k=j+1}^{i-1} a^{2i +j + k} + \sum_{i=1}^{t-1} \sum_{j=0}^{i-1} \sum_{k=i+1}^t a^{2i +j + k} \nonumber\\
&+ \sum_{i=0}^{t-2} \sum_{j=i+1}^{t-1} \sum_{k=j+1}^t a^{2i +j + k}
\end{align}
\begin{align} \label{eq:k211part1}
\sum_{i=2}^t \sum_{j=0}^{i-2} \sum_{k=j+1}^{i-1} a^{2i +j + k} &= \sum_{i=2}^t \sum_{j=0}^{i-2} a^{2i+2j+1} \frac{1-a^{i-j-1}}{1-a} \nonumber \\
&= \frac{1}{1-a}\sum_{i=2}^t a^{2i+1}\frac{1-a^{2(i-1)}}{1-a^2} - a^{3i}\frac{1-a^{i-1}}{1-a} \nonumber \\
&= \frac{1}{1-a} \bigg( \frac{a^5}{1-a^2} \frac{1-a^{2(t-1)}}{1-a^2} - \frac{a^7}{(1-a^2)} \frac{1-a^{4(t-1)}}{1-a^4} \nonumber \\
 &~~~~~~- \frac{a^6}{1-a}\frac{1-a^{3(t+1)}}{1-a^3} + \frac{a^7}{1-a}\frac{1-a^{4(t+1)}}{1-a^4} \bigg) \nonumber \\
&\underset{t \rightarrow \infty}{\longrightarrow} \frac{a^5}{1-a} \bigg( \frac{1}{(1-a^2)^2} -\frac{a^2}{(1-a^2)(1-a^4)} \nonumber \\
&~~~~~~~~- \frac{a}{(1-a)(1-a^3)} + \frac{a^2(1+a)}{(1+a)(1-a)(1-a^4)}\bigg) \nonumber \\
&\underset{t \rightarrow \infty}{\longrightarrow} \frac{a^5}{1-a} \bigg( \frac{(1+a^2)}{(1-a^2)^2(1+a^2)} + \frac{a^3}{(1-a^2)(1-a^4)} \nonumber \\
&~~~~~~~~- \frac{a}{(1-a)(1-a^3)} \bigg) \nonumber \\
&\underset{t \rightarrow \infty}{\longrightarrow} \frac{a^5}{1-a} \bigg(\frac{1+a^2+a^3}{(1+a)(1-a)(1-a^4)} - \frac{a}{(1-a)(1-a^3)} \bigg) \nonumber \\
&\underset{t \rightarrow \infty}{\longrightarrow} \frac{a^5}{(1-a)^2}\frac{(1+a^2+a^3)(1-a^3) - a(1+a)(1-a^4)))}{(1+a)(1-a^3)(1-a^4)} \nonumber \\
&\underset{t \rightarrow \infty}{\longrightarrow} a^5\frac{1+a^2-a^5-a^6 -(a+a^2-a^5-a^6)}{(1-a)(1-a^2)(1-a^3)(1-a^4)} \nonumber \\
&\underset{t \rightarrow \infty}{\longrightarrow} \frac{a^5}{(1-a^2)(1-a^3)(1-a^4)}
\end{align}
\begin{align} \label{eq:k211part2}
\sum_{i=1}^{t-1} \sum_{j=0}^{i-1} \sum_{k=i+1}^t a^{2i +j + k} &= \sum_{i=1}^{t-1} \sum_{j=0}^{i-1} a^{3i +j + 1}\frac{1-a^{t-i}}{1-a} \nonumber \\
& \underset{t \rightarrow \infty}{\longrightarrow} \lim_{t \rightarrow \infty} \frac{a}{1-a}\sum_{i=1}^{t-1} a^{3i} \frac{1-a^{i}}{1-a} \nonumber \\
& \underset{t \rightarrow \infty}{\longrightarrow} \lim_{t \rightarrow \infty} \frac{a}{(1-a)^2} \left( a^3\frac{1-a^{3t}}{1-a^3} - a^4\frac{1-a^{4t}}{1-a^4} \right) \nonumber \\
& \underset{t \rightarrow \infty}{\longrightarrow} \frac{a}{(1-a)^2} \left( \frac{a^3(1-a^4) - a^4(1-a^3)}{(1-a^3)(1-a^4)} \right) \nonumber \\
& \underset{t \rightarrow \infty}{\longrightarrow} \frac{a^4 - a^5}{(1-a)^2(1-a^3)(1-a^4)} =  \frac{a^4}{(1-a)(1-a^3)(1-a^4)}
\end{align}
\begin{align} \label{eq:k211part3}
\sum_{i=0}^{t-2} \sum_{j=i+1}^{t-1} \sum_{k=j+1}^t a^{2i +j + k} &= \sum_{i=0}^{t-2} \sum_{j=i+1}^{t-1} a^{2i+2j+1}\frac{1-a^{t-j}}{1-a} \nonumber \\
& \underset{t \rightarrow \infty}{\longrightarrow} \lim_{t \rightarrow \infty} \frac{a}{1-a} \sum_{i=0}^{t-2} a^{4i+2} \frac{1-a^{2(t-i-1)}}{1-a^2} \nonumber \\
& \underset{t \rightarrow \infty}{\longrightarrow} \lim_{t \rightarrow \infty} \frac{a^3}{(1-a)(1-a^2)} \frac{1-a^{4(t-1)}}{1-a^4} \nonumber \\
 &\underset{t \rightarrow \infty}{\longrightarrow} \frac{a^3}{(1-a)(1-a^2)(1-a^4)}
\end{align}
We now combine Eq~\eqref{eq:k211part1}, Eq.~\eqref{eq:k211part2} and Eq.~\eqref{eq:k211part1} in Eq.~\eqref{eq:k211tiers}.
\begin{align} \label{eq:k211}
\sum_{i=0}^t \sum_{\underset{j \neq i}{j=0}}^{t-1} \sum_{\underset{k \neq i}{k=j+1}}^t a^{2i +j + k} &\underset{t \rightarrow \infty}{\longrightarrow} \frac{a^5(1-a) + a^4(1-a^2) + a^3(1-a^3)}{(1-a)(1-a^2)(1-a^3)(1-a^4)} \nonumber \\
 &\underset{t \rightarrow \infty}{\longrightarrow} \frac{a^3 + a^4 + a^5 - 3a^6}{(1-a)(1-a^2)(1-a^3)(1-a^4)} \nonumber \\
 &\underset{t \rightarrow \infty}{\longrightarrow} \frac{a^3(1+ 2a + 3a^2)}{((1-a^2)(1-a^3)(1-a^4)} \nonumber \\
\end{align}
\begin{align} \label{eq:k1111}
\sum_{i=0}^{t-3} \sum_{j=i+1}^{t-2} \sum_{k=j+1}^{t-1} \sum_{l=k+1}^t a^{i+j+k+l} &= \sum_{i=0}^{t-3} \sum_{j=i+1}^{t-2} \sum_{k=j+1}^{t-1} a^{i+j+2k+1} \frac{1-a^{t-k}}{1-a} \nonumber \\
& \underset{t \rightarrow \infty}{\longrightarrow} \lim_{t \rightarrow \infty} \frac{a}{1-a}  \sum_{i=0}^{t-3} \sum_{j=i+1}^{t-2} a^{i+3j+2} \frac{1-a^{2(t-1-j)}}{1-a^2} \nonumber \\
& \underset{t \rightarrow \infty}{\longrightarrow} \lim_{t \rightarrow \infty} \frac{a^3}{(1-a)(1-a^2)}  \sum_{i=0}^{t-3} a^{4i+3} \frac{1-a^{3(t-2-i)}}{1-a^3} \nonumber \\
& \underset{t \rightarrow \infty}{\longrightarrow} \lim_{t \rightarrow \infty} \frac{a^6}{(1-a)(1-a^2)(1-a^3)}  \frac{1-a^{4(t-2)}}{1-a^4} \nonumber \\
& \underset{t \rightarrow \infty}{\longrightarrow} \frac{a^6}{(1-a)(1-a^2)(1-a^3)(1-a^4)}
\end{align}
By factorising Eq.~\eqref{eq:k4}, Eq.~\eqref{eq:k31}, Eq.~\eqref{eq:k22}, Eq.~\eqref{eq:k211} and Eq.~\eqref{eq:k1111} by $\frac{1}{1-a^4}$ we get the coefficients of Theorem~\ref{th:var}.
\qed\end{proof}

\begin{figure}[t,b]
\centering\vspace*{-1ex}
\subfloat[Without cumulation ($c=1$)]{\label{fig:simwithoutcumulation}\includegraphics[width=0.5\textwidth,trim=0 0 0 12mm,clip]{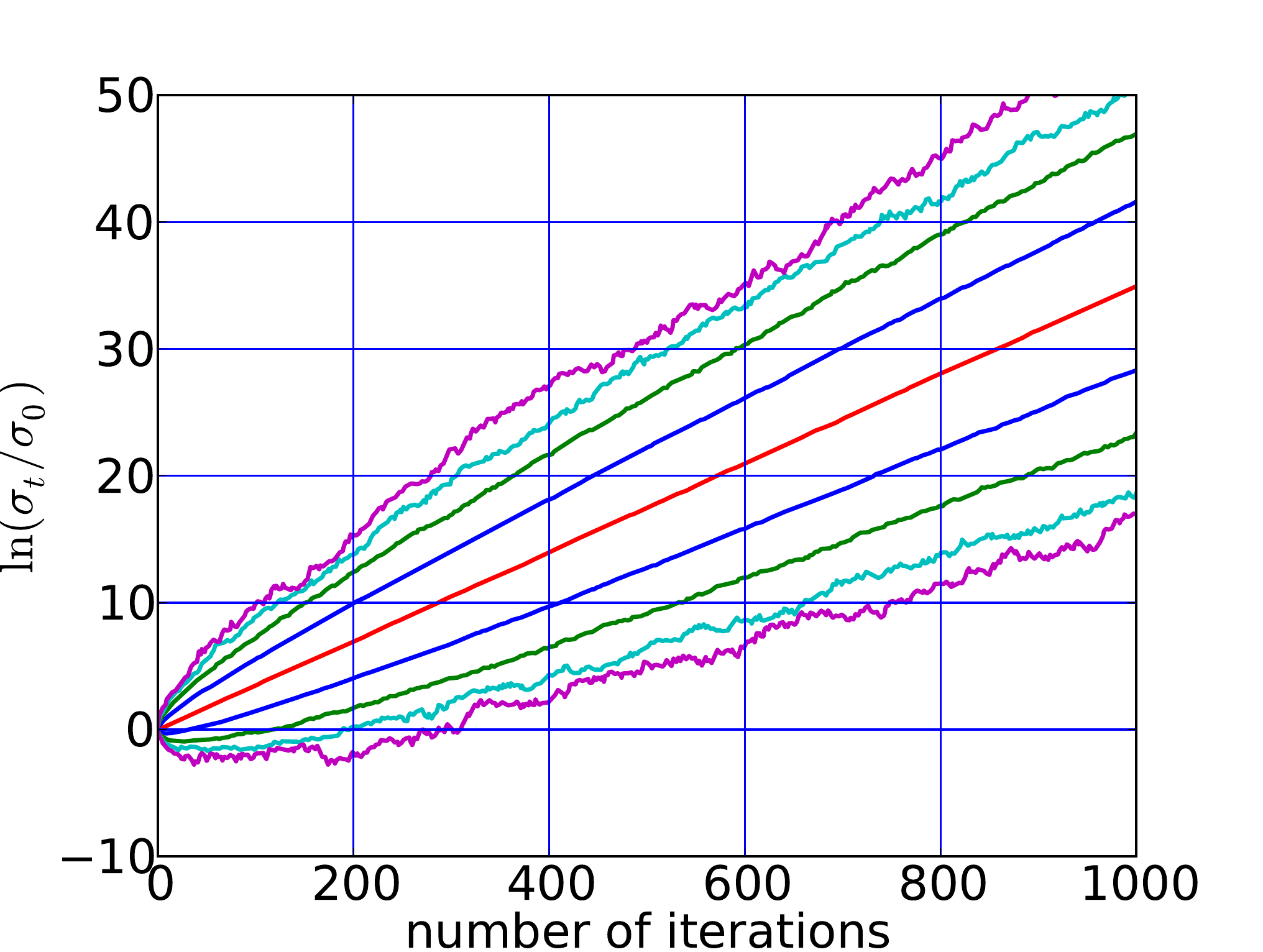}}
\subfloat[With cumulation ($c = 1/\sqrt{20}$)]{\label{fig:simwithcumulation}\includegraphics[width=0.5\textwidth,trim=0 0 0 12mm,clip]{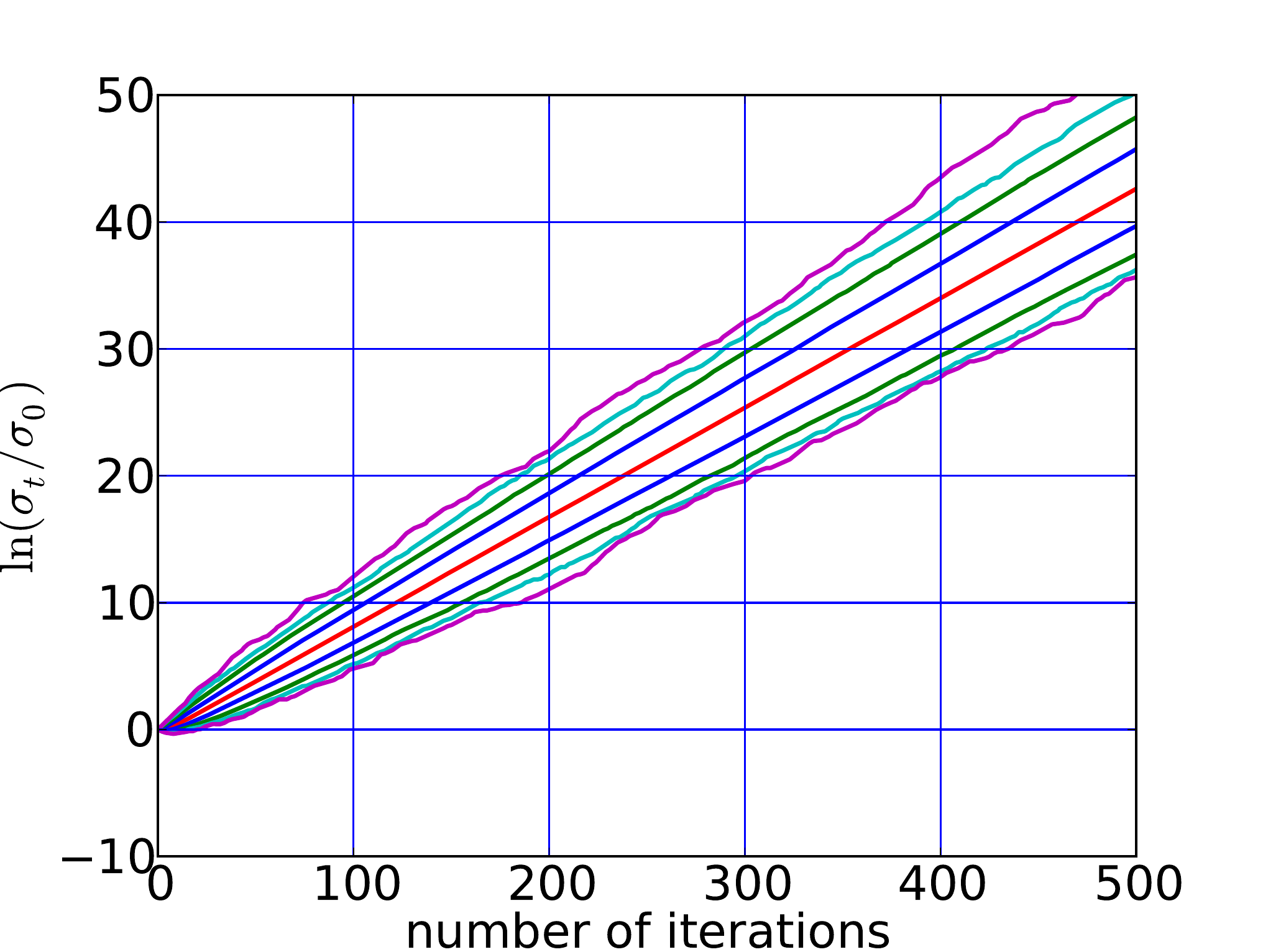}}
\caption{\label{fig:logsigmaevolution}
$\ln(\sig / \sigma_0)$ against $t$. The different curves represent the quantiles of a set of $5.10^3+1$ samples, more precisely the $10^{i}$-quantile and the $1 - 10^{-i}$-quantile for $i$ from $1$ to $4$; and the median. We have $n=20$ and $\lambda = 8$. }
\end{figure}

Figure~\ref{fig:logsigmaevolution} shows the time evolution of $\ln(\sigma_t / \sigma_0)$ for 5001 runs and $c=1$ (left) and $c=1/\sqrt{n}$ (right).
By comparing Figure~\ref{fig:simwithoutcumulation} and Figure~\ref{fig:simwithcumulation} we observe smaller variations of $\ln(\sigma_t / \sigma_0)$ with the smaller value of $c$.

\begin{figure}
\includegraphics[width=0.49\textwidth]{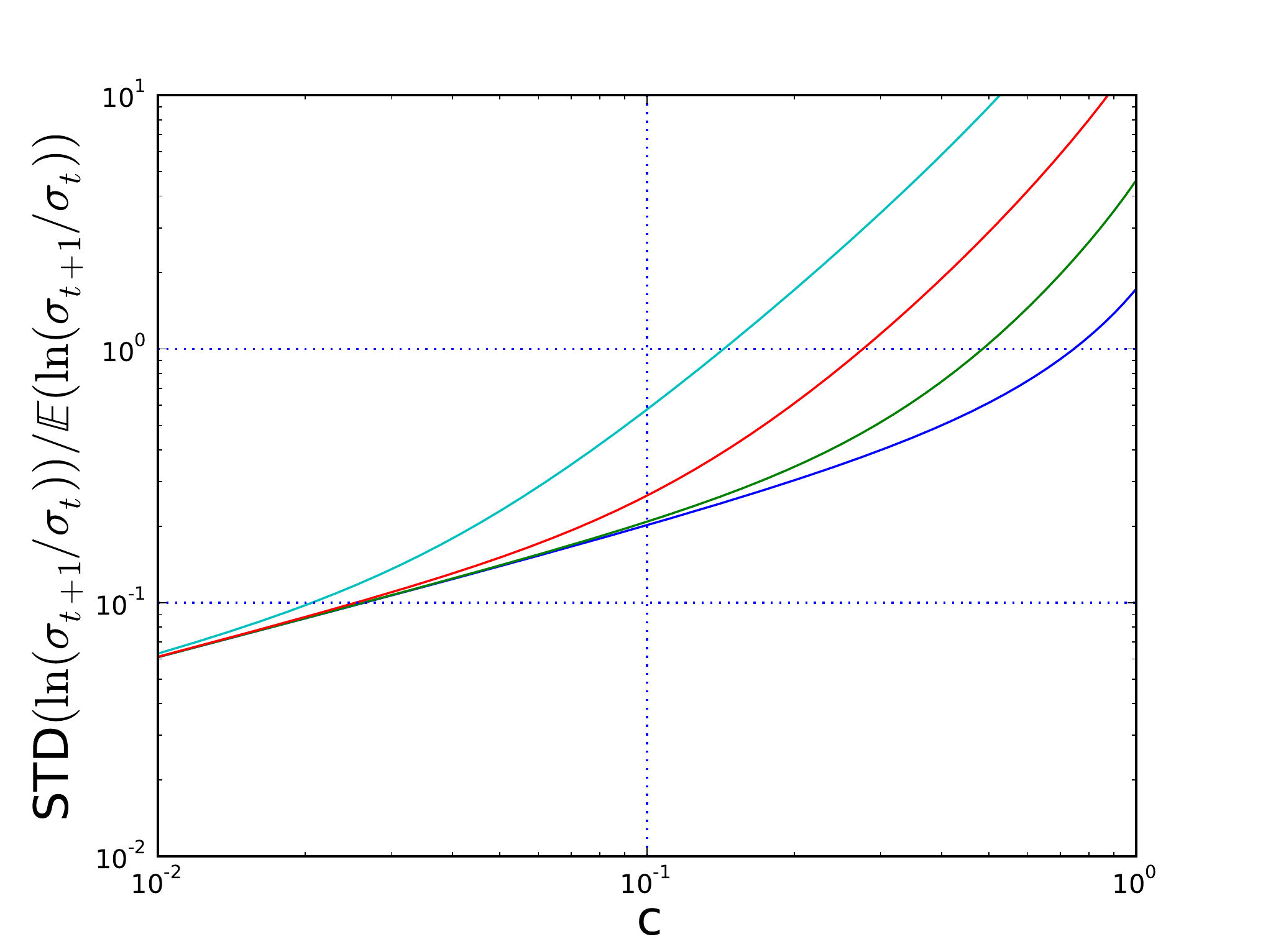}
\includegraphics[width=0.49\textwidth]{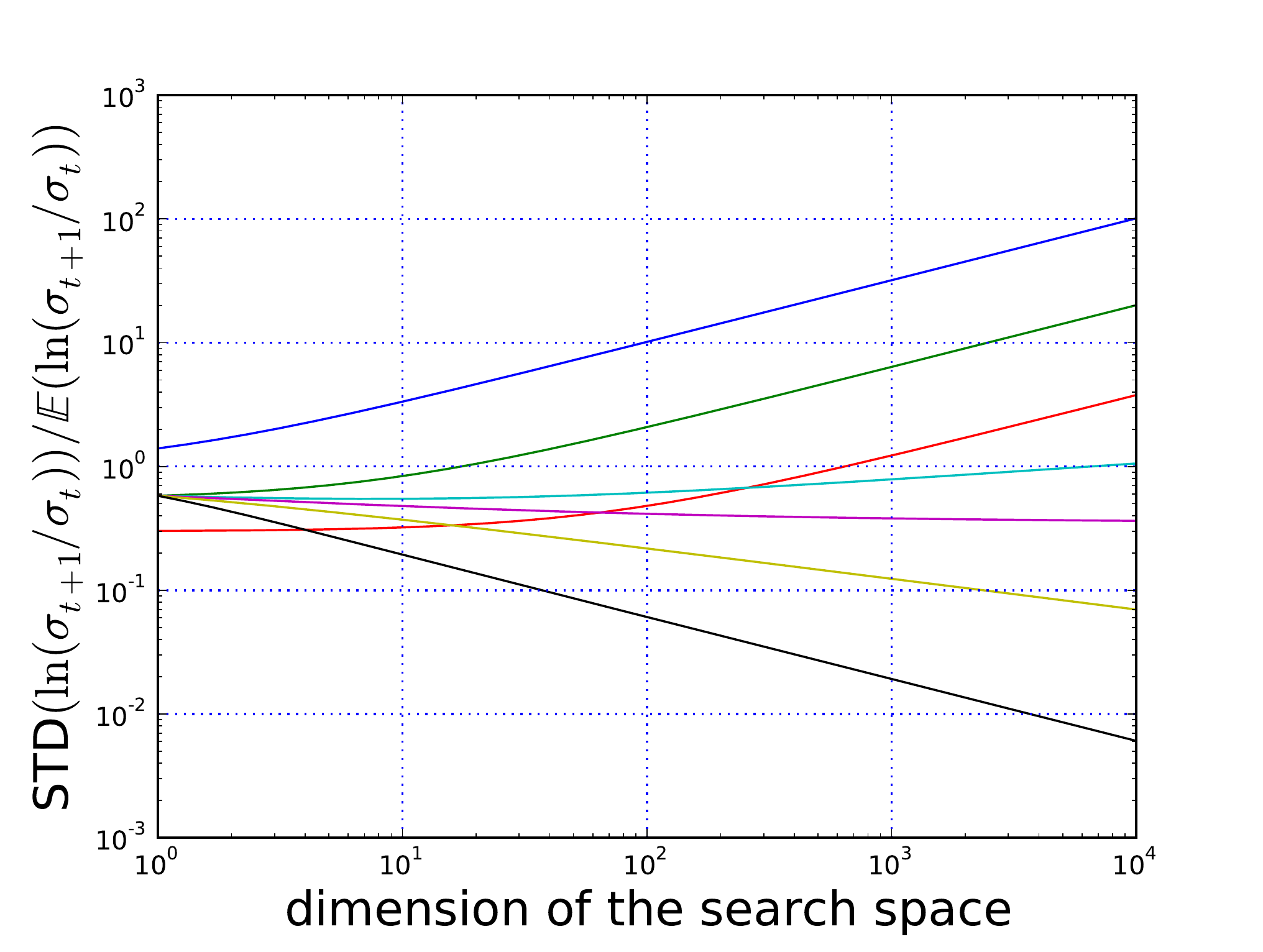}
\caption{\label{fig:stdandc}
Standard deviation of $\ln\left(\sigma_{t+1}/\sig\right)$ relatively to its expectation. Here $\lambda = 8$. The curves were plotted using Eq.~\eqref{eq:var} and Eq.~\eqref{eq:devp4}. On the left, curves for (right to left) $n=2$, $20$, $200$ and $2000$. On the right, different curves for (top to bottom) $c = 1$, $0.5$, $0.2$, $1/(1+n^{1/4})$, $1/(1+n^{1/3})$, $1/(1+n^{1/2})$ and $1/(1+n)$.}
\end{figure}

Figure~\ref{fig:stdandc} shows the relative standard deviation of $\ln\left(\sigma_{t+1}/ \sigma_t \right)$ (i.e. the standard deviation divided by its expected value). 
Lowering $c$, as shown in the left, decreases the relative standard deviation. To get a value below one, $c$ must be smaller for larger dimension. In agreement with Theorem \ref{th:var}, In Figure~\ref{fig:stdandc}, right, the relative standard deviation increases like $\sqrt{n}$ with the dimension for constant $c$ (three increasing curves). A careful study \cite{chotard2012TRcumulative} of the variance equation of Theorem \ref{th:var} shows that for the choice of $c = 1/(1+n^{\alpha})$, if $\alpha > 1/3$ the relative standard deviation converges to $0$ with $\sqrt{(n^{2\alpha} + n)/n^{3 \alpha}}$. Taking $\alpha = 1/3$ is  a critical value where the relative standard deviation converges to $1/(\sqrt{2} \E(\Nlambda)^2)$. On the other hand, lower values of $\alpha$ makes the relative standard deviation diverge with $n^{(1-3\alpha)/2}$.

\section{Summary} \label{sec:conclusion}

We investigate throughout this paper the ($1,\lambda$)-CSA-ES on affine linear functions composed with strictly increasing transformations. We find, in Theorem~\ref{th:geometricdivergencecumul}, the limit distribution for $\ln(\sig/\sigma_0)/t$ and rigorously prove the desired behaviour of $\sigma$ with $\lambda\ge3$~ for any $c$, and with $\lambda=2$ and cumulation ($0<c<1$): the step-size diverges geometrically fast. In contrast, without cumulation ($c=1$) and with $\lambda=2$, a random walk on $\ln (\sigma)$ occurs, like for the ($1,2$)-$\sigma$SA-ES \cite{hansen2006ecj} (and also for the same symmetry reason). We derive an expression for the variance of the step-size increment. On linear functions when $c = 1/n^{\alpha}$, for $\alpha \geq 0$ ($\alpha = 0$ meaning $c$ constant) and for $n \to \infty$ the standard deviation is about $\sqrt{(n^{2\alpha} + n)/n^{3 \alpha}}$ times larger than the step-size increment. From this follows that keeping $c < 1/n^{1/3}$ ensures that the standard deviation of $\ln(\sigma_{t+1}/ \sigma_t)$ becomes negligible compared to $\ln(\sigma_{t+1}/ \sigma_t)$ when the dimensions goes to infinity. That means, the signal to noise ratio goes to zero, giving the algorithm strong stability. The result confirms that even the largest default cumulation parameter $c=1/\sqrt{n}$ is a stable choice.

\section*{Acknowledgments}
This work was partially supported by the ANR-2010-COSI-002
grant (SIMINOLE) of the French National Research Agency and the ANR COSINUS project ANR-08-COSI-007-12.

\bibliography{biblio}

\bibliographystyle{plain}

\end{document}